\newtheorem{theorem}{Theorem}
\theoremstyle{remark}
\theoremstyle{proposition}
\newtheorem{remark}{Remark}
\theoremstyle{Assumption}
\theoremstyle{Lemma}
\newtheorem{lemma}{Lemma}
\newcommand{\rev}[1]{{\color{blue}#1}} 
\newcommand{\rev}[1]{#1}
\begin{document}

\title{\rev{Sense4FL: Vehicular Crowdsensing Enhanced Federated Learning for Object Detection in Autonomous Driving}}


\author{Yanan Ma,~\IEEEmembership{Student Member, IEEE}, Senkang Hu, Zhengru Fang,~\IEEEmembership{Student Member, IEEE},  Yun Ji,\\
Yiqin Deng,~\IEEEmembership{Member,~IEEE}, and Yuguang Fang,~\IEEEmembership{Fellow,~IEEE}
\thanks{The research work described in this paper was conducted in the JC STEM Lab of Smart City funded by The Hong Kong Jockey Club Charities Trust under Contract 2023-0108. The work described in this paper was also partially supported by a grant from the Research Grants Council of the Hong Kong Special Administrative Region, China (Project No. CityU 11216324). The work of Y. Fang was also supported in part by the Hong Kong SAR Government under the Global STEM Professorship. 
The work of Y. Deng was also supported in part by the National Natural Science Foundation of China under Grant No. 62301300, in part by the Shandong Province Science Foundation under Grant No. ZR2023QF053.}
\thanks{Y. Ma, S. Hu, Z. Fang, Y. Deng, and Y. Fang are with Hong Kong JC Lab of Smart City and the Department of Computer Science, City University of Hong Kong, Hong Kong, China. (e-mail: yananma8-c@my.cityu.edu.hk, senkang.forest@my.cityu.edu.hk, zhefang4-c@my.cityu.edu.hk, yiqideng@cityu.edu.hk, my.fang@cityu.edu.hk.)}

\thanks{Y. Ji is with the Key Laboratory
of Advanced Sensor and Integrated System, Tsinghua Shenzhen International Graduate School, Tsinghua University, Shenzhen 518055, China. (email: jiyunthu@gmail.com).}
}

\maketitle

\begin{abstract}
To accommodate constantly changing road conditions, real-time vision model training is essential for autonomous driving (AD). Federated learning (FL) serves as a promising paradigm to enable autonomous vehicles to train models collaboratively with their onboard computing resources. However, existing vehicle selection schemes for FL all assume predetermined and location-independent vehicles' datasets, neglecting the fact that vehicles collect training data along their routes, thereby resulting in suboptimal vehicle selection. 
In this paper, we focus on the fundamental perception problem and propose Sense4FL, a vehicular crowdsensing-enhanced FL framework featuring \textit{trajectory-dependent} vehicular \textit{training data collection} to \rev{improve the object detection quality} in AD for a region. To this end, we first derive the convergence bound of FL by considering the impact of both vehicles' uncertain trajectories and uploading probabilities, from which we discover that minimizing the training loss is equivalent to minimizing a weighted sum of local and global earth mover's distance (EMD) between vehicles' collected data distribution and global data distribution. Based on this observation, we formulate the trajectory-dependent vehicle selection and data collection problem for FL in AD. Given that the problem is NP-hard, we develop an efficient algorithm to find the solution with an approximation guarantee. Extensive simulation results have demonstrated the effectiveness of our approach in improving object detection performance compared with existing benchmarks.
\end{abstract}

\begin{IEEEkeywords}
Federated learning, autonomous driving, crowdsensing, vehicle selection.
\end{IEEEkeywords}

\section{Introduction}
\IEEEPARstart{A}{utonomous} driving (AD) enhances road safety, reduces traffic congestion, and provides environmental benefits, which has gained substantial attention lately~\cite{xu2017internet,chen2024vehicle, ma2025raise}. With joint efforts from the academia and auto industry, significant strides have been made in AD. For instance, multiple manufacturers, including Honda and Mercedes-Benz, start selling level 3 cars; Waymo offers rides in self-driving taxis to the public in Arizona (Phoenix) and California (San Francisco and Los Angeles) as of 2024~\cite{Honda2021,Mercedes2021}. 
Despite significant progress in AD, its commercial use is still hampered by real-world deployment challenges and accidents. For instance, several Cruise vehicles were entangled in Muni wires and caution tapes since they failed to detect these objects in extreme weather~\cite{storm}. A Cruise robotaxi struck a pedestrian and dragged her 20 feet as it failed to classify and track the pedestrian~\cite{Cruise}. One primary reason for these accidents is that vision models, such as object classification and detection models, are known to lack generalization capabilities under changing environmental conditions and domain shifts, such as diverse street scenes and extreme weather conditions~\cite{fursa2021worsening}. To enhance the safety of AD, it is essential to adapt a vision model for a specific region and improve it over time. 
To improve models on the fly, federated learning (FL)~\cite{pmlr-v54-mcmahan17a} serves as a promising paradigm,
where vehicles update and upload their local models to a server for aggregation. Compared with centralized learning, FL has the following salient advantages in the context of AD. First of all, FL may significantly reduce data upload volume. {The raw data rate of a 1080p video stream from a typical vehicle camera can amount to 1493 Mbps\footnote{We calculate the data rate by considering a color depth of 24 bits and a frame rate of 30 fps.}~\cite{vehiclerate}, and each vehicle may be equipped with six or more cameras and other sensors, such as LiDAR}. In comparison, the state-of-the-art object detection model Yolov8m has 25.9 million parameters~\cite{Yolo2024}, i.e., around 52 MB in 16-bit, which is considerably smaller than the sensory data size generated over a period of interest. Second, by leveraging the onboard computing capabilities of a large number of vehicles in parallel, FL is more scalable than centralized learning as it eliminates the need for a powerful central server. Finally, FL safeguards the location privacy and driving behaviors of drivers by preventing application servers from directly accessing their precise locations\footnote{While the FL server can infer that the vehicle is in this region, the precise location can be protected.} and driving states.

Given the advantages of FL for AD, many research efforts have been made to design FL schemes for vehicles~\cite{9928621, 10205502}. 
\rev{However, all existing works consider location-independent vehicles' datasets, akin to conventional FL settings where clients have predetermined local datasets~\cite{nishio2019client,chen2022federated}. Nevertheless, considering a FL scenario where vehicles collectively \textit{collect street views and train (adapt) a vision model}, e.g., object detection/classification model, for a region of interest, like a city}\footnote{\rev{Our proposed framework can be extended to other critical tasks, such as semantic segmentation and control command (acceleration/deceleration, go/stop), which can be investigated in future work.}}, \rev{vehicles collect sensory data, such as street view information, along their routes. Vehicle selection schemes without explicitly considering their trajectory-dependent data distributions result in inferior performance in FL.} \rev{To illustrate the effect of the trajectory-aware vehicle selection, we consider a simple case with four street blocks forming three trajectories, represented by $h_1 = \{1\}$, $h_2 = \{2,4,3\}$, $h_3 = \{2,4\}$. Some street blocks may have more cars whereas some may have more pedestrians, as shown in Fig.~\ref{Fig:intro1}. We evaluate FL performance under several vehicle selection strategies based on trajectories, assuming that each vehicle follows one of these three trajectories. We also assume sufficient communication-computing resources so that trajectories only affect the data distributions without impacting other aspects, e.g., the model uploading probability. As shown in Fig.~\ref{Fig:intro2}, appropriate trajectory-aware vehicle selection can outperform trajectory-agnostic random selection by $4.4\%$, because it can select vehicles with more representative data for this region.}

\begin{figure}[t]
\centering
\begin{subfigure}[b]{0.26\textwidth} 
  \includegraphics[width=\textwidth]{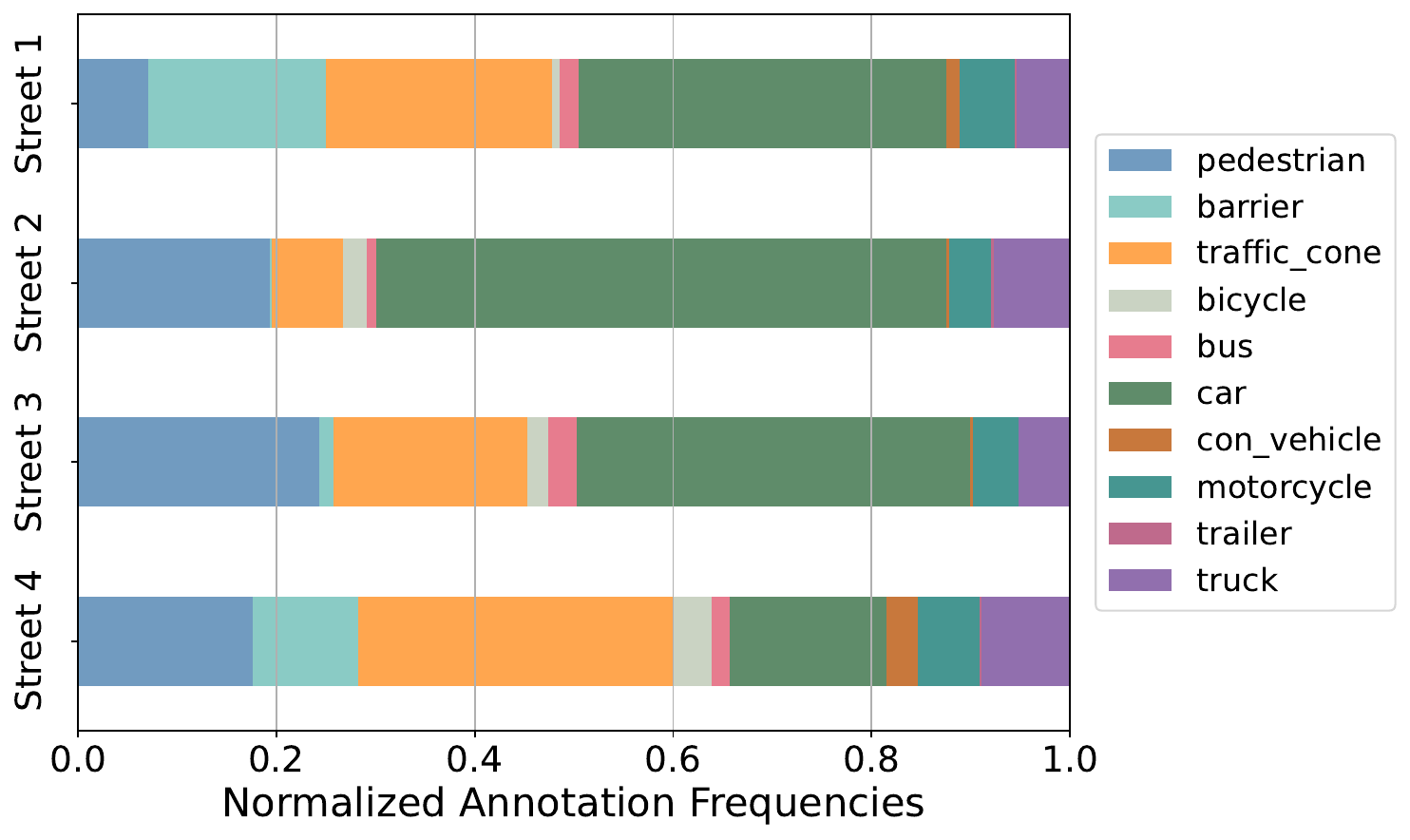}
  \caption{The normalized frequency of \\ objects for each street block.}\label{Fig:intro1}
\end{subfigure}
\begin{subfigure}[b]{0.217\textwidth} 
  \includegraphics[width=\textwidth]{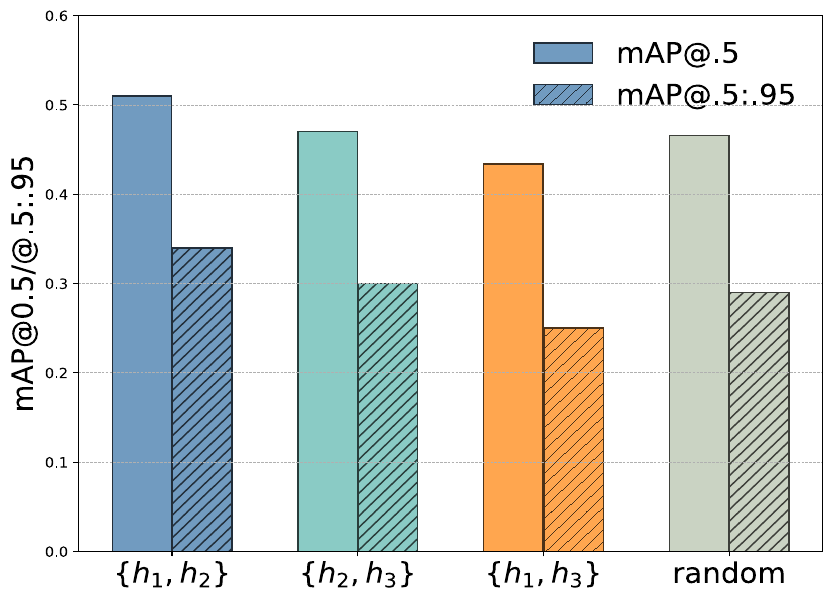}
  \caption{Final test accuracy for different selection strategies.} \label{Fig:intro2}
\end{subfigure}
\caption{\rev{The data distribution of four neighboring street blocks and the final test accuracy (after convergence) via different vehicle selection strategies in the nuImages dataset. $\{h_{x}, h_{y}\}$ means that we select one vehicle from trajectory $x$ and one vehicle from $y$ in each round; ``Random'' means we randomly choose two vehicles in each round.
}}\label{fig:intro}
\end{figure}


\rev{Based on the aforementioned observations, in this paper, we introduce a trajectory-dependent vehicular training data collection framework, i.e., \textit{vehicular crowdsensing enhanced FL (Sense4FL)}, to facilitate FL for AD. 
We first rigorously derive the convergence upper bound of Sense4FL by considering the trajectories of selected vehicles, characterizing the upper bound using earth mover’s distance (EMD) metrics in terms of vehicles' and global data distributions. To minimize the upper bound, our framework optimizes vehicle selection and data collection decisions by taking two factors into account: i) the distribution of collected datasets according to selected vehicles' routes, and ii) uploading probabilities resulting from vehicles' routes, computing capabilities, and communication capabilities. Note that prior works on vehicular FL neglect the first aspect, which may cause model performance degradation since the training datasets from participating vehicles may \textit{not} reflect the global data distribution in the region of interest.} 
The key contributions of this paper are summarized below.

\begin{itemize}

\item \rev{First, we present the Sense4FL framework for training an object detection model in AD, which determines vehicle selection and data collection in a region of interest by considering street data distribution and vehicular mobility.}

\item \rev{Second, we derive the convergence bound of FL by considering the impact of both vehicles’ uncertain trajectories and uploading probabilities, establishing the theoretical relationship between model accuracy loss and data collection. We discover that minimizing the training loss is equivalent to minimizing a weighted sum of client and global EMD between vehicles’ collected data distribution and regional data distribution.}

\item Then, we formulate the joint vehicle selection and training data collection problem to minimize FL training loss. It turns out that this problem is a non-linear multiple-choice knapsack problem (MCKP) with a non-convex non-separable objective function with non-continuous variables. Given that the optimization problem is NP-hard and highly challenging, we develop an efficient algorithm to obtain the solution with an approximation guarantee.

\item Finally, we demonstrate the effectiveness of our approach for the state-of-the-art object detection model, YOLO, with the nuImages dataset. The simulation results show that our algorithm significantly improves the performance of object detection compared to existing benchmarks and enables fast adaptation of models under changing environments.
\end{itemize}

The remainder of this paper is organized as follows. Section \ref{sec:work} introduces the related work. Section \ref{sec:framework} elaborates on the proposed Sense4FL framework. Section \ref{sec:convergence} provides the
convergence analysis. We formulate the optimization problem
in Section \ref{sec:problem} and offer the corresponding solution approach
in Section \ref{sec:algorithm}. Section \ref{sec:exp} provides the simulation results.
Finally, concluding remarks are presented in Section \ref{sec:conclusion}.





\section{Related Work}
\label{sec:work}
A substantial body of research has focused on optimizing federated learning at the network edge~\cite{wu2021fast, 10850596, asaad2024joint, zarandi2021federated}. Due to data and resource heterogeneity in FL~\cite{pmlr-v54-mcmahan17a, 9928621, nishio2019client}, client selection is crucial for learning performance. Numerous research efforts have been conducted in this area. Nishio \textit{et al.} in~\cite{nishio2019client} proposed a client selection scheme in wireless networks aimed at maximizing the number of uploaded models to enhance learning performance. In \cite{cho2022towards}, Cho \textit{et al.} performed a convergence analysis of FL over biased client selection, demonstrating that selecting clients with higher local loss achieves faster convergence. Chen \textit{et al.} optimized client selection and radio resource allocation by taking packet errors into account~\cite{chen2020joint}. Considering bandwidth limitations, Huang \textit{et al.} in~\cite{huang2020efficiency} optimized client selection with a fairness guarantee based on Lyapunov optimization. Xu \textit{et al.} in~\cite{9237168} formulated a stochastic optimization problem for joint client selection and bandwidth allocation under long-term client energy constraints. By considering heterogeneous client hardware constraints and data quality, Deng~\textit{et al.} developed an automated, quality-aware client selection framework for FL~\cite{9647925}. 
Zhu \textit{et al.} introduced an asynchronous FL framework with adaptive client selection to minimize training latency while considering client availability and long-term fairness~\cite{zhu2022online}. 
\rev{By identifying and excluding adverse local updates, Wu \textit{et al.} proposed an optimal aggregation algorithm and a probabilistic client selection framework to accelerate model convergence \cite{wu2022node}.}
\rev{However, these client selection schemes do not account for user mobility that may affect FL performance, which are not suited for vehicular environments.}

\begin{table}[!t]
\centering
\caption{\rev{Summary of related works in vehicular FL systems.}}
\label{table_comp}
\resizebox{\linewidth}{!}
{
\renewcommand{\arraystretch}{1.4}
\setlength{\tabcolsep}{2mm}
\rev{
\begin{tabular}{|c|c|c|c|c|}
\hline
{\textbf{Ref.}}& 
\makecell[c]{\textbf{Client} \\ \textbf{Selection}}& 
\makecell[c]{\textbf{Data} \\ \textbf{Collection}}& 
\makecell[c]{\textbf{Theoretical} \\ \textbf{Analysis}}& 
\makecell[c]{\textbf{AD} \\ \textbf{Dataset}} 
  \\ \hline
\cite{8964354} 
    & {\ding{52}}  & {\ding{55}}  & {\ding{55}} & {\ding{55}}    \\ \hline
\cite{pervej2023resource} 
    & {\ding{52}}  & {\ding{55}}  & {\ding{52}} & {\ding{55}}    \\ \hline 
\cite{zhao2022participant} 
    & {\ding{52}}  & {\ding{55}}  & {\ding{55}} & {\ding{55}}  \\ \hline
\cite{zhao2021system} 
    & {\ding{52}}  & {\ding{55}}  & {\ding{55}} & {\ding{55}}  \\ \hline
\cite{xiao2021vehicle} 
    & {\ding{52}}  & {\ding{55}}  & {\ding{55}} & {\ding{55}}   \\ \hline
\cite{zhang2023vehicle}
    & {\ding{52}}    & {\ding{55}}  & {\ding{52}} & {\ding{55}}  \\ \hline
\cite{10643168} 
    & {\ding{52}}  & {\ding{55}}   &  {\ding{52}} & {\ding{55}} \\ \hline
\cite{zheng2023autofed} 
    & {\ding{52}}   & {\ding{55}}  & {\ding{55}} & {\ding{52}} \\ \hline
Ours 
    & {\ding{52}}  & {\ding{52}}  & {\ding{52}} & {\ding{52}}  \\ \hline
\end{tabular}
}
}
\end{table}

{As a special case of mobile users, vehicles can act as FL clients to collectively train a model for vehicular applications. Along this line, Ye \textit{et al.} in~\cite{8964354} proposed a contract-theory-based vehicle selection approach that accounts for image quality and heterogeneous vehicle capabilities. 
In~\cite{pervej2023resource}, Pervej \textit{et al.} presented a resource allocation and client selection framework and analyzed its learning performance under full and partial vehicle participation. 
Zhao~\textit{et al.} proposed Newt, an improved vehicle selection mechanism with feedback control by considering data and resource heterogeneity in dynamic environments~\cite{zhao2022participant}. To ensure timely completion of FL iterations within latency constraints, Zhao \textit{et al.} in~\cite{zhao2021system} maximized vehicle participation by accounting for dynamic wireless channels and heterogeneous computing capacities. 
In~\cite{xiao2021vehicle}, Xiao \textit{et al.} proposed a min-max optimization framework that selects vehicles based on image quality while minimizing the overall system cost in FL. 
\rev{Zhang \textit{et al.} in~\cite{zhang2023vehicle} introduced a mobility- and channel dynamic-aware FL scheme, which enables road side unit (RSU) to select appropriate vehicles and weightedly average the local models to improve the FL performance in vehicular networks.}
\rev{Zhang \textit{et al.} in \cite{10643168} investigate the joint optimization of vehicle selection, training time, and model quantization of FL with gradient quantization in vehicle edge computing by considering the mobility and the uncertainty of channel conditions.}
Furthermore, Zheng~\textit{et al.}~\cite{zheng2023autofed} introduced AutoFed, \rev{a heterogeneity-aware FL framework that leverages multimodal sensory data to improve object detection performance in autonomous vehicles and incorporates a vehicle selection mechanism based on model similarities to enhance training stability.}}

However, the aforementioned works assume vehicle training data is location-independent, similar to traditional FL frameworks. In reality, since vehicles proactively collect data from their surroundings, their data distributions are dependent on their routes. In FL, the data distribution of clients plays an essential role in learning performance, as deviation from the desired distribution can introduce biases during model training, resulting in severe accuracy degradation~\cite{zhao2018federated,li2019convergence}. \rev{To fill this research gap, this work provides a rigorous convergence analysis and devises a unified framework for trajectory-aware vehicle selection and training data collection to enhance learning performance in vehicular FL.
To compare our work and related works, we provide a summary table in Table \ref{table_comp}.}

\section{The Sense4FL Framework}
\label{sec:framework}

In this section, we elaborate on the Sense4FL framework, including the system model, specifically the vehicular mobility and the training data collection model, the federated learning procedure, and the overview of Sense4FL.

\begin{figure}
    \centering
    \includegraphics[width=0.82\linewidth]{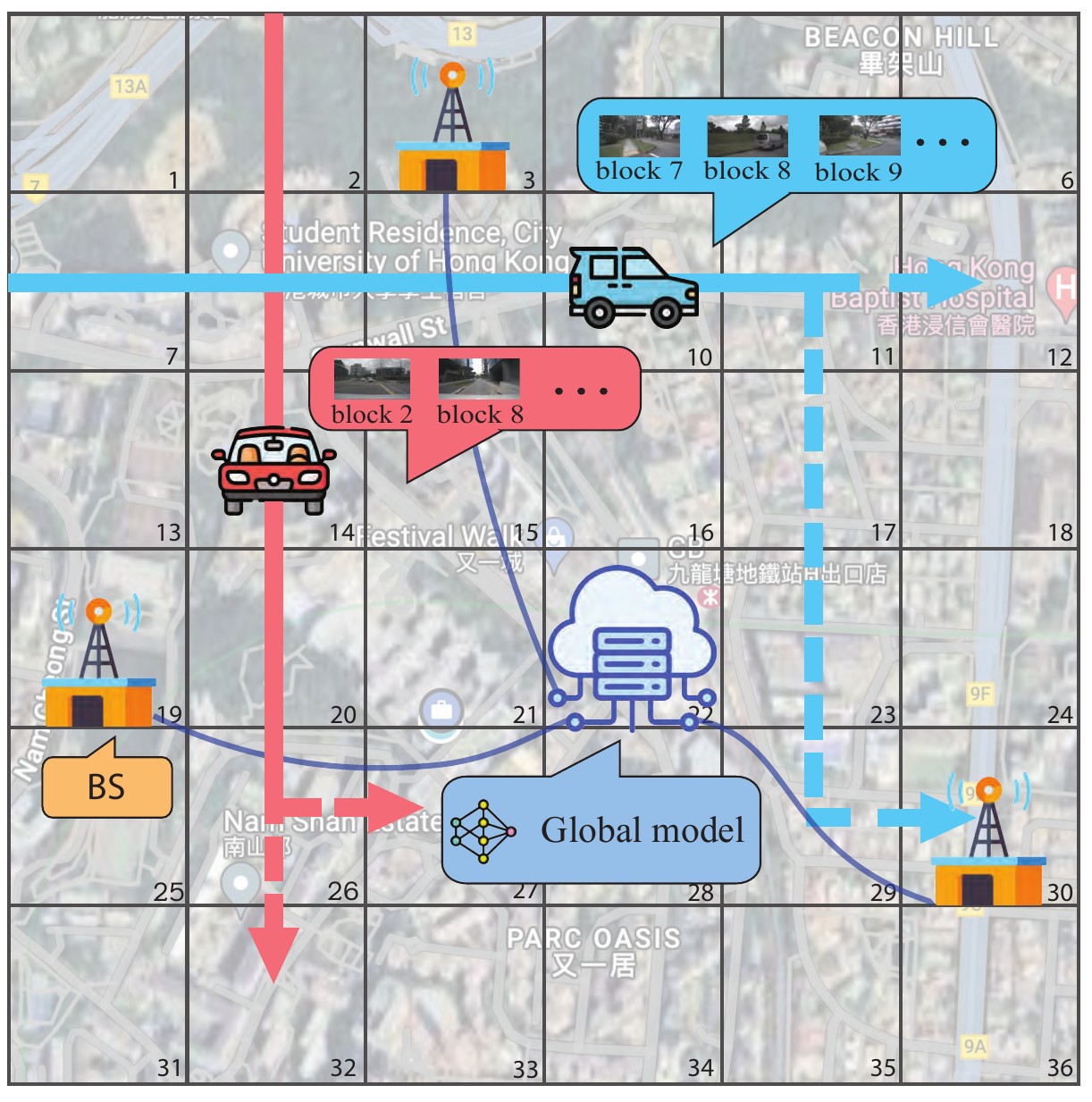}
    \caption{Illustration of Sense4FL framework. Each autonomous vehicle acts as a mobile data collector and an FL client. As a vehicle traverses through a region, it collects data about street blocks and then leverages the collected data to train local models for FL. The selection of vehicles participating in the training process and the timing of starting their training are carefully designed by considering the impact of vehicles' uncertain trajectories.}
    \label{fig:system}
    \vspace{-0.0cm}
\end{figure}

\subsection{System Model}

\rev{As illustrated in Fig. \ref{fig:system}, we consider a region of interest consisting of multiple street blocks, where an FL server is responsible for model aggregation. Each autonomous vehicle serves as both a mobile data collector and an FL client, which collects training data from road environments, i.e., images of road conditions, traffic signs, pedestrians, and vehicles, upon traversing through a street block. The FL procedure selects a subset of vehicles from this region, leveraging their onboard computing capabilities and locally collected datasets to train an object detection model to enhance AD performance. We assume vehicles can upload their models to the FL server via cellular networks, i.e., any associated base station (BS), which then forwards the models to the FL server for aggregation through wired links.}

\subsubsection{Vehicular mobility}
\rev{Let $\mathcal{B} = \{1, 2, \dots, B\}$ and $\mathcal{V} = \{1, 2, \dots, V\}$ denote the sets of street blocks and vehicles, respectively. 
The set of sojourn time can be represented by $\mathcal{T} = \{t_{1,1}, t_{1,2}, ..., t_{V,B}\}$, where $t_{v,b}$ denotes the sojourn time of vehicle $v$ in street block $b$\footnote{\rev{$t_{v,b}$ can be estimated by the FL server based on historical traffic information, which follows a probability distribution, e.g., a truncated Gaussian distribution as often adopted in~\cite{yousefi2008analytical},~\cite{abuelenin2014empirical}.}}.}
As shown in Fig. \ref{fig:system}, a vehicle's trajectory can be modeled as a sequence of street blocks. The trajectory of vehicle $v$ can be denoted by $h_v = \{s_{v}^{1}, s_{v}^{2}, ..., s_{v}^{N_v}\}$, where $s_{v}^{n}$ represents the $n$-th element in this route with $N_v$ street blocks in total. Each $s_{v}^{n}$ corresponds to a street block in the set $\mathcal{B}$. In practice, an FL server cannot directly know the exact itinerary of a vehicle but can only predict the vehicle's trajectories based on its location and driving direction. Consequently, a vehicle may have multiple potential routes. Specifically, the set of possible trajectories for vehicle $v$ can be expressed as $\mathcal{H}_v = \{h_{v,1}, h_{v,2}, \ldots, h_{v,M_v}\}$, where $h_{v,m} = \{s_{v,m}^1, s_{v,m}^2, \ldots, s_{v,m}^{N_{v,m}}\}$ represents the $ m $-th potential trajectory out of $M_v$ trajectories. The probability of vehicle $v$ to choose trajectory $h_{v,m}$ is $q_{v,m}$.



\subsubsection{Training data collection}
We consider a multi-class object detection task~\cite{quemeneur2024fedpylot}. Each training data sample is represented by $(\mathbf{x}, y)$. Here, $\mathbf{x} \in \mathbb{R}^d$ is the input feature vector belonging to a compact space $\mathcal{X}$ whereas $y \in \mathbb{R}$ is the corresponding label from the label set $\mathcal{Y} = \{1, 2, \ldots, C\}$. With different street scenes, we assume different street blocks follow heterogeneous data distribution, e.g., with different numbers of pedestrians, vehicles, or other objects. By defining the probability of a data sample in street block $b$ belonging to class $i$ as $p_{b}^{i}$, the global data distribution for the entire region can be expressed as follows
\begin{equation}   p^{i}=\frac{\sum_{b=1}^B Q_b p_{b}^{i}}{\sum_{b=1}^B Q_b},
\end{equation}
where ${Q}_b$ is the average number of objects a vehicle encounters in street block $b$. It is noted that $p_{b}^{i}$ can be estimated in our systems based on public-domain information, such as satellite imagery and traffic information, or by requesting some vehicles to upload the statistical data (instead of the raw data) of the street block\footnote{\rev{In this paper, we consider data heterogeneity at the ``street block" level. However, the proposed Sense4FL framework can also be applied to broader spatial scales, e.g., distinctions between downtown and residential areas.}}.

Vehicles construct their training datasets along their routes, where the data collected in block $b$ follows the distribution $p_b^i$ of that block.
If vehicle $v$ collects the training data from the first ${g}_{v,m}$ street blocks in its trajectory $h_{v,m}$, the set of street blocks can be denoted by $h_{v,m}({g}_{v,m})= \{s_{v,m}^1,s_{v,m}^2,\ldots,s_{v,m}^{{g}_{v,m}}\}$, and the distribution of its collected dataset can be expressed as
\begin{equation}
    p_{v,m}^i = \frac{\sum_{b \in h_{v,m}({g}_{v,m})} Q_b p^{i}_b}{\sum_{b \in h_{v,m}({g}_{v,m})}Q_b},
\end{equation}
which will be used for training as detailed in the subsequent description.

\subsection{The Federated Learning Procedure}



The goal of Sense4FL is to derive the global model $\mathbf{w}$ to minimize the global loss function at the FL server
\begin{equation}
    \mathcal{F}(\mathbf{w}) \triangleq \sum_{b=1}^{B}l_b\mathcal{F}_b(\mathbf{w}),
\end{equation}
where 
\begin{equation}
\begin{aligned}
    \mathcal{F}_b(\mathbf{w}) 
    &=\sum_{i=1}^C p_{b}^{i}\mathbb{E}_{\mathbf{x}_b^i}\left[f(\mathbf{w},\mathbf{x}_b^i)\right]
\end{aligned} 
\end{equation}
denotes the local loss function for street block $b$, $l_b$ is the weighting factor with $\sum_{b=1}^{B} l_b=1$, and $f(\mathbf{w}, \mathbf{x}_b^i)$ denotes the loss function for samples of class $i$ in street block $b$. For object classification/detection tasks in AD, the local loss function for street block $b$ can be cross-entropy loss, logistic regression, or a combination of them~\cite{wang2023yolov7}. Moreover, the weighting factor can be determined according to the vehicle traffic density (i.e., how many vehicles will encounter the situation) or accident probability. Besides, when there are stringent road safety requirements, like in a school zone, a higher weighting factor can be assigned\footnote{Without loss of generality, we assume that samples have the same weighting factor if they are in the same street block. Our framework, however, can be easily extended to various weighting settings.}.

\begin{table}[!t]
\centering
\caption{Summary of important notations.}
\label{table:notations}
\renewcommand{\arraystretch}{1.4}
\setlength{\tabcolsep}{2mm}
\begin{tabular}{@{}p{0.9cm}p{7.0cm}@{}}
\hline
\textbf{Notation} & \textbf{Description} \\ 
\hline
~~$\mathcal{B}$ & The set of street blocks\\
~~$\mathcal{V}$ & The set of vehicles\\
~~$Q_b$ & The dataset size of street block $b$ \\
~~$h_{v,m}$ & The $ m $-th potential trajectory of vehicle $v$ \\
~~$q_{v,m}$ & The probability of vehicle $v$ to choose trajectory $ h_{v,m} $ \\
~~$t_{v,b}$ & The sojourn time of vehicle $v$ in street block $b$ \\
~~$p_b^{i}$ & The probability of a data sample in street block $b$ belonging to class $i$ \\
~~$p_{v,m}^i$ & The probability of a data sample belonging to class $i$ in the dataset collected by vehicle $v$
from trajectory $h_{v,m}({g}_{v,m})= \{s_{v,m}^1,s_{v,m}^2,\ldots,s_{v,m}^{{g}_{v,m}}\}$\\
~~~$\mathbf{w}_{v}^{(k),T}$ & The local model uploaded by vehicle \(v\) in round $k$ \\
~~~$\mathbf{w}_{\mathrm{f}}^{(k),T}$ & The aggregated FL model in round $k$ \\
~~$\mathbf{w}^\star$ & The optimal global model \\
~~~$q_{v,m}^{(k),\mathrm{rcv}}$ & The probability of successfully receiving the local trained model from vehicle \( v \) on its \( m \)-th trajectory in round \( k \)
\\
~~$z_{v,m}^{(k)}$ & The indicator representing vehicle \( v \) selecting trajectory \( h_{v,m} \) in round \( k \) \\
~~$e_{v,m}^{(k)}$ & The indicator of successfully receiving the local trained model from vehicle $v$ in round $k$ \\
~~$l_b$ & The weighting factor of street block $b$ \\
~~$\rho_v^{(k)}$ & The weighting factor for the model uploaded by vehicle $v$ in round $k$\\
~~$a_{v}$ & The vehicle selection decision variable \\
~~$\mathbf{g}_v$  & The data collection decision variable for vehicle $v$\\
 \hline
\end{tabular}
\end{table}

\subsubsection{\rev{Local model updating}}
\rev{To obtain the desired global model $\mathbf{w}$, the FL server selects vehicles to participate in each round. Let $\mathcal{V}^{(k)}$ denote the set of available vehicles in the $k$-th round, with the cardinality being $V^{(k)}$. Also, we define the vehicle selection decision variable as $a_{v}^{(k)}\in \{0,1\}$, where $a_{v}^{(k)} = 1$ indicates vehicle $v$ is selected in round $k$ and $a_{v}^{(k)}=0$ otherwise. After selection, the FL server broadcasts the current global model $\mathbf{w}^{(k)}$ to the selected vehicles. 
If vehicle $v$ collects the training data in the trajectory $h_{v,m}$, the local loss function is given by}
\begin{equation}
\begin{aligned}
    \mathcal{F}_{v,m}(\mathbf{w}_{v,m}^{(k)}) =\sum_{i=1}^C p_{v,m}^{i,(k)}\mathbb{E}_{\mathbf{x}_{v,m}^i}\left[ f(\mathbf{w}_{v,m}^{(k)}, \mathbf{x}_{v,m}^i)\right],
\end{aligned}
\end{equation}
\rev{where $\mathbf{w}_{v,m}^{(k)}$ is the local  model in round $k$ and $\mathbf{x}_{v,m}^i$ is the $i$-class dataset collected by vehicle $v$ in the trajectory $h_{v,m}$.}

Each selected vehicle updates its local model by performing $T$ steps of the local stochastic gradient descent (SGD) update~\cite{stich2018local}. 
The local update at step $t$ can be computed via
\begin{equation}\label{eq:wv_update}
\begin{aligned}
       \mathbf{w}_{v,m}^{(k),t+1}=& \mathbf{w}_{v,m}^{(k),t}-\eta\sum_{i=1}^C p_{v,m}^{i,(k)}  \nabla_{\mathbf{w}} \mathbb{E}_{\mathbf{x}_{v,m}^i}\left[ f(\mathbf{w}_{v,m}^{(k),t}, \mathbf{x}_{v,m}^i)\right], 
\end{aligned}
\end{equation}
where $\eta$ is the learning rate.

\rev{
At the end of the $k$-th round, 
the resulting model uploaded by vehicle \(v\) is hence given by
\begin{equation}\label{eq:w_vm}
     \mathbf{w}_{v}^{(k),T} = \sum_{m=1}^{M_v^{(k)}}\frac{
     z_{v,m}^{(k)}e_{v,m}^{(k)}}{\sum_{m=1}^{M_v^{(k)}}z_{v,m}^{(k)}e_{v,m}^{(k)}}\mathbf{w}_{v,m}^{(k),T},
\end{equation}
where 
\begin{equation}
   z_{v,m}^{(k)}= \begin{cases}1, &\text {with probability } q_{v,m}^{(k)}, \\ 0, &\text{otherwise},\end{cases}
\end{equation}
and $q_{v,m}^{(k)}$ is the probability that vehicle \( v \) follows the trajectory $h_{v,m}$ in round \( k \) and}
\begin{equation}
   e_{v,m}^{(k)}= \begin{cases}1, &\text {with probability } 
 q_{v,m}^{(k),\mathrm{rcv}}, \\ 0, &\text{otherwise},\label{upload_prob}
 \end{cases}
\end{equation}
where $q_{v,m}^{(k),\mathrm{rcv}}$ denotes the probability of successful reception of the local trained model from vehicle $v$, and we will show how to derive it in Section \ref{sec:problem}.

\subsubsection{\rev{Model aggregation}} 
The FL server aggregates local models successfully uploaded by vehicles within the time constraint. Consequently, the global FL model aggregation can be written as
\begin{equation} \label{eq:global_ini}    
\mathbf{w}_{\mathrm{f}}^{(k),T}=\sum_{v=1}^{ V^{(k)}}\frac{a_{v}^{(k)} \rho_v^{(k)}}{\sum_{v=1}^{ V^{(k)}}a_{v}^{(k)} \rho_v^{(k)}}\mathbf{w}_{v}^{(k),T},
\end{equation}
where $\rho_v^{(k)}$ is the weighting factor for the model uploaded by vehicle $v$, which can be obtained by combining the weighting factor $l_b$ of traversed street blocks as follows
\begin{equation}
    \rho_v^{(k)} = \sum_{m=1}^{M_v^{(k)}}q_{v,m}^{(k)}\sum_{b\in h_{v,m}({g}_{v,m}^{(k)})}l_{b}.
\end{equation}

For readers' convenience, the important notations in this paper are summarized in Table \ref{table:notations}.

\begin{remark}
    In FL for AD, the data collection framework plays a pivotal role in training performance.  Vehicle selection and data collection influence not only the probability of model uploading but also the directions of local updates. As a result, without judicious design, FL may not reflect the global data distribution in the region of interest, resulting in model bias and poor training accuracy.
\end{remark}

\begin{figure}
    \centering
    \includegraphics[width=0.98\linewidth]{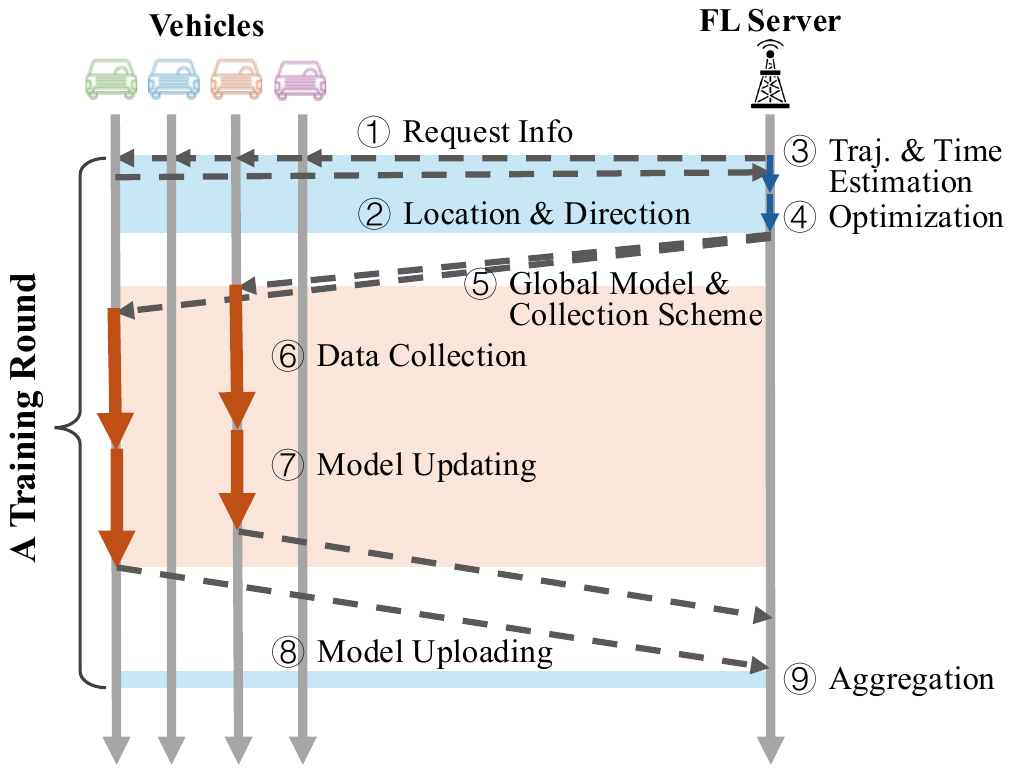}
    \caption{The workflow of Sense4FL framework.}
    \label{fig:workflow}
\end{figure}

\subsection{The Overview of Sense4FL Framework}
In summary, the proposed Sense4FL workflow, as illustrated in Fig. \ref{fig:workflow}, consists of the following steps for each training round.

\begin{enumerate}[noitemsep]
\item The FL server initiates the process by broadcasting probing information to vehicles within the region. Vehicles that are interested in participating respond by uploading their location and direction data to the FL server through the cellular network (via the associated BS).

\item Based on the received location and direction information, the FL server estimates the trajectories and travel times of the responding vehicles.

\rev{\item The FL server selects a subset of vehicles to participate in the FL training and determines their data collection blocks according to our algorithm. It then transmits the decisions and the global model to the selected vehicles.}

\item The chosen vehicles proceed to collect data according to the specified policy, train their models, and then upload their trained models back to the FL server for aggregation.

\item The FL server aggregates the received models and prepares for the next round of training.
\end{enumerate}

\section{Convergence Analysis of Sense4FL}
\label{sec:convergence}
In Sense4FL, one fundamental problem is how vehicle selection, data collection, and the successful model uploading probability could impact model convergence. Understanding these factors lays the foundation for subsequent optimization. In this section, we derive an upper bound on convergence as a function of the aforementioned factors.

We first introduce four widely used assumptions on loss function  $\mathcal{F}(\cdot)$~\cite{wang2019adaptive,zhang2024coalitional}:

\textbf{Assumption 1.} $\mathcal{F}(\cdot)$ is convex\footnote{Assumption 1 holds for AI models, including squared-SVM and linear regression models. The experimental results in Section \ref{sec:exp} also demonstrate that our algorithm works effectively for deep neural networks not satisfying Assumption 1.}.

\textbf{Assumption 2.} $\mathcal{F}(\cdot)$ is $\beta$-Smooth, i.e., for any $\mathbf{w}$ and $ \mathbf{w}^{\prime}$,
$\mathcal{F}(\mathbf{w}) \leq \mathcal{F}(\mathbf{w}^{\prime})+\nabla \mathcal{F}(\mathbf{w}^{\prime})^T(\mathbf{w}-\mathbf{w}^{\prime})+\frac{\beta}{2}\left\|\mathbf{w}-\mathbf{w}^{\prime}\right\|^2.$

\textbf{Assumption 3.} $\mathcal{F}(\cdot)$ is $L$-Lipschitz, i.e., for any $\mathbf{w}$ and $\mathbf{w}^{\prime}$, $\left\|\mathcal{F}(\mathbf{w})-\mathcal{F}\left(\mathbf{w}^{\prime}\right)\right\| \leq L \| \mathbf{w}-\mathbf{w}^{\prime} \|$.
    
\textbf{Assumption 4.} $\nabla_{\mathbf{w}} \mathbb{E}_{\mathbf{x}^i}\left[f(\mathbf{w}, \mathbf{x}^i)\right]$ is $\lambda_i$-Lipschitz for each class $i \in \mathcal{Y}$, i.e., for any $\mathbf{w}$ and $\mathbf{w}^{\prime}$, $\| \nabla_{\mathbf{w}}\mathbb{E}_{\mathbf{x}^i}\left[ f(\mathbf{w}, \mathbf{x}^i)\right]-$ $\nabla_{\mathbf{w}} \mathbb{E}_{\mathbf{x}^i}\left[ f\left(\mathbf{w}^{\prime}, \mathbf{x}^i\right)\right]\left\|\leq \lambda_i\right\| \mathbf{w}-\mathbf{w}^{\prime} \|$.

Based on the assumptions above, the convergence upper bound is provided below.

\begin{theorem}\label{theo:FL}
    Under Assumptions 1-4 and when the following conditions hold:\\
1) $\eta \leq \frac{1}{\beta}$\\
2) $\eta<\frac{2}{\beta}\left(1-\frac{L U \sum_{k=1}^K\Omega^{(k)}}{KT \phi \epsilon^2}\right)$\\
3) $\mathcal{F}(\mathbf{w}_{\mathrm{c}}^{(k),T})-\mathcal{F}(\mathbf{w}^\star)\geq \epsilon$, $\forall k$\\
4) $\mathcal{F}(\mathbf{w}_\mathrm{f}^{(K),T}) -\mathcal{F}(\mathbf{w}^\star)\geq \epsilon$\\
for $\epsilon >0$, where $U \triangleq \max_{k}\max_{j}\mu_{\max}(\mathbf{w}_{\mathrm{c}}^{(k),j})$, 
{$\mu_{\max}(\mathbf{w}_{\mathrm{c}}^{(k), j})\triangleq \max_{i=1}^C\|\nabla_\mathbf{w} \mathbb{E}_{\mathbf{x}_b^i}[f(\mathbf{w}_{\mathrm{c}}^{(k), j},\mathbf{x}_b^i)]\|$ represents the maximum norm of the expected gradient for each class of model $\mathbf{w}_{\mathrm{c}}^{(k), j}$, $\mathbf{w}_{\mathrm{c}}^{(k), j}$ is the model obtained at step $j$ in the $k$-th round under centralized training,
$\phi\triangleq\min_{k}\frac{1}{\|\mathbf{w}_{\mathrm{c}}^{(k),1}-\mathbf{w}^{\star}\|^2}$,} and $\mathbf{w}^\star$ is the optimal global model, the training loss of Sense4FL satisfies
\begin{equation}\label{eq:loss}
    \begin{aligned}
        &\mathcal{F}(\mathbf{w}_\mathrm{f}^{(K),T}) -\mathcal{F}(\mathbf{w}^\star)\\
        \leq& \frac{1}{\eta(\phi KT(1-\frac{\beta\eta}{2})-\frac{L}{\epsilon^2}
         U\sum_{k=1}^{K}\Omega^{(k)})},
    \end{aligned}
\end{equation}
where
\begin{equation}
    \begin{aligned}
    \Omega^{(k)}\triangleq 
    &\delta\sum_{v=1}^{ V^{(k)}}\frac{a_{v}^{(k)} \rho_v^{(k)}}{\sum_{v=1}^{ V^{(k)}}a_{v}^{(k)} \rho_v^{(k)}}\underbrace{\sum_{m=1}^{M_v^{(k)}}\overline{\xi_{v,m}^{(k)}}
    \sum_{i=1}^C \|p_{v,m}^{i,(k)}- \sum_{b=1}^B l_b p_{b}^{i}\|}_{\text{client divergence}}\\
    &+\underbrace{\sum_{i=1}^C \|\sum_{v=1}^{ V^{(k)}}\frac{a_{v}^{(k)} \rho_v^{(k)}}{\sum_{v=1}^{ V^{(k)}}a_{v}^{(k)} \rho_v^{(k)}}\sum_{m=1}^{M_v^{(k)}}\overline{\xi_{v,m}^{(k)}}p_{v,m}^{i,(k)}- \sum_{b=1}^B l_b p_{b}^{i}\|}_{\text{global divergence}}\\
    =&\delta\sum_{v=1}^{ V^{(k)}}\frac{a_{v}^{(k)} \rho_v^{(k)}}{\sum_{v=1}^{ V^{(k)}}a_{v}^{(k)} \rho_v^{(k)}}{D_{v,\text{client}}^{(k)}}+{D_{\text{global}}^{(k)}},
    \end{aligned}
\end{equation}
with
${D_{v,\text{client}}^{(k)}}\triangleq\sum_{m=1}^{M_v^{(k)}}\overline{\xi_{v,m}^{(k)}}\sum_{i=1}^C \|p_{v,m}^{i,(k)}- \sum_{b=1}^B l_b p_{b}^{i}\|
$, ${D_{\text{global}}^{(k)}}\triangleq \sum_{i=1}^C \|\sum_{v=1}^{ V^{(k)}}\frac{a_{v}^{(k)} \rho_v^{(k)}}{\sum_{v=1}^{ V^{(k)}}a_{v}^{(k)} \rho_v^{(k)}}\sum_{m=1}^{M_v^{(k)}}\overline{\xi_{v,m}^{(k)}}p_{v,m}^{i,(k)}- \sum_{b=1}^B l_b p_{b}^{i}\|$,   $\delta \triangleq \sum_{j=1}^{T-1}(1+\eta \lambda_{\max})^{j}$ with $\lambda_{\max}\triangleq
 \max_i\lambda_i$, and $\overline{\xi_{v,m}^{(k)}} \triangleq\frac{q_{v,m}^{(k)}q_{v,m}^{(k),\text{rcv}}}{\sum_{m=1}^{M_v^{(k)}}{q_{v,m}^{(k)}q_{v,m}^{(k),\text{rcv}}}}$.
\end{theorem}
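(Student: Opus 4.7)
The plan is to follow the standard ``virtual centralized trajectory'' technique that has become canonical in FL convergence analysis (e.g., \cite{wang2019adaptive}), adapted here to incorporate (i) the per-class gradient Lipschitz constants $\lambda_i$ from Assumption 4, (ii) the random trajectory indicator $z_{v,m}^{(k)}$, and (iii) the random reception indicator $e_{v,m}^{(k)}$. First I would define a reference sequence $\mathbf{w}_\mathrm{c}^{(k),j}$ which, at each round $k$, starts from the current FL global model $\mathbf{w}_\mathrm{f}^{(k-1),T}$ and performs $T$ \emph{centralized} SGD steps using the true regional distribution $\sum_b l_b p_b^i$ in place of any vehicle's local distribution. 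This is exactly the sequence whose maximum per-class gradient norm $\mu_{\max}(\cdot)$ and initial distance to $\mathbf{w}^\star$ appear in the constants $U$ and $\phi$ of the theorem.

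Second, for a fixed vehicle $v$ on a fixed trajectory $h_{v,m}$, I would unroll the local SGD recursion in \eqref{eq:wv_update} and compare it step-by-step to the centralized recursion. A single-step gap decomposes into a term $\eta\sum_i\|p_{v,m}^{i,(k)}-\sum_b l_b p_b^i\|\cdot\|\nabla_\mathbf{w}\mathbb{E}_{\mathbf{x}^i}[f(\mathbf{w}_\mathrm{c},\mathbf{x}^i)]\|$, coming purely from the distribution mismatch, plus a residual bounded by $(1+\eta\lambda_{\max})$ times the previous-step gap, courtesy of Assumption~4. Iterating this recursion for $T$ steps yields the geometric factor $\delta=\sum_{j=1}^{T-1}(1+\eta\lambda_{\max})^j$ multiplying the $L_1$-type class-discrepancy $\sum_i\|p_{v,m}^{i,(k)}-\sum_b l_b p_b^i\|$, which is precisely the per-vehicle, per-trajectory EMD appearing in ${D_{v,\text{client}}^{(k)}}$.

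Third, I would take expectation over the random variables $z_{v,m}^{(k)}$ and $e_{v,m}^{(k)}$. Conditioning on the event $\sum_m z_{v,m}^{(k)}e_{v,m}^{(k)}\ge 1$ (i.e., vehicle $v$'s model is received) and normalizing, the mixture weights from \eqref{eq:w_vm} become exactly $\overline{\xi_{v,m}^{(k)}}=q_{v,m}^{(k)}q_{v,m}^{(k),\mathrm{rcv}}/\sum_{m}q_{v,m}^{(k)}q_{v,m}^{(k),\mathrm{rcv}}$. Plugging these into the server-side aggregation \eqref{eq:global_ini} and applying the triangle inequality \emph{twice} is the crux: one application before aggregation yields the client-divergence sum $\sum_v \rho_v^{(k)}D_{v,\text{client}}^{(k)}/\sum_v\rho_v^{(k)}$, while pulling the summation inside the norm (a Jensen-type step) for the deterministic portion of the drift produces the smaller global-divergence term ${D_{\text{global}}^{(k)}}$. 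This decomposition is where the characteristic $\delta\cdot(\text{client})+(\text{global})$ form of $\Omega^{(k)}$ emerges and where one sees that Sense4FL benefits from \emph{both} choosing individually well-matched vehicles and ensuring that the collective aggregate resembles the regional distribution.

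Finally, I would convert the per-round iterate drift $\|\mathbf{w}_\mathrm{f}^{(k),T}-\mathbf{w}_\mathrm{c}^{(k),T}\|\lesssim \eta U\,\Omega^{(k)}$ into a loss-level bound using the $L$-Lipschitzness in Assumption~3, then combine it with the standard convex/$\beta$-smooth one-round descent lemma applied to the centralized sequence (yielding the $\phi KT(1-\beta\eta/2)$ factor), using the $\epsilon$-lower-bound hypotheses on $\mathcal{F}(\mathbf{w}_\mathrm{c}^{(k),T})-\mathcal{F}(\mathbf{w}^\star)$ and $\mathcal{F}(\mathbf{w}_\mathrm{f}^{(K),T})-\mathcal{F}(\mathbf{w}^\star)$ to translate distance-to-optimum into reciprocal loss progress, as in \cite{wang2019adaptive}. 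Summing across the $K$ rounds and inverting yields the displayed bound. The step I expect to be the main obstacle is the third one: cleanly separating the bilinear interaction between the random trajectory/reception weights $\overline{\xi_{v,m}^{(k)}}$ and the deterministic vehicle weights $\rho_v^{(k)}$ so that the client/global split survives vehicle-level aggregation, rather than collapsing (via an over-eager triangle inequality) into a single, looser client-divergence-only bound that would blur the very structure Sense4FL exploits in its subsequent optimization.
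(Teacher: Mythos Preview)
Your proposal is correct and follows essentially the same route as the paper's proof: introduce the synchronized centralized sequence $\mathbf{w}_\mathrm{c}^{(k),t}$, unroll the local/centralized recursions so that Assumption~4 yields the $(1+\eta\lambda_{\max})$ contraction and hence the $\delta$-weighted client term, keep the sum inside the norm only at the final aggregation step to isolate the global-divergence term, and then feed the resulting drift bound $\|\mathbf{w}_\mathrm{f}^{(k),T}-\mathbf{w}_\mathrm{c}^{(k),T}\|\le \eta U\,\Omega^{(k)}$ into the reciprocal-progress telescoping argument from \cite{wang2019adaptive}. The only place where the paper is slightly more casual than you is the passage from the random weights $\xi_{v,m}^{(k)}$ to the deterministic $\overline{\xi_{v,m}^{(k)}}$; it simply substitutes one for the other at the end (and writes the conclusion with an expectation on $\mathcal{F}(\mathbf{w}_\mathrm{f}^{(K),T})$), so your plan to condition and normalize is, if anything, a cleaner way to justify that step.
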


\begin{proof}
    Please refer to Appendix \ref{app:converge}.
\end{proof}


We remark that $\epsilon>0$ in Conditions (3) and (4) in Theorem \ref{theo:FL} follows from the convergence lower bound of gradient
descent given in Theorem 3.14 in~\cite{bubeck2015convex}.
Some key observations can be made based on \textbf{Theorem \ref{theo:FL}}:

\textbf{Observation 1.} {${D_{v,\text{client}}^{(k)}}$ denotes the client divergence (i.e., the weighted EMD), which measures the divergence between the data distribution of vehicle $v$ and the data distribution of the entire region, and ${D_{\text{global}}^{(k)}}$ denotes the global weighted EMD, which measures the divergence between the data distribution of all selected vehicles and the data distribution of the region. Since $\phi, U$, and $\epsilon$ are independent of our decision variables, the upper bound of training loss is proportional to $\Omega^{(k)}$, which depends on the client divergence and global divergence, i.e., \(D_{v,\text{client}}^{(k)}\) and \(D_{\text{global}}^{(k)}\), of the selected vehicles. This indicates that minimizing the upper bound of training loss is equivalent to minimizing $\Omega^{(k)}$.}

\rev{Intuitively, a large client divergence makes FL hard to converge, while a large global divergence makes the distribution of the training dataset different from that of the test dataset. As a result, minimizing the combined objective with these two metrics leads to smaller training loss, which matches our intuition.}


\textbf{Observation 2.} 
To reduce \(D_{v,\text{client}}^{(k)}\), the data distribution of each selected vehicle should be close to the overall data distribution in the region.  To reduce \(D_{\text{global}}^{(k)}\), the combined data distribution of all selected vehicles should be close to the global regional data distribution. Intuitively, a small client divergence ensures that each local model will not diverge significantly, whereas a small global divergence ensures that all the selected vehicles can collectively train a global model that better represents the region of interest.

\textbf{Observation 3.} 
$\delta$ is a parameter related to the number of local steps $T$. If a vehicle conducts one SGD step, i.e., $T=1$, then $\delta$ equals $0$, yielding $\Omega^{(k)} = D_{\text{global}}^{(k)}$. In this scenario, the process is equivalent to the centralized training. For this reason, only the global divergence $D_{\text{global}}^{(k)}$ matters, i.e., we only need to pay attention to the combined data distribution of all selected vehicles.

\section{Problem Formulation}
\label{sec:problem}
The previous analysis has demonstrated that the trajectories of vehicles greatly affect training data distribution and uploading opportunities. In this section, we formulate the optimization problem of minimizing the training loss for this region by jointly optimizing vehicle selection and training data collection.

\subsection{Model Reception Probability Analysis}

To minimize the upper bound of training loss in (\ref{eq:loss}), we first need to calculate the successful model reception probability $q_{v,m}^{(k),\mathrm{rcv}}$, which is related to latency, i.e., whether a vehicle has sufficient time to upload the model or not. \rev{When vehicle $v$ follows trajectory $h_{v,m}$ and stops data collection after traversing the first ${g}_{v,m}^{(k)}$ street blocks, it needs to train the local model and upload it to the FL server before the deadline for one training round\footnote{{We assume the delay for model aggregation is negligible, as in~\cite{shiUAV}.}}.}

\rev{\textit{1) Local data collection:}
When vehicle $v$ stops data collection after traversing the first ${g}_{v,m}^{(k)}$ blocks along trajectory $h_{v,m}^{(k)}$, the local data collection time (DCT) $t_{v,m}^{(k),\mathrm{DCT}}$ in round $k$ can be given by
\begin{equation}
    t_{v,m}^{(k),\mathrm{DCT}}\triangleq \sum_{b \in h_{v,m}({g}_{v,m}^{(k)})}t_{v,b}^{(k)} - \sum_{b \in h_{v,m}({c}_{v,m}^{(k)})}t_{v,b}^{(k)},
\end{equation}
where $c_{v,m}^{(k)}$ represents the number of street blocks from which vehicle \( v \) has already collected data at the time of selection. DCT indicates how long the vehicle spends collecting data after the time of selection.
}

\textit{2) Local model computing:}
Let $c_v$ denote the required number of processing cycles for computing one sample and $f_v$ denote the computing capability (in FLOPS) of vehicle $v$. The computing time for vehicle $v$ can be expressed as
\begin{equation}        
    t_{v}^{(k),\mathrm{comp}} = \frac{Tc_v{D}_\mathrm{Batch}}{f_v},
\end{equation}
where ${D}_\mathrm{Batch}$ is the batch size of local training.

\textit{3) Local model uploading:}
We adopt the Orthogonal Frequency Division Multiple Access (OFDMA) scheme for model uploading. Due to the movement of vehicles and their uncertain routes, predicting their channel state information (CSI) accurately is very challenging, if not impossible. As such, we conservatively estimate the uploading time based on the average uploading data rate at the boundary of the cell coverage. 
\rev{The local model uploading time of vehicle $v$ for trajectory $h_{v,m}^{(k)}$ is
\begin{equation}
    t_{v,m}^{(k),\mathrm{up}} = \frac{\omega}{R_{v,m}^{(k),\mathrm{min}}} + t^{\mathrm{trans}},
\end{equation}
where $\omega$ is the local model size, $R_{v,m}^{(k),\mathrm{min}}$ is the minimum expected uplink data rate within cellular coverage, and \( t^{\mathrm{trans}} \) represents the time required for the BS to transmit a model to the FL server via a wired link, which is assumed to be a constant.} Since it is hard to predict the uplink data rate of a vehicle and its associated BS in the future, we assume full cellular coverage and consider the minimum expected data rate within the coverage as a conservative measurement for model uploading time.

\rev{The local data collection, local model computing, and uploading should be finished before the deadline, which means
\begin{equation}
t_{v,m}^{(k),\mathrm{DCT}}+ t_{v}^{(k),\mathrm{comp}}+t_{v,m}^{(k),\mathrm{up}}\leq T^{\mathrm{task}},
\end{equation}
where \( T^{\mathrm{task}} \) denotes the time budget for one training round. With this time constraint, we can calculate the successful reception probability $q_{v,m}^{(k),\mathrm{rcv}}$ in (\ref{upload_prob}) by
\begin{equation}
q_{v,m}^{(k),\mathrm{rcv}} = P(t_{v,m}^{(k),\mathrm{DCT}} + t_{v}^{(k),\mathrm{comp}}+t_{v,m}^{(k),\mathrm{up}} \leq T^{\mathrm{task}}).
\end{equation}
}

\subsection{Problem Formulation}\label{AA}
In our Sense4FL system, the FL server selects a subset of vehicles and determines when each vehicle should start training after collecting data from a number of street blocks. We define the vector of the vehicle selection index as $\mathbf{a}^{(k)}\triangleq [a_{1}^{(k)}, a_{2}^{(k)},...,a_{V^{(k)}}^{(k)}]^T$ and the vector of data collection decision variables  as $\mathbf{g}_v^{(k)} \triangleq [g_{v,1}^{(k)},g_{v,2}^{(k)},\ldots,g_{v,M_v^{(k)}}^{(k)}]^T, \forall v$, where $a_{v}^{(k)} = 1$ indicates that vehicle $v$ is selected at round $k$ and $g_{v,m}^{(k)}$ implies that vehicle $v$ stop data collection at the $g_{v,m}^{(k)}$-th street block (or equivalently, collect data from the first $g_{v,m}^{(k)}$ blocks) in its trajectory $h_{v,m}$. To minimize the training loss in (\ref{eq:loss}), the optimization problem is formulated as
\begin{subequations}
    \begin{align} \label{p:original}
        \min_{\mathbf{a}^{(k)}, \mathbf{g}_v^{(k)}}&\mathcal{F}(\mathbf{w}_{\mathrm{f}}^{(K),T})\\  \text{s.t.}~~&\sum_{v=1}^{V^{(k)}}a_v^{(k)}= S, ~\forall k \in \{1,...,K\},\label{p1:S}\\
        &c_{v,m}^{(k)}\leq g_{v,m}^{(k)}\leq N_{v,m}^{(k)},~\forall v \in \{1, ..., {V}^{(k)}\}, \nonumber \\
        &~~~~~~~~~ m \in \{1, ..., M_v^{(k)}\}, ~ k \in \{1, ..., K\}, \label{p1:g1}\\
        &g_{v,m}^{(k)} \in \mathbb{N}^+, ~\forall v \in \{1, ..., {V}^{(k)}\}, \nonumber \\
        &~~~~~~~~~m \in \{1, ..., M_v^{(k)}\}, ~k \in \{1, ..., K\},\label{p1:g2}\\
        &a_v^{(k)}\in\{0,1\}, ~\forall v \in \{1, ..., {V}^{(k)}\}, ~k \in \{1, ..., K\}, \label{p1:a}
    \end{align}
\end{subequations}
where $S$ denotes the number of selected vehicles.
{Constraint \eqref{p1:S} restricts the number of selected vehicles due to limited resources, e.g., bandwidth and budget\footnote{{Since it has been empirically observed that FL performance increases with the number of clients, we use equality to enforce the exact number of vehicle selection.}}. {Constraint \eqref{p1:g1} ensures that vehicles stop data collection after the time of vehicle selection and no later than traversing all the street blocks in their trajectories.}
Considering the mobility of vehicles, the set of available vehicles in the region varies significantly across training rounds. Thus, it is generally impossible to optimize the learning performance by considering the varying vehicle selection in all training rounds~\cite{pervej2023resource}. 
For this reason, we concentrate on minimizing the upper bound of the loss in \eqref{eq:loss} or $\Omega^{(k)}$ in one round and execute our proposed algorithm for each training round, as done in~\cite{pervej2023resource}. For simplicity, we get rid of $k$ in the subsequent development. The optimization problem can be transformed into
\begin{subequations}\label{formulation}
    \begin{align}
        \min_{\mathbf{a}, \mathbf{g}_v}~&\delta\sum_{v=1}^{ V}\frac{a_{v} \rho_v}{\sum_{v=1}^{ V}a_{v} \rho_v}{D_{v,\text{client}}}+{D_{\text{global}}}\\  \text{s.t.}~&\sum_{v=1}^{V}a_v = S,\\
        &c_{v,m}\leq g_{v,m}\leq N_{v,m},\forall v \in \{1, ..., {V}\}, m \in \{1, ..., M_v\}, \\
        &g_{v,m} \in \mathbb{N}^+,~\forall v \in \{1, ..., {V}\}, ~m \in \{1, ..., M_v\},\\
        &a_v\in\{0,1\},~\forall v \in \{1, ..., {V}\}.
    \end{align}
\end{subequations}

We note that the objective function is the weighted client divergence between the data distribution of each selected vehicle and the data distribution of the region and the global divergence between the combined data distribution of all selected vehicles and the region data distribution, i.e., ${D_{v,\text{client}}}$ and ${D_{\text{global}}}$, which is particularly challenging to solve. 



\section{Algorithm Design}
\label{sec:algorithm}
In this section, we first show the NP-hardness of the formulated problem in Section \ref{sec:problem}. Then, we develop an efficient algorithm to solve the problem with an approximate guarantee.
 
\subsection{NP-Hardness of the Problem}
\begin{theorem} \label{theorem:np_hard}
    Problem (\ref{formulation}) is NP-hard, which can be reduced to a classical non-linear multiple-choice knapsack problem, with a non-convex non-separable objective function and non-continuous variables.
\end{theorem}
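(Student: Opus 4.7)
The plan is to establish NP-hardness via a polynomial-time reduction from the classical $0$-$1$ multiple-choice knapsack problem (MCKP), which is well-known to be NP-hard. The goal is to show that an arbitrary MCKP instance can be embedded into a restricted special case of Problem (\ref{formulation}), so that any polynomial-time algorithm for our problem would immediately solve MCKP. Concretely, I would first strip Problem (\ref{formulation}) down to a case that already captures the combinatorial difficulty: restrict every vehicle to a single, deterministic trajectory (so $M_v=1$, $q_{v,1}=1$, and $\overline{\xi_{v,1}}=1$), so that the only real decisions are the selection indicator $a_v$ and the integer stopping index $g_{v,1}\in\{c_{v,1},\ldots,N_{v,1}\}$.

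For the reduction construction, given an MCKP instance with $V$ groups where group $v$ contains items with weights $w_{v,j}$ and values $\nu_{v,j}$, I would map each group $v$ to a vehicle and each item $j$ to one admissible integer value of $g_{v,1}$. The street-block distributions $p_b^i$, the per-block weights $l_b$, and the block-level data sizes $Q_b$ would be engineered class-by-class so that the induced vehicle distribution $p_{v,1}^i$ depends on $g_{v,1}$ in a way that reproduces $\nu_{v,j}$ (inside the client divergence term ${D_{v,\text{client}}}$), while the cardinality constraint $\sum_v a_v = S$ directly plays the role of the group-cardinality constraint of MCKP. A convenient way to keep the encoding clean is to assign each vehicle a mutually disjoint set of ``signature'' classes, so that the $L_1$-type EMD expressions in ${D_{v,\text{client}}}$ and ${D_{\text{global}}}$ decouple across vehicles and can be read off as separate contributions to the MCKP objective, up to a known additive constant.

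Once the mapping is in place, the argument concludes in two lines: the reduction is clearly polynomial in the MCKP input size, and an optimal solution to the constructed Sense4FL instance yields (by inverting the mapping) an optimal solution to the MCKP instance. Hence Problem (\ref{formulation}) inherits NP-hardness. I would additionally highlight structural features that justify the classification as a \emph{non-linear, non-convex, non-separable} MCKP: the global term ${D_{\text{global}}}$ couples all selected vehicles through the normalized weights $\frac{a_v \rho_v}{\sum_v a_v \rho_v}$, which prevents separability; the absolute-value summands $\|p_{v,m}^{i}-\sum_b l_b p_b^i\|$ destroy convexity even after the ratios are fixed; and the decision vector $(\mathbf{a},\mathbf{g}_v)$ mixes binary and integer variables, ruling out continuous relaxations as a shortcut.

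The main obstacle is the second step: crafting the distribution parameters so that the EMD-based divergences faithfully encode the MCKP weight/value pairs without introducing spurious cross-terms. The ratio $\frac{a_v\rho_v}{\sum_v a_v\rho_v}$ inside ${D_{\text{global}}}$ is the awkward piece, because it makes the global divergence depend simultaneously on all selected vehicles, so the reduction must be arranged (for example, by balancing the $\rho_v$ across vehicles and concentrating the discriminating signal into ${D_{v,\text{client}}}$) so that ${D_{\text{global}}}$ contributes only a constant shift in the restricted instance. With that decoupling in place, the rest of the argument reduces to bookkeeping.
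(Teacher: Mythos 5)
Your proposal takes a genuinely different route from the paper, and the difference matters. The paper's Appendix~B argument goes the other way around: it restricts to the single-trajectory case, rewrites that special case so that it has the \emph{form} of a non-linear MCKP (groups $=$ vehicles, items $=$ stopping indices, capacity $=$ the cardinality $S$), and then concludes NP-hardness from the fact that MCKP is NP-hard. That is membership in a hard problem class, not a hardness proof --- a special case of an NP-hard class need not itself be NP-hard. You correctly orient the reduction in the standard direction (embed an arbitrary MCKP instance \emph{into} a restricted instance of Problem~(\ref{formulation})), which is what a rigorous NP-hardness proof requires. In that sense your plan is methodologically stronger than what the paper actually writes down.

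However, your proposal has a genuine gap of its own, and you partly acknowledge it. First, the gadget that carries all the weight --- choosing $p_b^i$, $l_b$, and $Q_b$ so that the $L_1$ divergences in $D_{v,\text{client}}$ reproduce arbitrary prescribed item values, while the non-separable term $D_{\text{global}}$ (with its normalization $a_v\rho_v/\sum_v a_v\rho_v$) collapses to a constant --- is never constructed; it is exactly the step you flag as ``the main obstacle,'' so the reduction is not actually completed. Second, and more fundamentally, the only resource constraint available in Problem~(\ref{formulation}) is the unit-weight cardinality constraint $\sum_v a_v = S$, whereas MCKP's hardness lives in its \emph{weighted} capacity constraint $\sum_{v,j} w_{v,j}x_{v,j}\le C$. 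With unit weights and a separable objective, the selection problem is solved in polynomial time by picking the best item per group and then the best $S$ groups, so your mapping ``cardinality constraint $\leftrightarrow$ group-cardinality constraint'' does not encode the knapsack weights at all; the hardness would have to be injected entirely through the fractional, non-separable objective, and the proposal does not show how. Until both of these are resolved, neither your argument nor the paper's constitutes a complete proof of NP-hardness; the paper's text should be read as a (correct) structural classification of the problem as a non-linear, non-convex, non-separable MCKP rather than as a reduction.
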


\begin{proof}
    The proof is shown in Appendix \ref{sec: app2}.
\end{proof}

The optimization problem we address follows the structure of a non-linear multiple-choice knapsack problem (MCKP)~\cite{bretthauer2002nonlinear}, which is widely recognized to be NP-hard and extremely challenging to solve. Compared to its linear counterpart, the complexity of our problem is further exacerbated by the objective function that involves the global divergence between data distribution of selected vehicles and the global data distribution. Since this global divergence results from the combined effect of selected vehicles' datasets, it is infeasible to decompose the problem into independent subproblems, rendering classic algorithms, such as dynamic programming~\cite{bellman1966dynamic}, inapplicable.
Besides, the NP-hardness implies that no algorithm can be found to solve the problem in a polynomial time.




\subsection{An Approximate Algorithm} 
Given the NP-hardness of the problem, we propose a low-complexity algorithm to solve it with an approximation ratio. We begin with analyzing the objective function in (\ref{formulation})
\begin{equation}
    \begin{aligned}
    Obj = D_{\text{client}} +D_{\text{global}},
    \end{aligned}
\end{equation}
where
\begin{equation}
D_{\text{client}} = \delta\sum_{v=1}^{ V}\frac{a_{v} \rho_v}{\sum_{v=1}^{ V}a_{v} \rho_v}{D_{v,\text{client}}}.
\end{equation}

We observe that $D_{\text{client}}$ is a separable term of decision variables because it is the summation of the local EMD divergence of selected vehicles. In contrast, $D_{\text{global}}$ is non-separable, which is more challenging to optimize. Moreover, as both $D_{\text{client}}$ and $D_{\text{global}}$ quantify EMD divergence, we observe that the first term is at least a constant ratio of the second term, and will dominate when $\delta$ or local update step $T$ is large. This inspires us to develop a two-step optimization algorithm: 1) minimize the separable term $D_{\text{client}}$ first, for which we can achieve the minimum value of $D_{\text{client}}$ due to the tractability of the expression, and 2) use a local search procedure to improve the algorithm by evaluating $D_{\text{client}} + D_{\text{global}}$ until no improvement can be made. In this way, a good upper bound can be achieved for the minimization problem.


\subsubsection{Step 1} By focusing on $D_{\text{client}}$, the optimization problem becomes
\begin{subequations}\label{p:special}
    \begin{align} 
        \min_{\mathbf{a}, \mathbf{g}_v}~& D_{\text{client}}\\  
        \text{s.t.}~&\sum_{v=1}^{V}a_v = S,\\
        &c_{v,m}\leq g_{v,m}\leq N_{v,m},\forall v \in \{1, ..., {V}\}, m \in \{1, ..., M_v\}, \\
        &g_{v,m} \in \mathbb{N}^+,~\forall v \in \{1, ..., {V}\}, ~m \in \{1, ..., M_v\},\\
        &a_v\in\{0,1\},~\forall v \in \{1, ..., {V}\}.
    \end{align}
\end{subequations}

We can get rid of $\mathbf{g}_v$ because the optimal value $\mathbf{g}_v^\star$ can be obtained by traversing the feasible set for each vehicle independently with a low time complexity $\mathcal{O}(\sum_{m=1}^{M_v}N_{v,m})$. The resultant problem becomes
\begin{subequations}\label{reformulation_a}
    \begin{align}
        \min_{\mathbf{a}}~& \frac{\delta\sum_{v=1}^{V}a_v\rho_{v}\Tilde{d}_v}{\sum_{v=1}^{V}a_v \rho_{v}}\\  \text{s.t.}~&\sum_{v=1}^{V}a_v= S, \label{eq:re_a_s}\\
        &a_v\in\{0,1\},~\forall v \in \{1, ..., {V}\}, \label{eq:re_a_av}
    \end{align}
\end{subequations}
where $\Tilde{d}_v = \sum_{m=1}^{M_v}\overline{\xi_{v,m}}\sum_{i=1}^C \|p_{v,m}^{i}- \sum_{b=1}^B l_b p_{b}^{i}\|$ is the client divergence with the optimal data collection scheme $\mathbf{g}_v^\star$. 
We observe that the problem is an integer fractional programming. To address this rather hard problem, we introduce an auxiliary variable $d$ and decompose Problem \eqref{reformulation_a} into two subproblems. Specifically, by introducing $d$, the optimization problem can be reformulated as follows
{\begin{subequations} \label{p:original_ad}
    \begin{align}
                \min_{\mathbf{a},d}~~& d\\  \text{s.t.}~~&\frac{\delta\sum_{v=1}^{V}a_v\rho_{v}\Tilde{d}_v}{\sum_{v=1}^{V}a_v \rho_{v}}\leq d, \label{eq:ad_d}\\
                &\sum_{v=1}^{V}a_v= S, \label{eq:ad_S}\\
        &a_v\in\{0,1\},~\forall v \in \{1, ..., {V}\}. \label{eq:ad_av}
    \end{align} 
\end{subequations}



Given a fixed value of $d$, Problem \eqref{p:original_ad} reduces to a feasibility-check problem formulated as}
\begin{subequations}\label{reformulation_d}
    \begin{align}
                \text{Find}~~&{\mathbf{a}}\\
                \text{s.t.}~~&\eqref{eq:ad_d}, \eqref{eq:ad_S},\eqref{eq:ad_av}.
    \end{align} 
\end{subequations}

Under a fixed $d$, Constraint \eqref{eq:ad_d} is equivalent to
\begin{equation}
    \sum_{v=1}^{V}a_v\rho_{v}(\delta\Tilde{d}_v-d)\leq 0. \label{eq: nnew_obj}
\end{equation}

To solve Problem \eqref{reformulation_d}, we sort vehicles in ascending order of $\rho_{v}(\delta\Tilde{d}_v-d)$ and select the first $S$ vehicles. If such a vehicle selection strategy satisfies \eqref{eq: nnew_obj}, then Problem \eqref{reformulation_d} has feasible solutions.
Note that in this process, we use the metric $\rho_{v}(\delta\Tilde{d}_v-d)$ to comprehensively capture the effect of both the local EMD $\Tilde{d}_v$ and the weighting factor $\rho_{v}$.

The second subproblem is to find the minimum value of $d$, denoted by $d^\dag$, under which there is a feasible solution to \eqref{reformulation_d}. Since $\Tilde{d}_v \in [0, 2],1\leq v\leq {V}$, we have $d\in [0, 2\delta]$. To compute $d^\dag$, we use the bisection method over $[0,2\delta]$, with an error tolerance of $\sigma$. Given $d^\dag$, vehicles are sorted in ascending order of $\rho_{v}(\delta\Tilde{d}_v-d^\dag)$, and the first \( S \) vehicles are selected. This selection strategy yields the optimal solution to Problem \eqref{reformulation_a}, with $d^\dag$ being the optimal objective value. We have the following Theorem.

\begin{theorem} \label{theorem:optimal}
    The vehicle selection strategy is the optimal solution to Problem \eqref{reformulation_a}.
\end{theorem}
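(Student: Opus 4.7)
The plan is to establish optimality in three stages: (i) show that the epigraph reformulation \eqref{p:original_ad} has the same optimal value as the original fractional problem \eqref{reformulation_a}; (ii) show that the proposed sorting criterion correctly solves the feasibility subproblem \eqref{reformulation_d} for every fixed $d$; and (iii) conclude that the strategy at $d^\dagger$ attains the optimum.

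First, I would prove the equivalence of \eqref{reformulation_a} and \eqref{p:original_ad}. Any binary vector $\mathbf{a}$ that is feasible in \eqref{reformulation_a} with objective value $d'$ gives a feasible pair $(\mathbf{a},d')$ in \eqref{p:original_ad}, so the optimum of \eqref{p:original_ad} is upper bounded by that of \eqref{reformulation_a}. Conversely, if $(\mathbf{a},d)$ is feasible in \eqref{p:original_ad}, then since $\sum_v a_v \rho_v > 0$ (as $S\ge 1$ and $\rho_v>0$), the inequality \eqref{eq:ad_d} can be cross-multiplied to yield an objective value in \eqref{reformulation_a} no larger than $d$. Taking the infimum over $d$ gives the reverse bound, so the two problems share the same optimum.

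Next, I would analyze \eqref{reformulation_d}. After cross-multiplication, constraint \eqref{eq:ad_d} is exactly \eqref{eq: nnew_obj}, which is \emph{linear} and \emph{separable} in $\mathbf{a}$. Subject to the cardinality constraint $\sum_v a_v=S$ and $a_v\in\{0,1\}$, the minimum of $\sum_{v} a_v\rho_v(\delta\Tilde{d}_v-d)$ is achieved by selecting exactly the $S$ indices with the smallest values of the coefficient $\rho_v(\delta\Tilde{d}_v-d)$; this is the standard exchange argument for linear selection. Hence \eqref{reformulation_d} admits a feasible solution if and only if the sum of the $S$ smallest such coefficients is $\le 0$, and in that case the greedy choice described in the theorem statement is itself feasible. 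Let $G(d)$ denote this minimum. I would note that $G(d)$ is continuous, piecewise linear, and non-increasing in $d$ (each coefficient is non-increasing in $d$, and the selection only sharpens the decrease), so there is a well-defined threshold $d^\dagger=\inf\{d:G(d)\le 0\}$ that the bisection converges to up to tolerance $\sigma$.

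Finally, I would combine the pieces to prove optimality. Let $\mathbf{a}^\dagger$ be the greedy selection at $d=d^\dagger$. By definition it satisfies $\sum_v a_v^\dagger\rho_v(\delta\Tilde{d}_v-d^\dagger)\le 0$, which rearranges to $\frac{\delta\sum_v a_v^\dagger\rho_v\Tilde{d}_v}{\sum_v a_v^\dagger\rho_v}\le d^\dagger$, so $\mathbf{a}^\dagger$ achieves objective at most $d^\dagger$ in \eqref{reformulation_a}. Conversely, any $\mathbf{a}$ with objective $d'<d^\dagger$ would render $(\mathbf{a},d')$ feasible in \eqref{p:original_ad}, contradicting the definition of $d^\dagger$ as the smallest value of $d$ with a feasible $\mathbf{a}$. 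Hence $\mathbf{a}^\dagger$ is optimal for \eqref{reformulation_a} with optimal objective $d^\dagger$.

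The main obstacle I anticipate is the continuity/monotonicity argument for $G(d)$ that makes the bisection well-founded: because the optimal selection set can switch as $d$ crosses certain break points, one must argue that the feasibility region $\{d:G(d)\le 0\}$ is still a closed upper half-line, and that the greedy rule evaluated exactly at $d^\dagger$ (rather than at a nearby approximation) yields an $\mathbf{a}$ whose fractional objective equals $d^\dagger$. Handling ties in the sort order at $d=d^\dagger$ (where several vehicles may have identical coefficients) requires a brief tie-breaking remark, but does not affect optimality since all tied choices yield the same objective value in \eqref{reformulation_a}.
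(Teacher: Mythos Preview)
Your proposal is correct and follows essentially the same approach as the paper: both rely on the monotonicity in $d$ of the feasibility of the linearized constraint \eqref{eq: nnew_obj} to justify bisection, and identify the selection at the threshold $d^\dagger$ as optimal for \eqref{reformulation_a} via the equivalence with \eqref{p:original_ad}. Your treatment is simply more thorough---explicitly arguing the epigraph equivalence, the optimality of the greedy sort for the feasibility subproblem, the continuity/monotonicity of $G(d)$, and tie-breaking---whereas the paper's proof sketches only the monotonicity-of-feasibility step and then asserts the conclusion.
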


\begin{proof}
    For a given $d$, if there exists a solution $\mathbf{a}$ satisfying \eqref{eq:ad_d}-\eqref{eq:ad_av}, then for any $d^{\prime}>d$, there also exists a solution $\mathbf{a}^\prime$ satisfying \eqref{eq:ad_d}-\eqref{eq:ad_av}. On the other hand, if for a given $d$ where problem \eqref{reformulation_d} does not have a feasible solution, then for any $d^{\prime}<d$, it also has no feasible solution. Therefore, we can obtain the minimum $d$ making Problem \eqref{reformulation_d} feasible based on a bisection method, which corresponds to the optimal objective value of Problem (\ref{p:original_ad}). Thus, the corresponding vehicle selection strategy \( \mathbf{a} \) is the optimal solution to Problem (\ref{p:original_ad}), and hence (\ref{reformulation_a}).
    The proof is completed.
\end{proof}


\begin{algorithm}[t] 
	\caption{Vehicle Selection and Data Collection Algorithm for Sense4FL}
 \label{alg:optimal}
	\LinesNumbered 
	\KwIn{$\mathcal{V}$, $\mathcal{T}$, $T^{\mathrm{task}}$, $S$, $p_{b}^i$, $l_b$, $\mathcal{H}_v$, $N_{\max}, \delta, \sigma$}
	\KwOut{$\mathcal{S}^\star$ and $\mathbf{g}_{v}^\star$}
$\text{/}**~\text{STEP 1. solution by sorting} **\text{/}$\\
Set $d_l = 0$, $d_r = 2\delta$\;
\While{$d_r - d_l \geq \sigma$}{
$d \gets (d_r+d_l)/2$\;
Optimize $\mathbf{g}_v$ to minimize $\rho_{v}(\delta\Tilde{d}_v-d)$\;
Sort vehicles in ascending order of $\rho_{v}(\delta\Tilde{d}_v-d)$ and select the first $S$ vehicles as $\mathcal{S}_0$\;
Calculate $D_{\text{client}}^\dag \gets \sum_{v=1}^{S}\rho_{v}(\delta\Tilde{d}_v-d)$\;
\eIf{$D_{\text{client}}^\dag \leq 0$}{
    $d_r \gets d$\;
}{
    $d_l \gets d$\;
}
}
$\text{/}**~\text{STEP 2. solution improvement by local search} **\text{/}$\\
Calculate $Obj^\ast$ based on $\mathcal{S}_0$\;
Set $c^\ast = \infty$, $i =1$, $\mathcal{S}^\star \gets \mathcal{S}_0$ \;
\While{$c^\ast \neq 0$ and $i \leq N_{\max}$}{
$i \gets i+1$\;
\For{$v \in \mathcal{S}_0$}{
     $\mathcal{J} \gets \mathcal{V} \setminus \mathcal{S}_0$\;
     $c^\ast \gets 0$\;
        \For{$c \in \mathcal{J}$}{
        $\mathcal{S}^{\prime} \gets \mathcal{S}_0 \setminus \{v\} \cup \{c\}$\;
        Optimize $\mathbf{g}_v$ by traversing the feasible set\;
        Calculate $Obj^\prime$ based on $\mathcal{S}^{\prime}$\;
        \If{$Obj^\prime < Obj^\ast$}{
            $c^\ast \gets c$\;
            $Obj^\ast \gets Obj^\prime$\;
            $v^\ast \gets v$\;
        }
        }
        \If{$c^\ast \neq 0$}{
        $\mathcal{S}^{\star} \gets \mathcal{S}_0 \setminus \{v^\ast \} \cup \{c^\ast\}$\;  
        } 
     }
}
Return vehicle selection set $\mathcal{S}^\star$ and data collection $\mathbf{g}_v^\star$
\end{algorithm}

\subsubsection{Step 2} Since Step 1 only considers the term $D_{\text{client}}$, in what follows, we refine the algorithm by considering the original objective function $D_{\text{client}} + D_{\text{global}}$ based on a local search procedure. Specifically, during each iteration, we replace the least effective vehicle, i.e., introducing the maximum incremental value to the objective function $D_{\text{client}} + D_{\text{global}}$, with the one that has the minimum value in the unselected vehicle set. This procedure continues until no further improvements can be made or the maximum number of iterations is reached. The proposed two-step algorithm is presented in Algorithm \ref{alg:optimal}.


Next, we provide the provable approximation guarantee of our proposed algorithm. 
\begin{theorem} \label{theo:bound}
    The objective $Obj^\dag$ obtained from Algorithm \ref{alg:optimal} satisfies $Obj^\dag \leq \frac{1+\delta}{\delta} Obj^\star$, where $Obj^\star >0$ is the optimal objective value to Problem (\ref{formulation}) and $\delta \triangleq \sum_{j=1}^{T-1}(1+\eta \lambda_{\max})^{j}$.
\end{theorem}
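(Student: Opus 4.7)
The plan is to exploit the fact that, via a triangle-inequality argument, $D_{\text{global}}$ can always be controlled by $D_{\text{client}}/\delta$, so that optimally minimizing $D_{\text{client}}$ alone already yields a near-optimal value of the combined objective $D_{\text{client}} + D_{\text{global}}$. Step~2 of the algorithm can then only further reduce the objective, preserving the bound.

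First, I would establish the key inequality $D_{\text{global}} \leq D_{\text{client}}/\delta$ for every feasible choice of $(\mathbf{a}, \mathbf{g}_v)$. Writing the constant vector $\sum_{b=1}^{B} l_b p_b^{i}$ in a convex-combination form, using $\sum_{m} \overline{\xi_{v,m}} = 1$ and $\sum_v \frac{a_v \rho_v}{\sum_v a_v \rho_v} = 1$, the global divergence becomes
\begin{equation}
    D_{\text{global}} = \sum_{i=1}^{C} \Bigl\| \sum_{v=1}^{V} \frac{a_v \rho_v}{\sum_{v} a_v \rho_v} \sum_{m=1}^{M_v} \overline{\xi_{v,m}}\bigl(p_{v,m}^{i} - \textstyle\sum_{b} l_b p_b^{i}\bigr) \Bigr\|.
\end{equation}
Two successive applications of the triangle inequality (first pulling $\sum_v$ and $\sum_m$ out of the norm, then swapping with $\sum_i$) give
\begin{equation}
    D_{\text{global}} \leq \sum_{v=1}^{V} \frac{a_v \rho_v}{\sum_v a_v \rho_v}\, D_{v,\text{client}} = \frac{D_{\text{client}}}{\delta}.
\end{equation}

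Next, I would invoke Theorem~\ref{theorem:optimal}, which guarantees that the vehicle-selection/data-collection pair returned at the end of Step~1, denoted $(\mathbf{a}^{(1)}, \mathbf{g}_v^{(1)})$, is a global minimizer of $D_{\text{client}}$. Hence, letting $(\mathbf{a}^\star, \mathbf{g}_v^\star)$ be the optimizer of the original problem \eqref{formulation} and $D_{\text{client}}^\star$ the value of $D_{\text{client}}$ at that optimum,
\begin{equation}
    D_{\text{client}}^{(1)} \leq D_{\text{client}}^\star \leq D_{\text{client}}^\star + D_{\text{global}}^\star = Obj^\star,
\end{equation}
using the non-negativity of $D_{\text{global}}^\star$. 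Combining this with the triangle bound yields
\begin{equation}
    Obj^{(1)} = D_{\text{client}}^{(1)} + D_{\text{global}}^{(1)} \leq \bigl(1 + \tfrac{1}{\delta}\bigr) D_{\text{client}}^{(1)} \leq \tfrac{1+\delta}{\delta} Obj^\star.
\end{equation}

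Finally, I would observe that Step~2 of Algorithm~\ref{alg:optimal} is a strict descent local search on the full objective $D_{\text{client}} + D_{\text{global}}$: a swap is accepted only if $Obj^\prime < Obj^\ast$. Therefore $Obj^\dag \leq Obj^{(1)}$, and chaining with the previous display concludes $Obj^\dag \leq \tfrac{1+\delta}{\delta} Obj^\star$. The main obstacle is the first step, namely recognizing that the non-separable global term is dominated by a separable upper bound through the triangle inequality; once this structural observation is in place, the remaining arguments are routine.
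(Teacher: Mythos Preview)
Your proposal is correct and follows essentially the same route as the paper: establish $D_{\text{global}}\le D_{\text{client}}/\delta$ via the triangle inequality, use Theorem~\ref{theorem:optimal} to obtain $D_{\text{client}}^{(1)}\le D_{\text{client}}^\star$, and combine. Your explicit observation that Step~2 is a strict-descent local search (hence $Obj^\dag\le Obj^{(1)}$) is a minor clarification the paper leaves implicit, but otherwise the argument is identical.
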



\begin{proof}
We establish the following relationship between $D_{\text{client}}$ and $D_{\text{global}}$.
    \begin{equation}\label{eq:F_2}
    \begin{aligned}
        D_{\text{global}} = 
        &\sum_{i=1}^C \|\sum_{v=1}^{ V}\frac{a_{v}\rho_v}{\sum_{v=1}^{ V}a_{v}\rho_v}\sum_{m=1}^{M_v}\overline{\xi}_{v,m}p_{v,m}^{i}- \sum_{b=1}^B l_b p_{b}^{i}\|\\
        \leq &\sum_{v=1}^{ V}\frac{a_{v} \rho_v}{\sum_{v=1}^{ V}a_{v} \rho_v}\sum_{m=1}^{M_v}\overline{\xi}_{v,m}\sum_{i=1}^C \|p_{v,m}^{i}- \sum_{b=1}^B l_b p_{b}^{i}\|\\
        \leq &\frac{1}{\delta}D_{\text{client}}. 
    \end{aligned}
\end{equation}

From Step 1, the optimal value of $D_{\text{client}}$ is $D_{\text{client}}^\dag = d^\dag$, and in this case, we can calculate the value of $D_{\text{global}}^\dag$ based on the corresponding solution $\mathbf{a}^\dag$. Moreover, we assume the true optimal value for the original problem is $D_{\text{client}}^\star+D_{\text{global}}^\star$, which is obtained by jointly optimizing both terms. It holds that $D_{\text{client}}^\dag \leq D_{\text{client}}^\star$.
Since $D_{\text{global}} \leq \frac{1}{\delta}D_{\text{client}}$ as derived in \eqref{eq:F_2}, we can get $D_{\text{global}}^\dag \leq \frac{1}{\delta}D_{\text{client}}^\dag$. 
Therefore, we arrive at 
\begin{equation}
    \begin{aligned}
        &\frac{D_{\text{client}}^\dag+D_{\text{global}}^\dag}{D_{\text{client}}^\star+D_{\text{global}}^\star
        } \leq \frac{D_{\text{client}}^\dag(1+\frac{1}{\delta})}{D_{\text{client}}^\star+D_{\text{global}}^\star
        }  \\
        \leq &\frac{D_{\text{client}}^\dag(1+\frac{1}{\delta})}{D_{\text{client}}^\star
        }
        \leq \frac{D_{\text{client}}^\star(1+\frac{1}{\delta})}{D_{\text{client}}^\star
        } \\
        \leq &\frac{1+\delta}{\delta},
    \end{aligned}
\end{equation}
which completes the proof.
\end{proof}

\rev{Finally, we analyze the computational complexity of the proposed algorithm.}

\begin{theorem}
    \rev{The overall computational complexity for Algorithm \ref{alg:optimal} is $\mathcal{O}\left((V N_{\mathrm{tot}}\log \frac{2 V}{\sigma}+ N_{\mathrm{ite}}N_{\mathrm{tot}}S(V-S))\right)$, where $N_{\mathrm{tot}} = \sum_{v=1}^V\sum_{m=1}^{M_v}N_{v,m}$.}
\end{theorem}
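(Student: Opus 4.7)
The plan is to bound the cost of the two phases of Algorithm \ref{alg:optimal} separately and then add them.

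I would start with the bisection in Step 1. The outer while loop halves the interval $[d_l,d_r]\subseteq [0,2\delta]$ until its length drops below the tolerance $\sigma$, which takes $\mathcal{O}(\log(2V/\sigma))$ iterations (after absorbing the $\delta$ factor via the standing bound on $\delta$). Inside a single iteration, three tasks are performed: (a) for each vehicle $v$, optimize $\mathbf{g}_v$ by enumerating the feasible set of every trajectory at cost $\mathcal{O}(\sum_m N_{v,m})$ per vehicle, and $\mathcal{O}(N_{\mathrm{tot}})$ in aggregate; (b) sort the $V$ vehicles by the score $\rho_v(\delta\tilde{d}_v-d)$ in $\mathcal{O}(V\log V)$; and (c) compute $D_{\text{client}}^\dag$ and update the interval endpoints, in $\mathcal{O}(V)$. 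Since the scores change with $d$, the per-vehicle evaluation must be redone each iteration, so the per-iteration work is $\mathcal{O}(V N_{\mathrm{tot}})$ once one accounts for evaluating each candidate vehicle's divergence at the new $d$. Multiplying by the iteration count yields the first term $\mathcal{O}(V N_{\mathrm{tot}}\log(2V/\sigma))$.

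Next I would analyze the local search in Step 2. The outermost loop executes at most $N_{\mathrm{ite}}$ times, and each iteration considers every swap pair $(v,c)$ with $v\in\mathcal{S}_0$ and $c\in\mathcal{V}\setminus\mathcal{S}_0$, for a total of $S(V-S)$ candidates. For each candidate swap we (i) re-optimize $\mathbf{g}_v$ for the newly included vehicle, bounded by $\mathcal{O}(N_{\mathrm{tot}})$ in the worst case, and (ii) recompute $Obj^\prime = D_{\text{client}}+D_{\text{global}}$, which costs at most $\mathcal{O}(V)$ when leveraging cached contributions from the previous selected set. Step (i) dominates, so the per-iteration cost is $\mathcal{O}(S(V-S)N_{\mathrm{tot}})$, and summing over iterations gives the second term $\mathcal{O}(N_{\mathrm{ite}}N_{\mathrm{tot}}S(V-S))$. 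Adding the two contributions and discarding lower-order terms produces $\mathcal{O}(VN_{\mathrm{tot}}\log(2V/\sigma)+N_{\mathrm{ite}}N_{\mathrm{tot}}S(V-S))$, matching the claim.

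The trickiest piece is certifying the per-vehicle bound on optimizing $\mathbf{g}_v$. Strictly, $\overline{\xi_{v,m}}$ carries a normalization $\sum_{m'}q_{v,m'}q_{v,m'}^{\mathrm{rcv}}$ that couples all trajectories of the same vehicle through $q_{v,m'}^{\mathrm{rcv}}(g_{v,m'})$, so the minimization over $\mathbf{g}_v$ does not a priori decompose across $m$. The clean workaround is to precompute, for each trajectory $m$, the per-$g_{v,m}$ quantities (successful reception probability, class proportions, and traversed weighting factor) in $\mathcal{O}(N_{v,m})$ sweeps, cache them, and then evaluate the combined objective at a constant incremental cost per candidate configuration. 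Under the decomposition already invoked in Section \ref{sec:algorithm}, this keeps the per-vehicle cost at $\mathcal{O}(\sum_m N_{v,m})$ and validates the per-iteration bound. The remaining arithmetic---sorts, feasibility checks, and swap evaluations---contributes only lower-order terms that are absorbed into the two dominant factors.
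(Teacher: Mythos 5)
Your proposal follows exactly the same two-phase decomposition as the paper's own (very terse) proof: bound the bisection phase by $\mathcal{O}(VN_{\mathrm{tot}}\log\frac{2V}{\sigma})$ and the local-search phase by $\mathcal{O}(N_{\mathrm{ite}}N_{\mathrm{tot}}S(V-S))$, then sum. Your version simply supplies the per-iteration accounting (and the caveat about the $\overline{\xi_{v,m}}$ normalization coupling) that the paper leaves implicit, so it is correct and essentially identical in approach.
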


\begin{proof}
    \rev{We propose a two-step algorithm to optimize both vehicle selection and data collection strategy. In Step 1, a bisection method is employed to find the optimal vehicle selection. This process has a computational complexity of $\mathcal{O}\left(VN_{\mathrm{tot}} \log \frac{2 V}{\sigma}\right)$, where $N_{\mathrm{tot}} = \sum_{v=1}^V\sum_{m=1}^{M_v}N_{v,m}$.
    To refine the selection by considering the original objective function, we introduce a local search procedure, with a complexity of $\mathcal{O}\left(N_{\mathrm{ite}}N_{\mathrm{tot}}S(V-S)\right)$, where $N_{\mathrm{ite}}$ is the number of iterations until convergence.
    Consequently, the overall computational complexity for Algorithm \ref{alg:optimal} is $\mathcal{O}\left((V N_{\mathrm{tot}}\log \frac{2 V}{\sigma}+ N_{\mathrm{ite}}N_{\mathrm{tot}}S(V-S))\right)$.}
\end{proof}
\rev{The low complexity of the algorithm, combined with the parallel processing of vehicles, ensures the scalability of our proposed framework even in dense urban scenarios.}

\begin{table}[!t]
\centering
\caption{\rev{Parameter settings for simulations.}}
\label{table_para}
\renewcommand{\arraystretch}{1.4}
\rev{
\setlength{\tabcolsep}{2mm}
\begin{tabular}{|c|c|}
\hline
{Number of selected vehicles} &  {$S$ = 10}\\  \hline
{Model size}& {$\omega = 5.904 \times 10^8$ bit}   \\  \hline
Number of processing cycles & {$c_v = 9.8304 \times 10 ^{7}$}  \\  \hline
{Vehicle speed }  & {40-50 km/h / 50-60 km/h}   \\ \hline
Time constraint for one round & {$T^\text{task} = 80$ seconds}  \\  \hline
{Number of possible trajectories}  & {$M_v = 2$ } \\ \hline    
{Steps of local SGD updates}  & {\( T = 2\)}  \\  \hline
{Time required via wired link} &\( t^{\mathrm{trans}} \) = 1 second \\ \hline
Computing capability & {$f_v$ = 40 GFLOPS }  \\  \hline
{Minimum uploading data rate}  & {50 Mbps } \\ \hline
Lipschitz parameter & {$\lambda_{\text{max}} = 0.01$}  \\  \hline
Batch size  & {$D_{\text{Batch}} = 32$} \\ \hline
\end{tabular}
}
\end{table}

\section{Experiments}
\label{sec:exp}
In this section, we provide numerical experiments to evaluate the performance of our proposed Sense4FL framework. We compare our Sense4FL framework with several benchmark methods and demonstrate the superiority of our scheme.


\subsection{Experiment Settings}
We conduct experiments with the nuImages dataset, an extension of nuScenes designed for 2D object detection~\cite{caesar2020nuscenes}. The images were captured by six cameras mounted around a vehicle, and data was collected in Singapore and Boston, respectively. Each image’s bounding box was converted to YOLO format, where a bounding box is represented by normalized center coordinates, box width, and height~\cite{quemeneur2024fedpylot}. We divide the images based on their collection locations and generate vehicle traffic using the SUMO simulator. 
Moreover, the images captured in Singapore's One-North area are designed for training from scratch. After that, we adapt the model in the Boston Seaport scenario to illustrate the necessity for adapting models based on Sense4FL when the environment has changed. 
\rev{Specifically, the data is split into training/adaptation and test sets with a ratio of 4:1.
Images are divided into 36 distinct blocks based on their capture location. From these blocks, we generate 10 realistic trajectories, which are illustrated in Fig. \ref{fig:map}.}
\rev{This location-dependent data splitting strategy naturally induces heterogeneity, which can be observed for each trajectory in both Singapore’s One-North and Boston Seaport in Fig. \ref{fig:data}. 
}
We repeat our simulation process 15 times to obtain the average performance.

\begin{figure}[t]
\centering
\begin{subfigure}[b]{0.23\textwidth}
  \includegraphics[width=\textwidth]{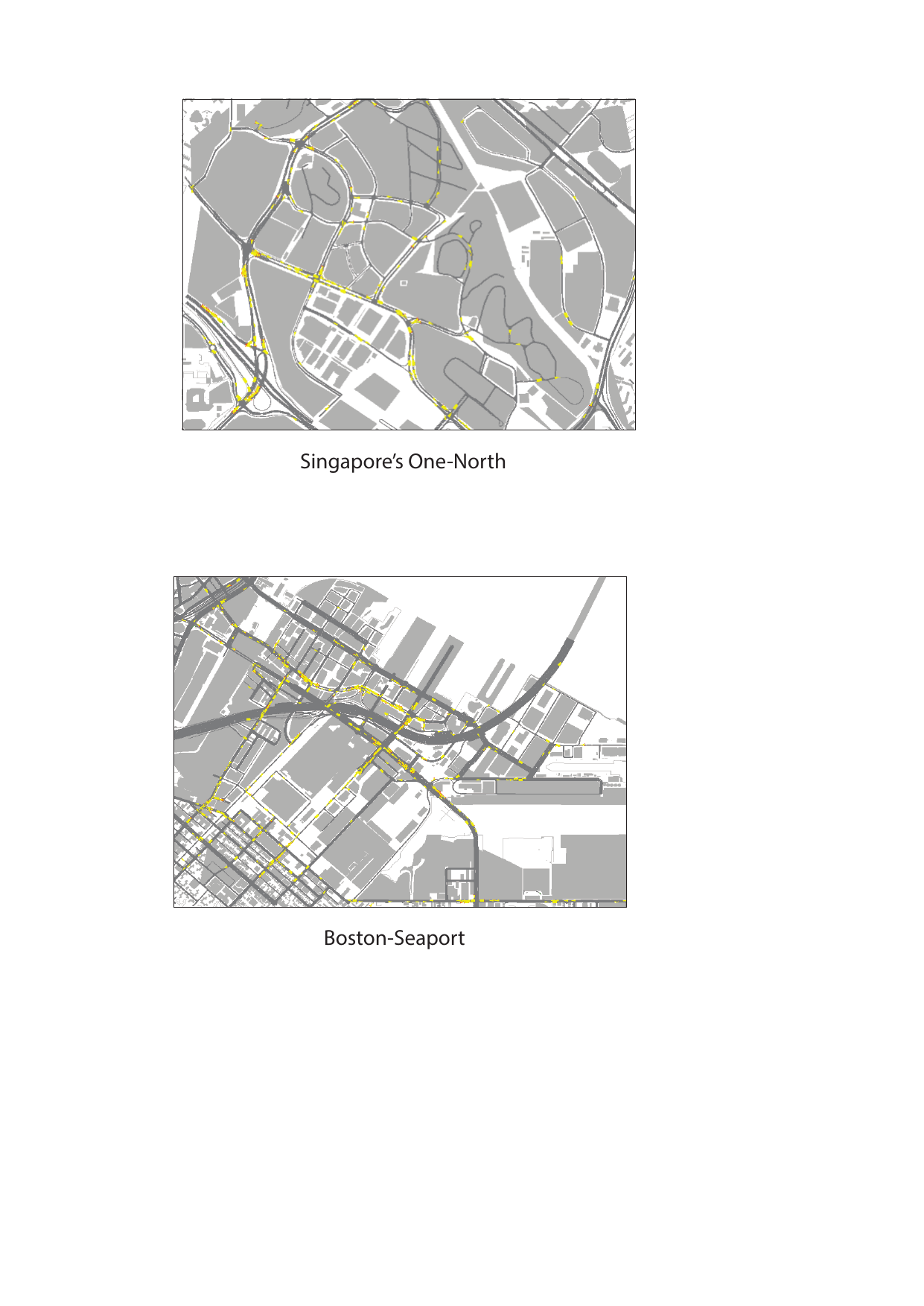}
  \caption{The trajectories in Singapore’s One-North.}\label{fig:singapore_map}
\end{subfigure}
\hfill
\begin{subfigure}[b]{0.23\textwidth}
  \includegraphics[width=\textwidth]{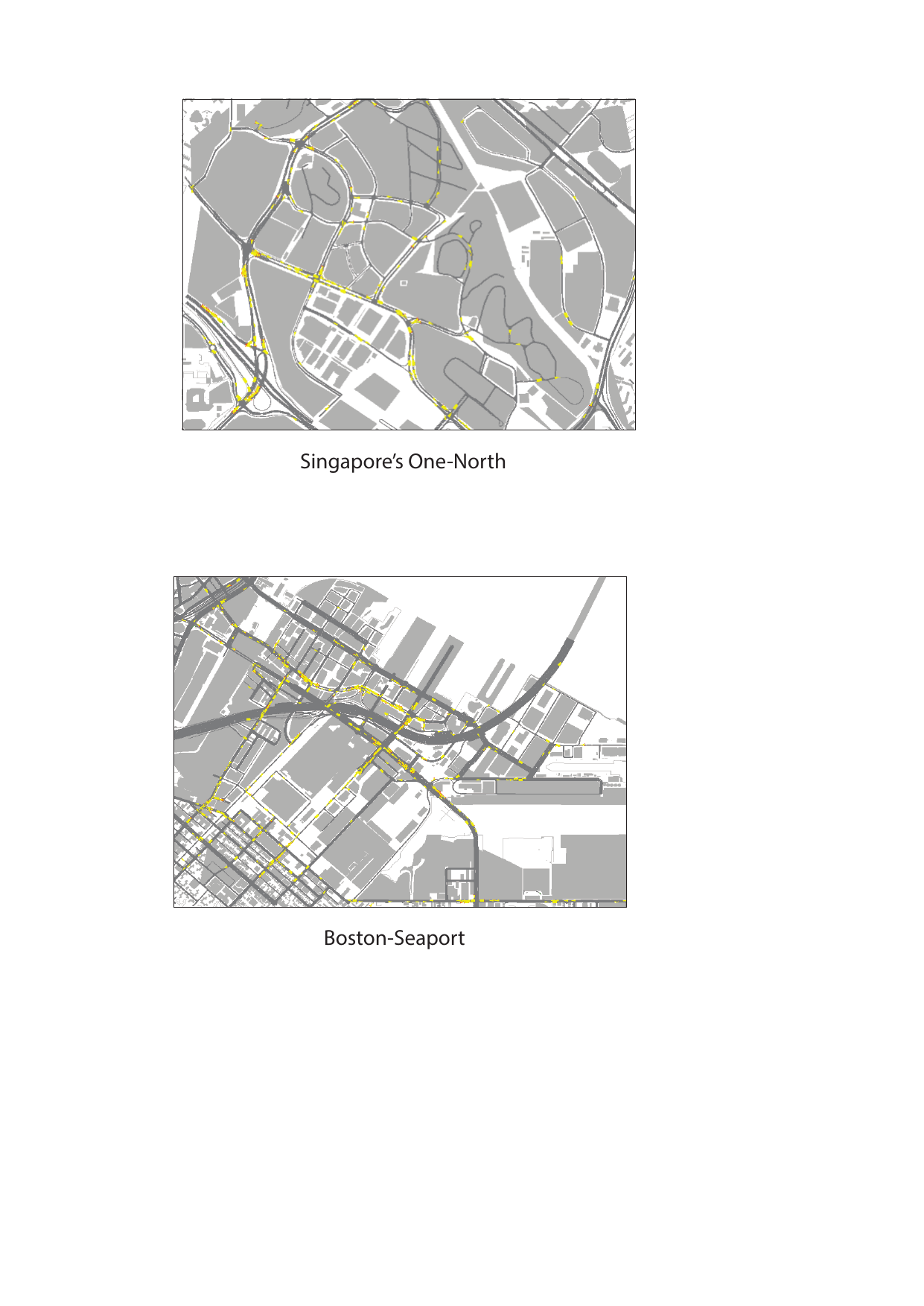}
  \caption{The trajectories in Boston Seaport.} \label{fig:boston_map}
\end{subfigure}
\caption{\rev{The trajectories in Singapore’s One-North and Boston Seaport.}}\label{fig:map}
\vspace{-0.0cm}
\end{figure}

\begin{figure}[t]
\centering
\begin{subfigure}[b]{0.24\textwidth}
  \includegraphics[width=\textwidth]{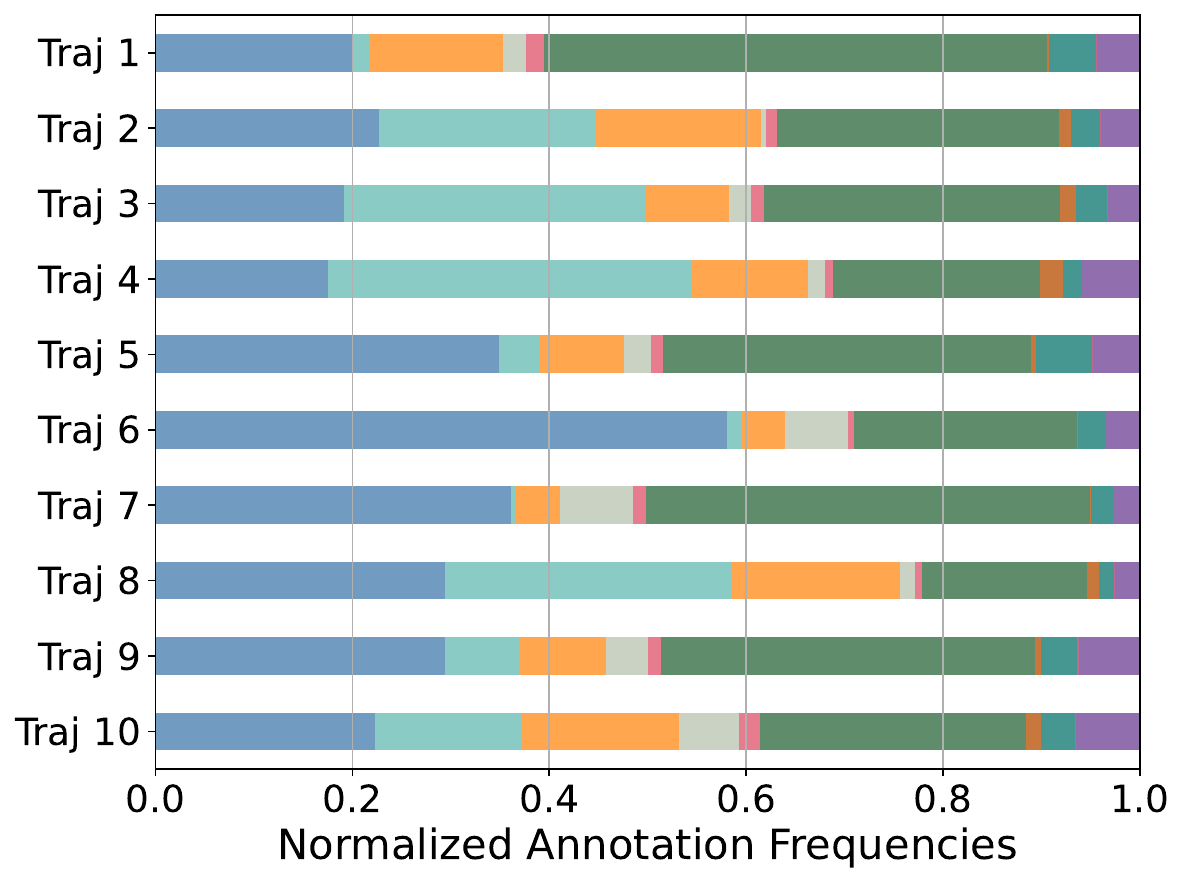}
  \caption{The normalized frequency of objects for each trajectory in Singapore’s One-North.} \label{fig:singapore_data}
\end{subfigure}
\hfill
\begin{subfigure}[b]{0.24\textwidth}
  \includegraphics[width=\textwidth]{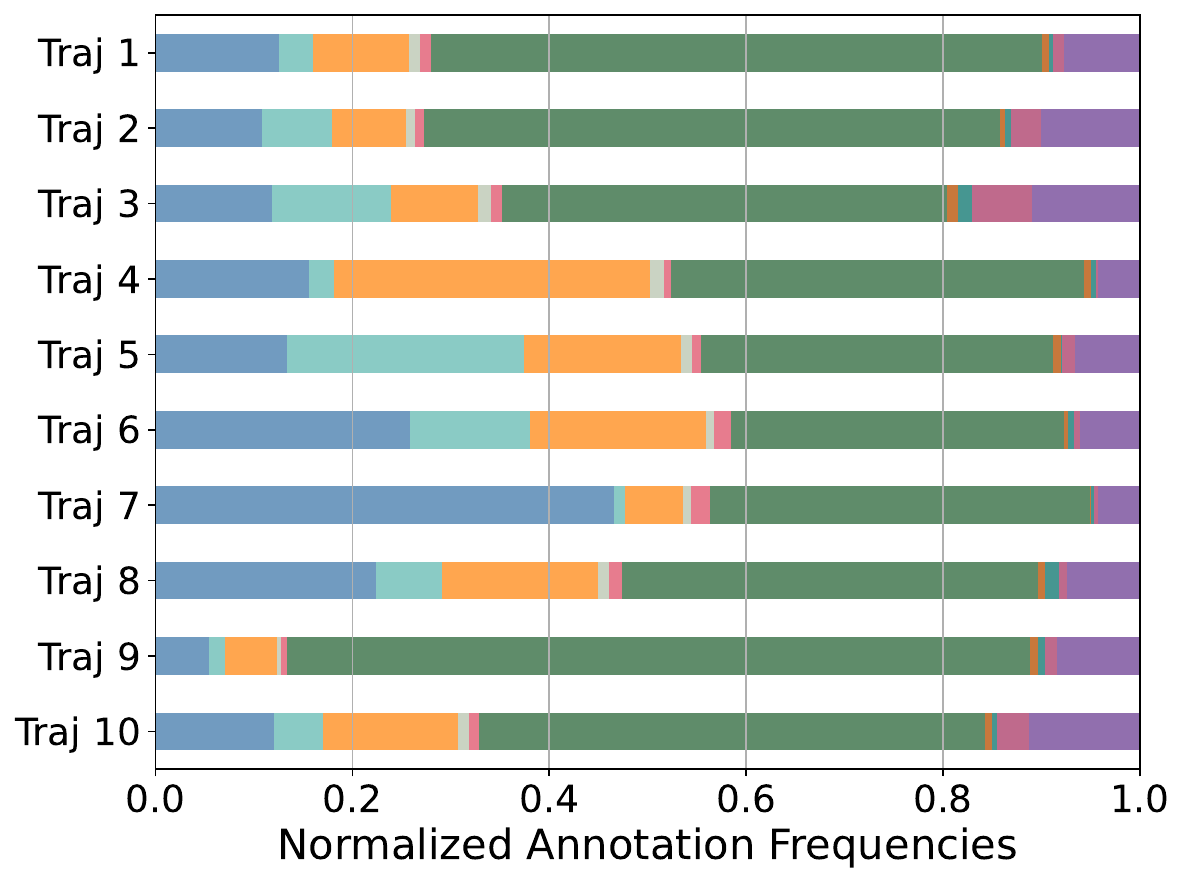}
  \caption{The normalized frequency of objects for each trajectory in Boston Seaport.} \label{fig:boston_data}
\end{subfigure}
\caption{\rev{The data distribution for each trajectory in Singapore’s One-North and Boston Seaport. The data is split non-IID among ten trajectories based on the data capture location. The different colors represent different classes: pedestrian, barrier, traffic cone, bicycle, bus, car, construction vehicle, motorcycle, trailer, and truck.}}\label{fig:data}
\vspace{-0.0cm}
\end{figure}

\begin{figure*}[t]
\centering
\begin{subfigure}[b]{0.323\textwidth}
  \includegraphics[width=\textwidth]{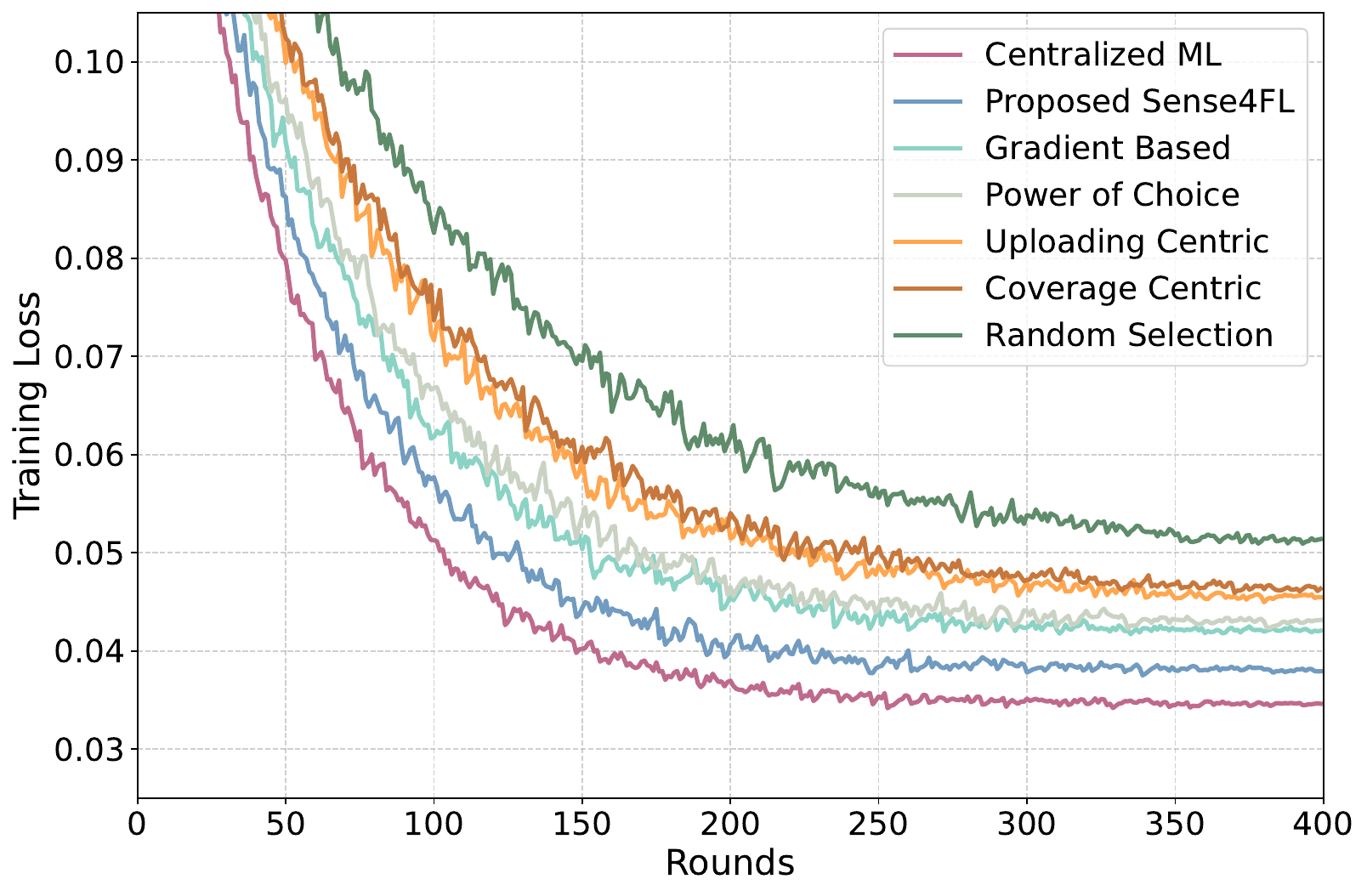}
  \caption{Training loss versus rounds.}
\end{subfigure}
\hfill
\begin{subfigure}[b]{0.32\textwidth}
  \includegraphics[width=\textwidth]{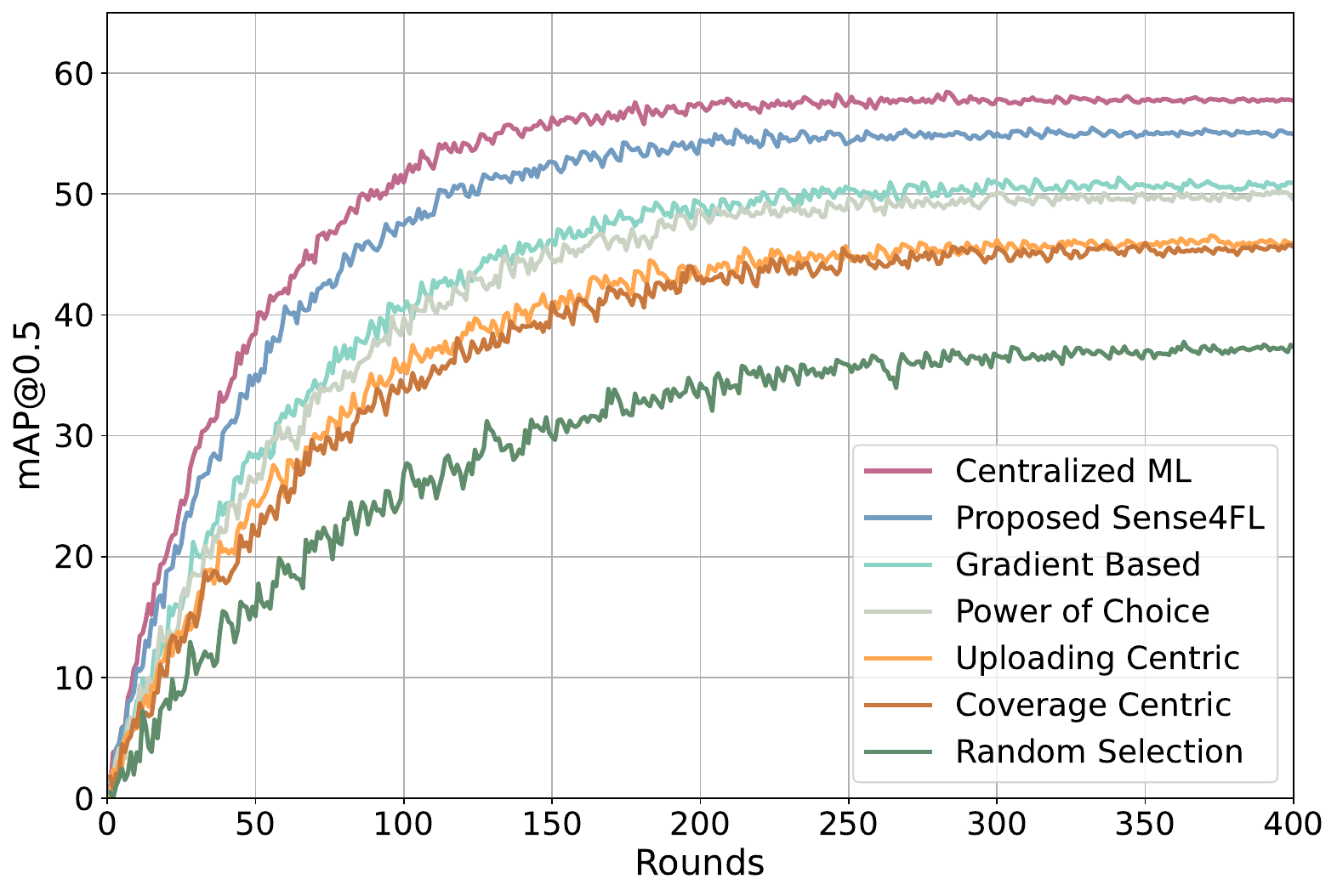}
  \caption{Testing accuracy versus rounds.}
\end{subfigure}
\hfill
\begin{subfigure}[b]{0.32\textwidth}
  \includegraphics[width=\textwidth]{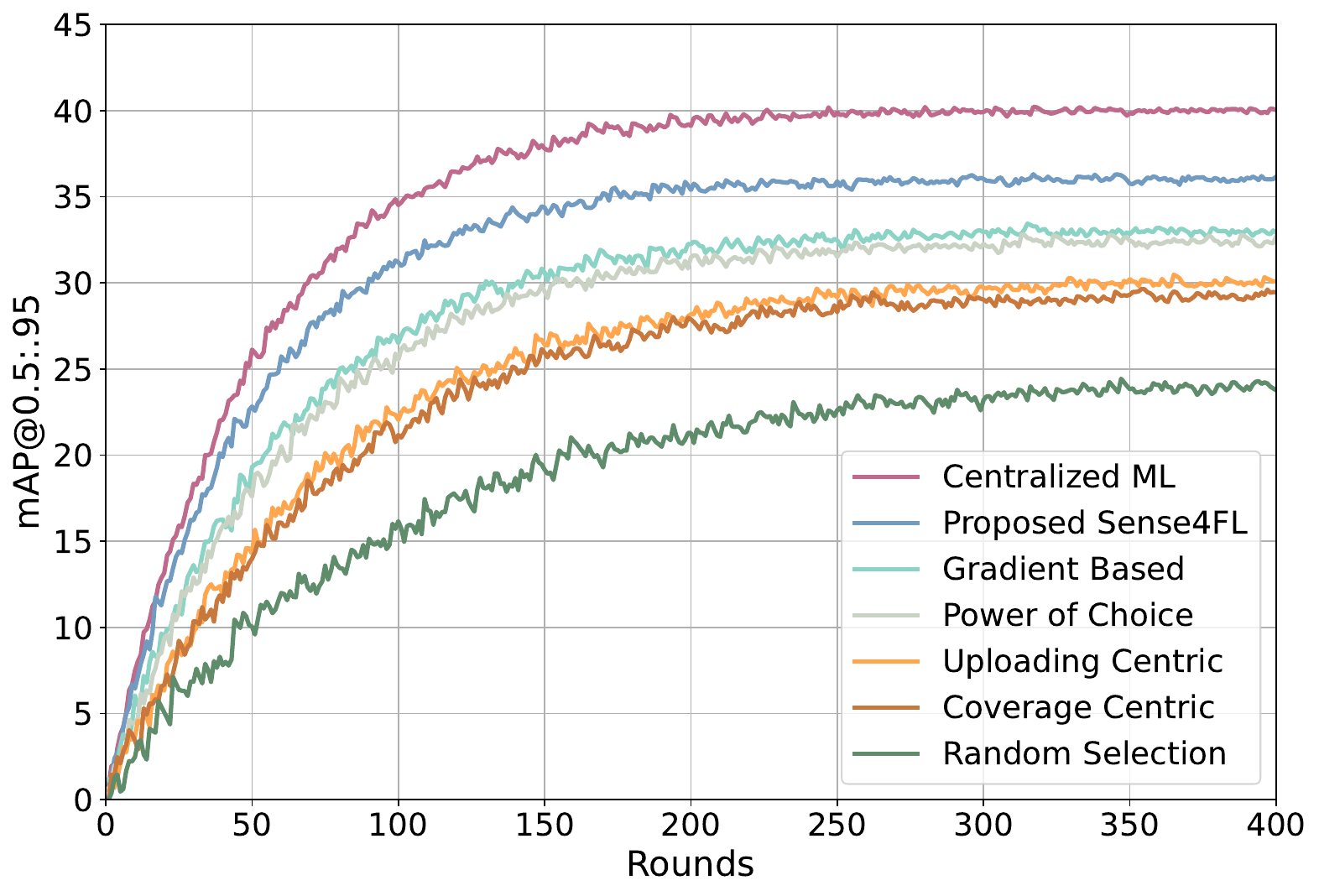}
  \caption{Testing accuracy versus rounds.}
\end{subfigure}

\medskip 
\centering
\begin{subfigure}[b]{0.32\textwidth}
  \includegraphics[width=\textwidth]{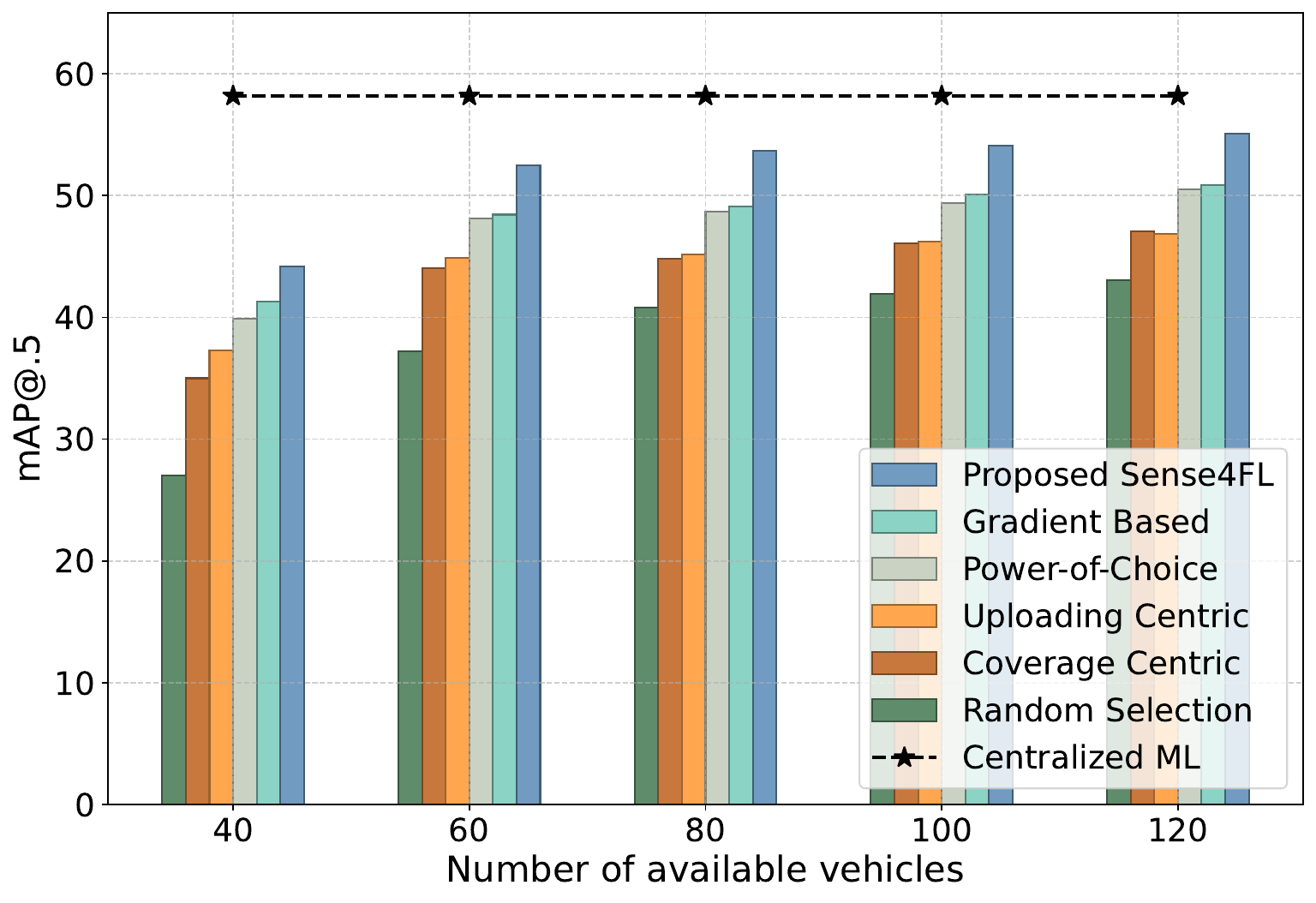}
  \caption{Testing accuracy versus the number of available vehicles.}
\end{subfigure}
\hfill
\begin{subfigure}[b]{0.32\textwidth}
  \includegraphics[width=\textwidth]{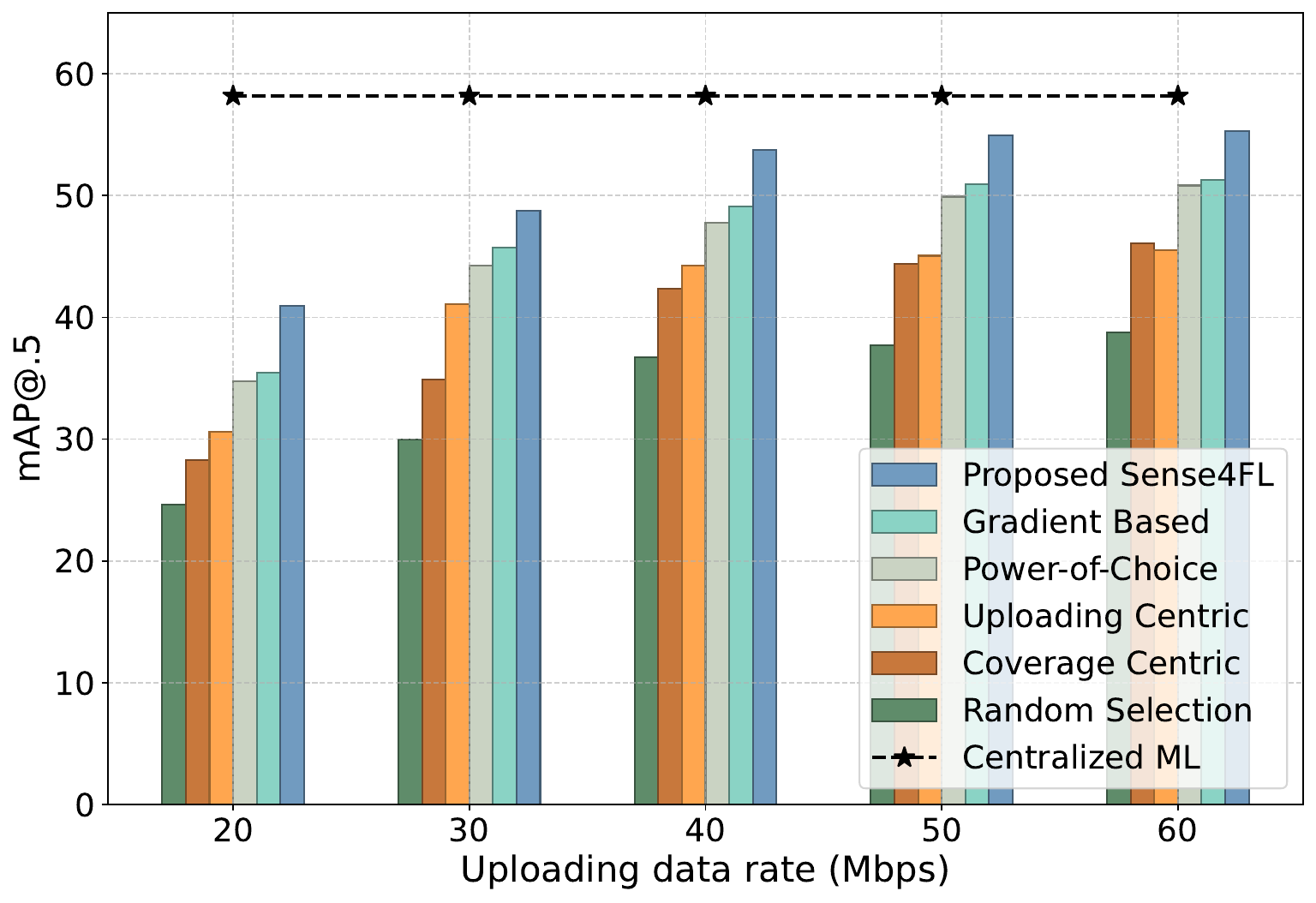}
  \caption{Testing accuracy versus the uploading data rate.}
\end{subfigure}
\hfill
\begin{subfigure}[b]{0.32\textwidth}
  \includegraphics[width=\textwidth]{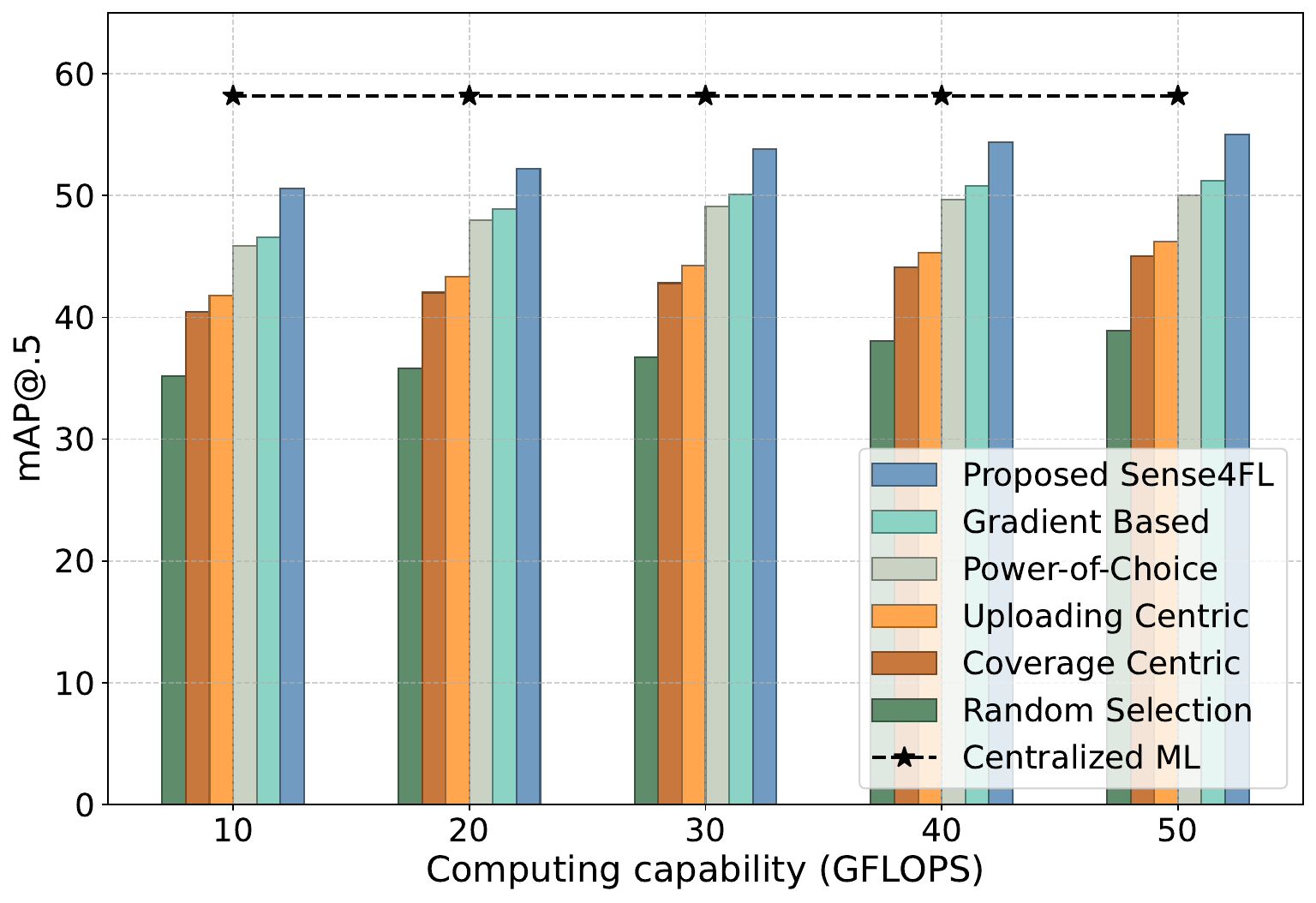}
  \caption{Testing accuracy versus the computing capability.}
\end{subfigure}
\caption{\rev{Experiment results for training from scratch in Singapore's One-North scenario in the nuImages dataset.}}\label{fig:training}
\vspace{-0.0cm}
\end{figure*}

Besides, unless specified otherwise, the default settings are provided as follows. \rev{The number of available vehicles is around 60-70 in Singapore’s One-North and 70-80 in Boston Seaport, and vehicles in Singapore’s One-North travel at speeds between 50-60 km/h, while those in Boston Seaport travel at  40-50 km/h. We set the maximum acceleration at 2.0 $m/s^2$, the maximum deceleration (braking) at 3.0 $m/s^2$ based on the Krauss model.} Each vehicle has $M_v = 2$ possible trajectories, which can be predicted based on its current location and orientation with historical traffic data. The number of vehicles to be selected is $S$ = 10. 
\rev{We consider 640*640 images with a color depth of 24 bits, and thus the required number of processing cycles for computing one sample is $c_v = 9.8304 \times 10 ^{7}$ \cite{shiUAV}.}
The computing capability of each vehicle is $f_v$ = 40 GFLOPS. The adopted YOLOv7 model, with 36.9 million parameters and using a 16-bit version, has a model size of $\omega = 5.904 \times 10^8$ bits.
The time constraint of the FL task for one round is $T^\text{task} = 80$ seconds.
The minimum expected data rate for uploading is 50 Mbps, and the time required for the BS to transmit the model to the FL server via a wired link is \( t^{\mathrm{trans}} \) = 1 second. Each selected vehicle performs $T=2$ steps local SGD updates before uploading, with the batch size $D_{\text{Batch}} = 32$. 
The learning rate for training from scratch is set to $\eta = 0.001$ and for adaptation is set to $\eta = 0.0001$.
For the hyperparameter, the Lipschitz parameter $\lambda_{\text{max}}$ can be estimated and is 0.01 in this model~\cite{wang2019adaptive}. \rev{The key parameters are summarized in Table \ref{table_para}.}

\begin{figure*}[t]
\centering
  \includegraphics[width=0.95\textwidth]{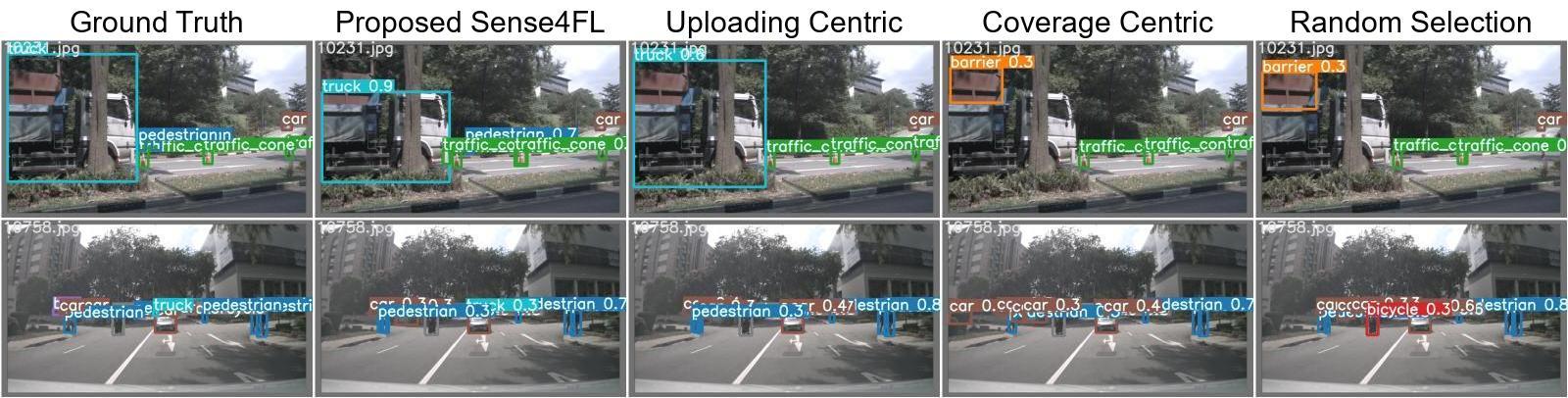}
  \caption{Illustration of object detection results.}\label{Fig:detection}
  \vspace{-0.0cm}
\end{figure*}


\begin{figure*}[t]
\centering
\begin{subfigure}[b]{0.323\textwidth}
  \includegraphics[width=\textwidth]{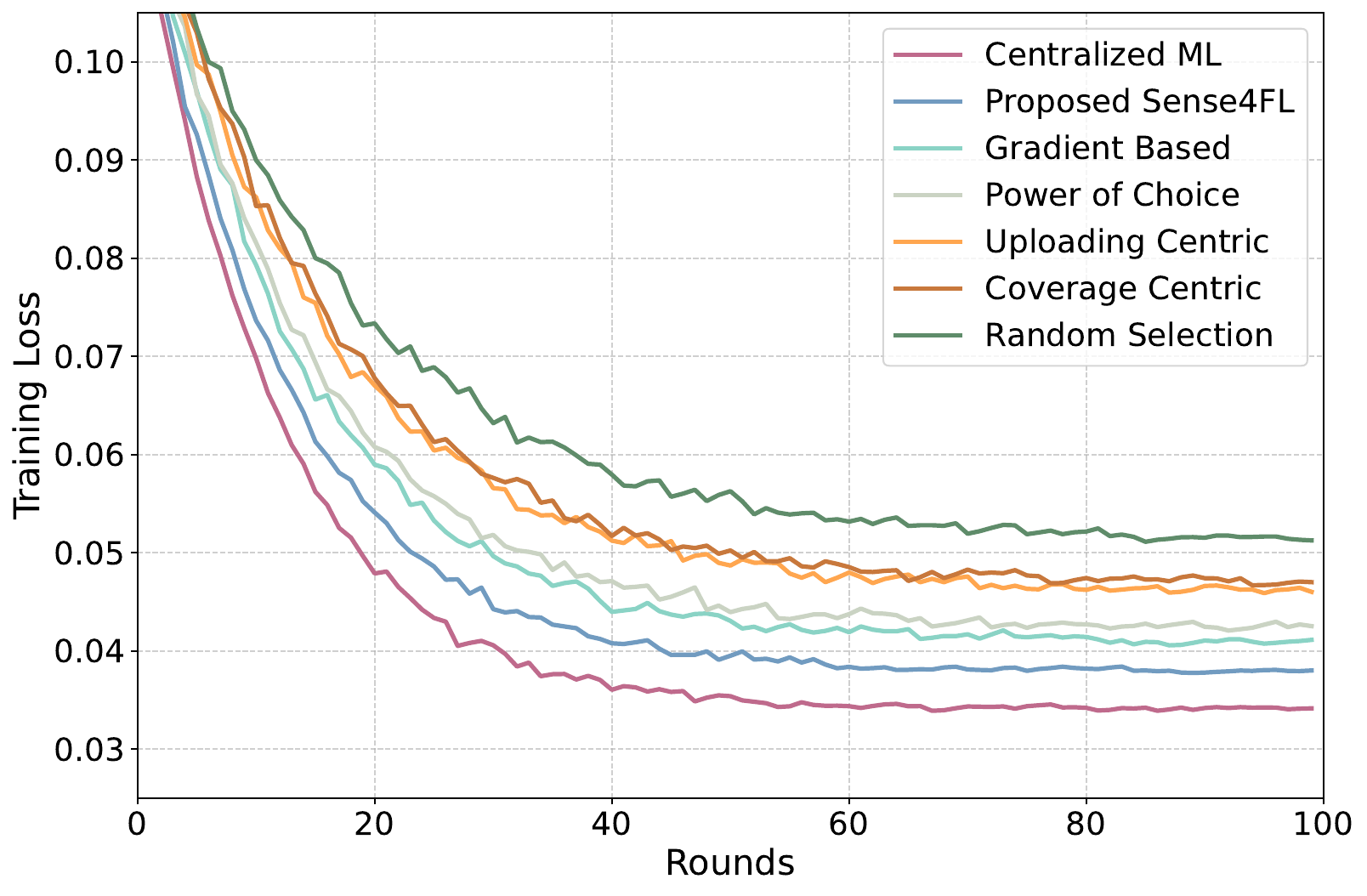}
  \caption{Training loss versus rounds.}
\end{subfigure}
\begin{subfigure}[b]{0.32\textwidth}
  \includegraphics[width=\textwidth]{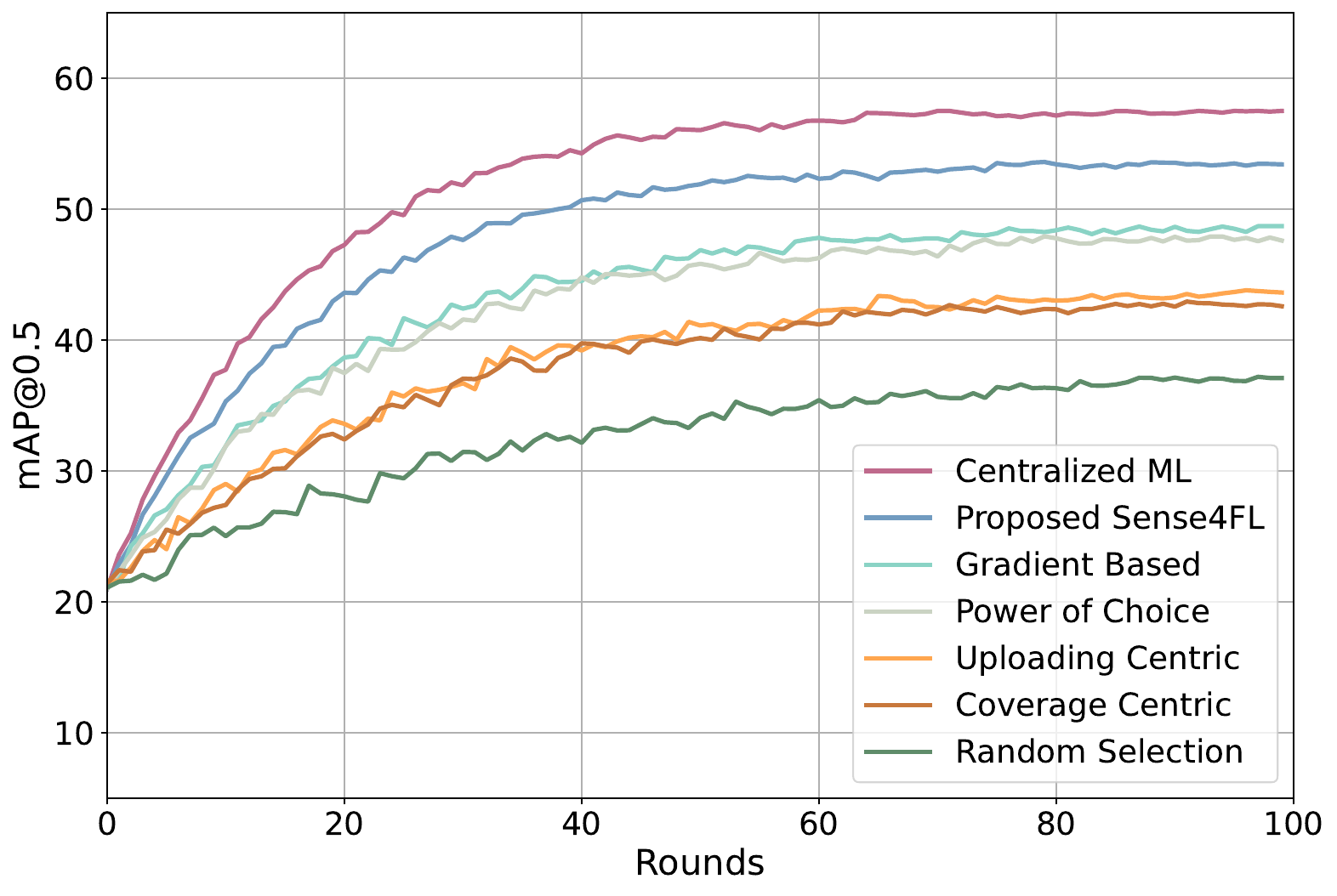}
  \caption{Testing accuracy versus rounds.}
\end{subfigure}
\begin{subfigure}[b]{0.32\textwidth}
  \includegraphics[width=\textwidth]{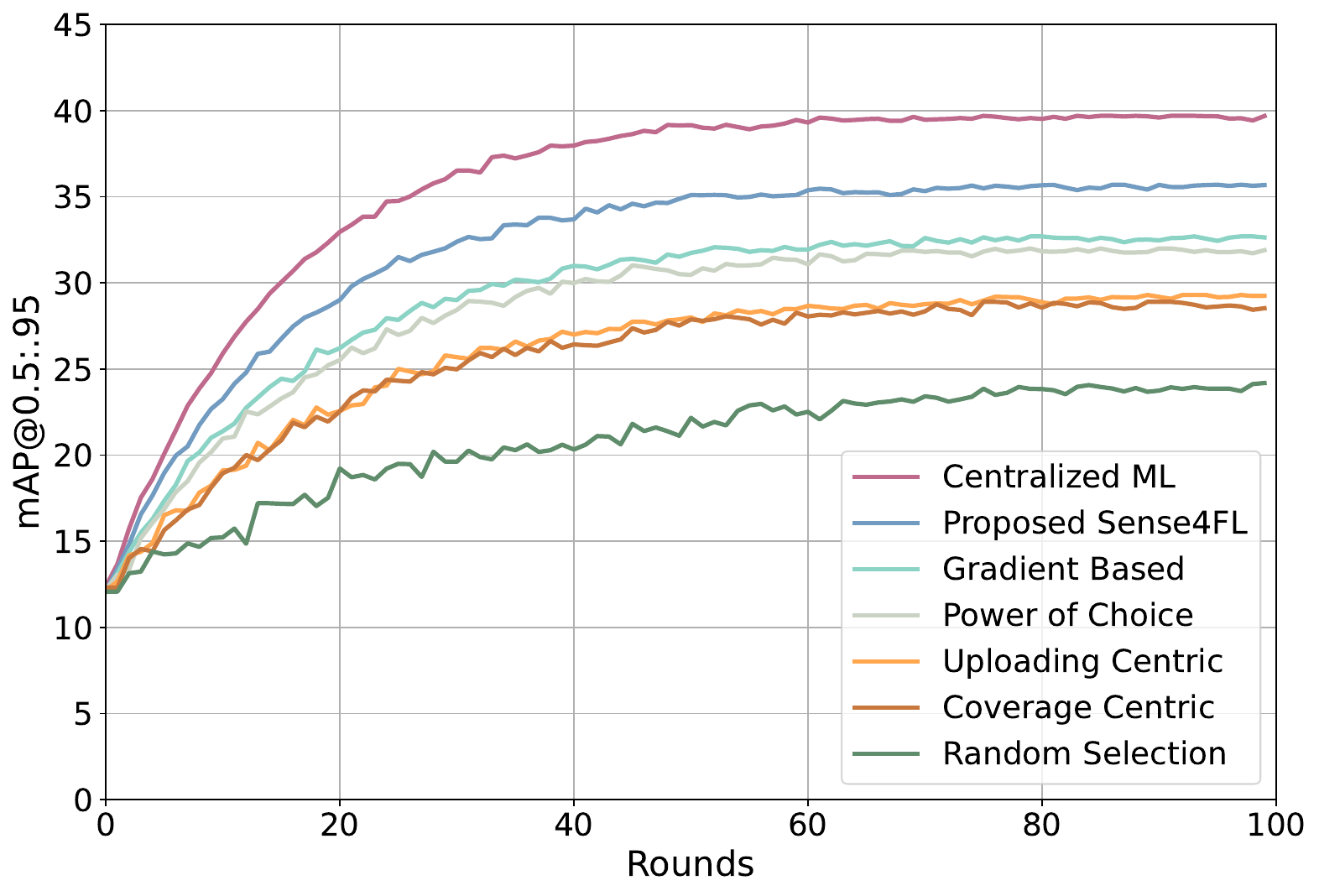}
  \caption{Testing accuracy versus rounds.}
\end{subfigure}
\centering
\begin{subfigure}[b]{0.32\textwidth}
  \includegraphics[width=\textwidth]{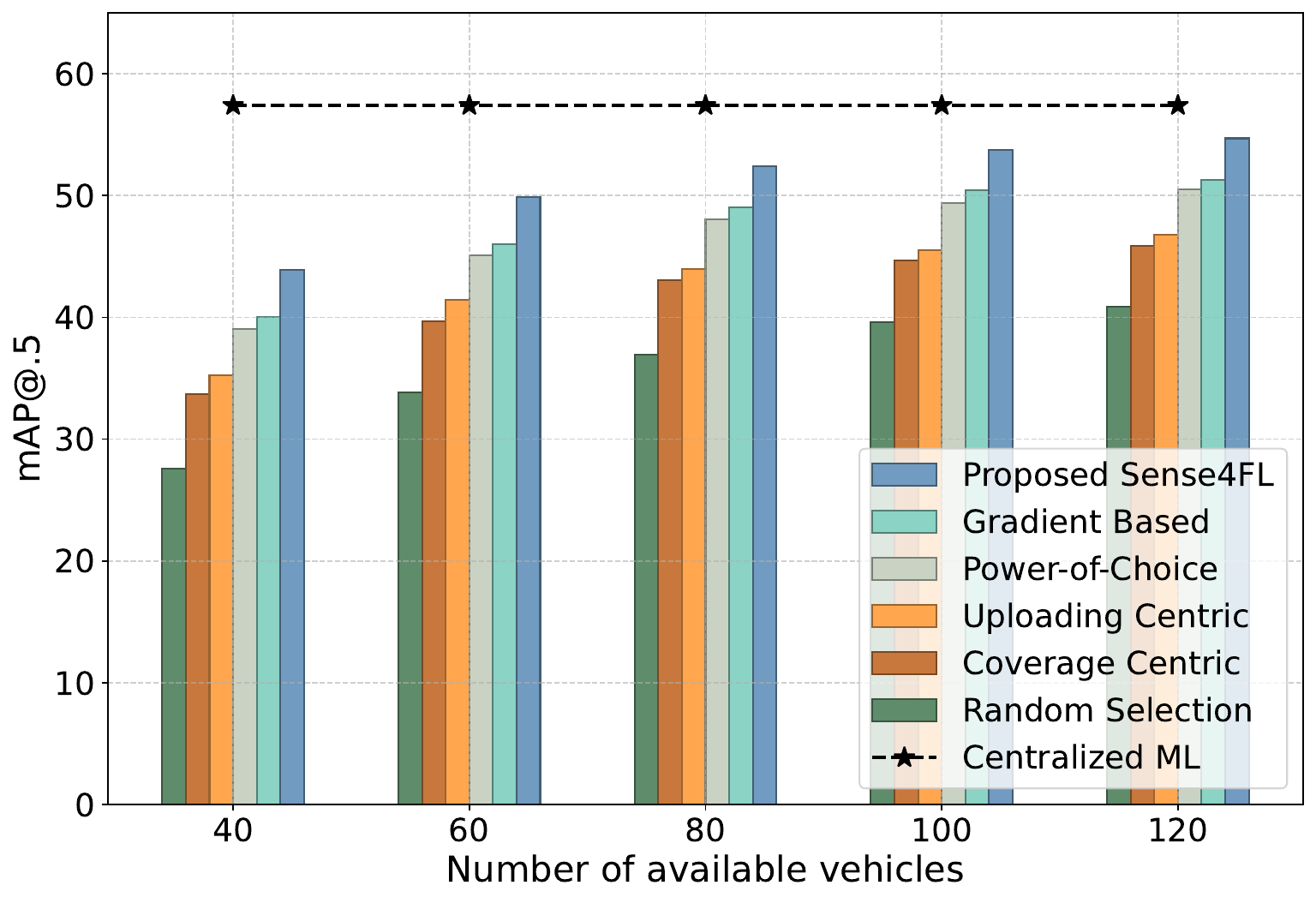}
  \caption{Testing accuracy versus the number of available vehicles.}
\end{subfigure}
\begin{subfigure}[b]{0.32\textwidth}
  \includegraphics[width=\textwidth]{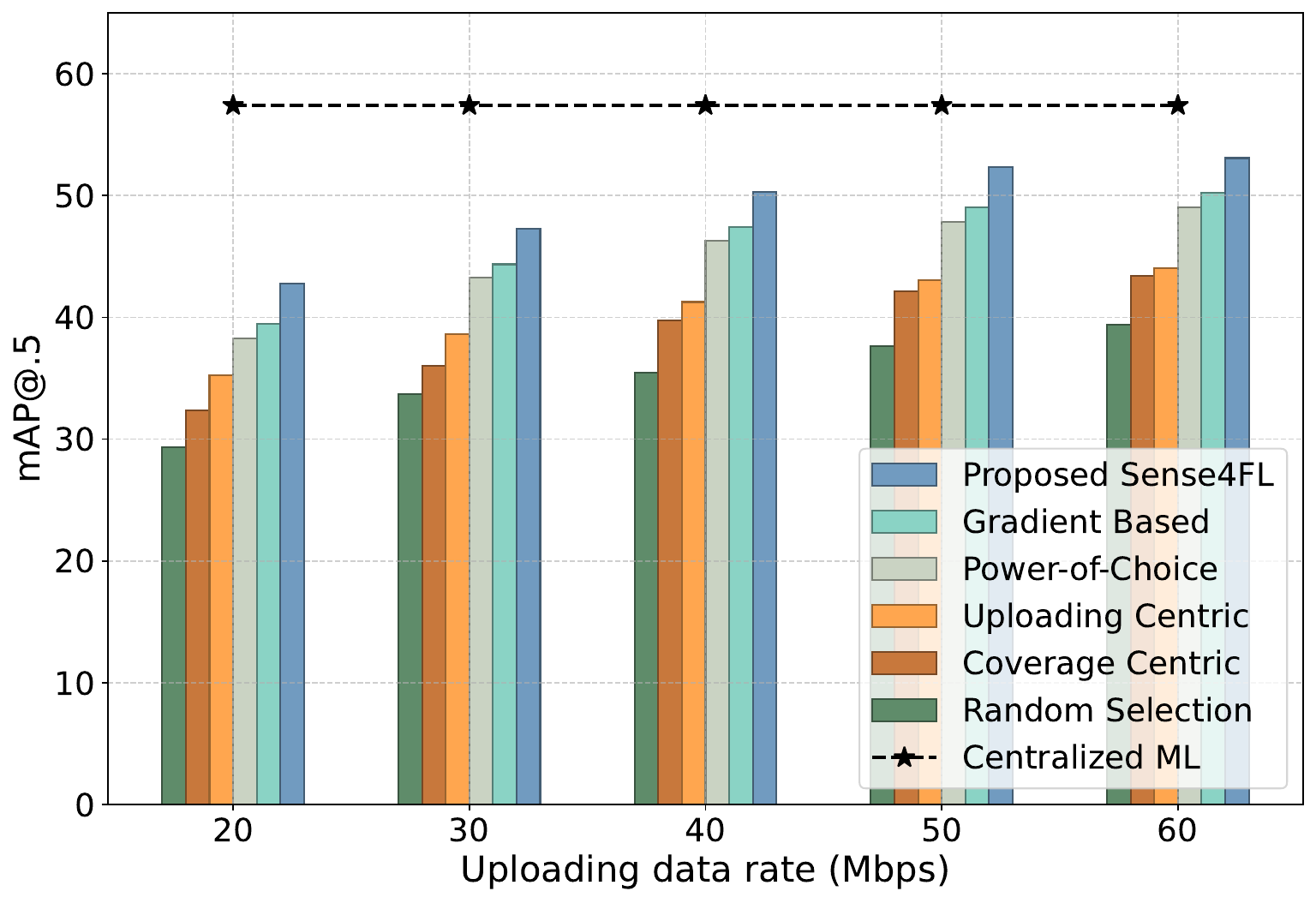}
  \caption{Testing accuracy versus the uploading data rate.}
\end{subfigure}
\begin{subfigure}[b]{0.32\textwidth}
  \includegraphics[width=\textwidth]{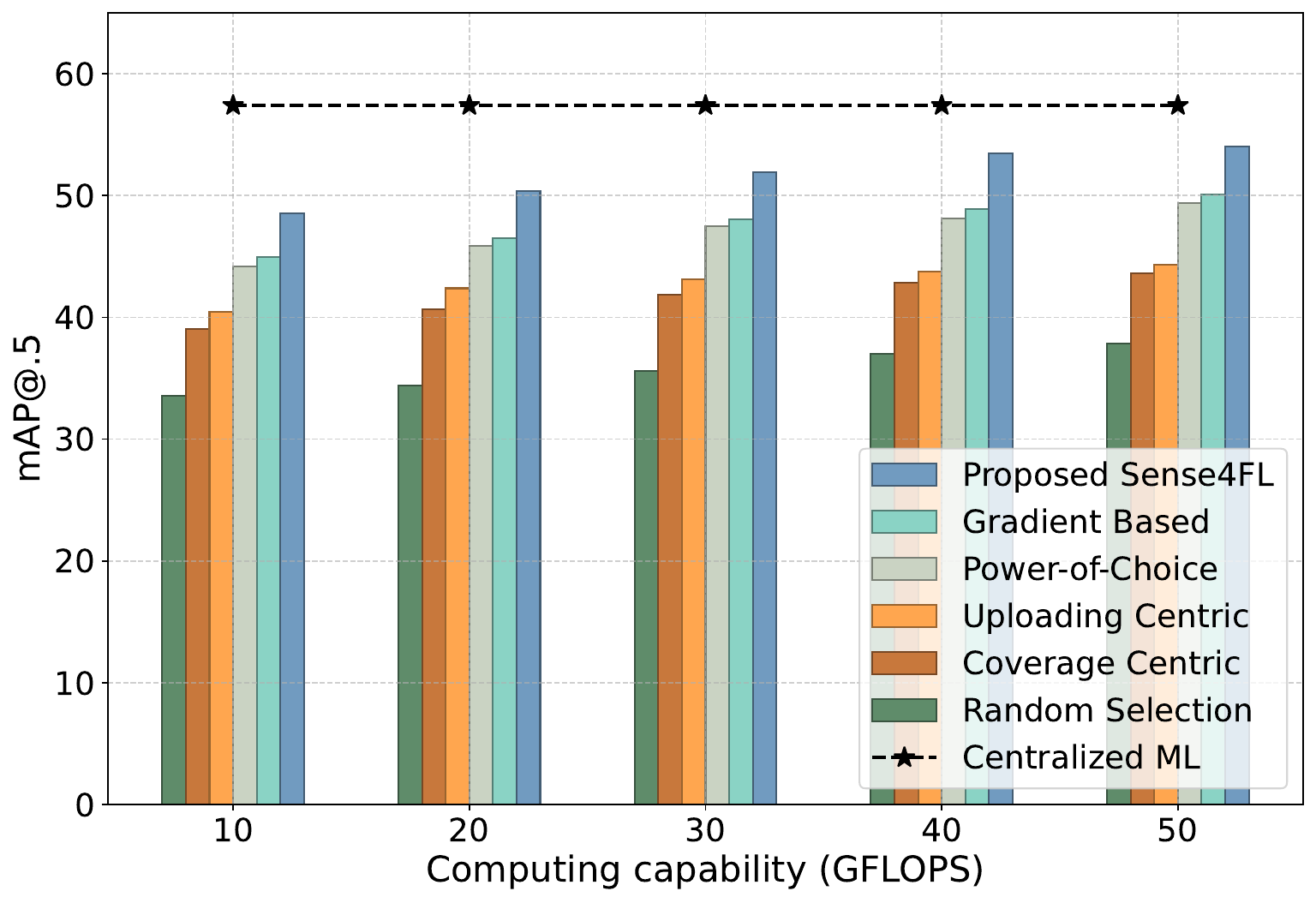}
  \caption{Testing accuracy versus the computing capability.}
\end{subfigure}
\caption{\rev{Experiment results for model adaptation. We adapt a model pre-trained in Singapore’s One-North to the Boston Seaport scenario.}}\label{fig:adapting}
\vspace{-0.3cm}
\end{figure*}

\subsection{Performance Evaluation of Sense4FL Framework}

To validate the effectiveness of the proposed Sense4FL framework, we compare it against several benchmark methods. 
\begin{itemize}
    \item \rev{\textbf{Gradient-based selection~\cite{marnissi2024client}.} This method selects vehicles with the highest norms of gradient values at each communication round. A vehicle stops data collection once the edge server has selected it.}

    \item \rev{\textbf{Power-of-Choice~\cite{cho2020client}.} This framework selects vehicles based on their local loss value. Upon selection by the edge server, the vehicle ceases its data collection process.}

    \item \textbf{Uploading-centric Selection~\cite{pervej2023resource}:} This method selects $S$ vehicles with the highest uploading probabilities from all available vehicles. Each vehicle stops collecting data {upon being selected.}

    \item \textbf{Coverage-centric Selection~\cite{7218644}:} This benchmark involves selecting $S$ vehicles and their data collection to maximize the number of covered street blocks. A street block is said to be covered as long as one vehicle collects training data from it. Hence, this approach can be formulated as a maximum coverage problem.

    \item \textbf{\rev{Random Selection~\cite{pmlr-v54-mcmahan17a}}:} In this method, we randomly select $S$ vehicles and their data collection in each round from all available vehicles to participate in FL training.



    \item \rev{\textbf{Centralized ML:} We also compare the results with the centralized machine learning (ML), which serves as the performance upper bound as the server can access all training data in this specific region.}

\end{itemize}

Fig. \ref{fig:training} illustrates the training loss and accuracy under different cases. The training loss is derived from the average of the local losses computed by each vehicle selected to participate in the FL process. The mean average precision (mAP) is measured by evaluating the global model at the end of each round on a separate test dataset stored on the FL server. 
As seen from Fig. \ref{fig:training} (a)-(c), the training performance of our Sense4FL scheme considerably outperforms other benchmarks. This is because our scheme largely overcomes the model bias resulting from the inappropriate selection of vehicles and their training data and, therefore, makes the object detection model better represent the region of interest. 
Fig. \ref{fig:training} (d)-(f) evaluate the performance by varying the network settings, including the number of available vehicles, the uploading data rate, and the computing capabilities. Intuitively, where there are more vehicles available, there exists more flexibility to select better vehicles with desired training data, thereby enhancing the object detection performance. Similarly, higher uploading data rates or more powerful onboard computing capabilities increase the probability of successful reception of models by reducing the communication-computing latency in each training round, thus leading to better training performance. We also find that the proposed Sense4FL outperforms the other baselines since the training data distribution and the uploading probabilities are both taken into account in vehicle selection. These baselines perform worse as they introduce model bias caused by the non-representative training datasets collected by vehicles.




Moreover, Fig. \ref{Fig:detection} provides visualized results of object detection for our Sense4FL framework and benchmarks. The results demonstrate that our approach is much closer to the ground truth than other methods. In contrast, the benchmarks exhibit notable misclassifications, including misidentifying trucks as barriers and pedestrians as bicycles. In addition, the benchmarks also fail to detect certain objects, leading to incomplete or inaccurate scene understanding.


\rev{In reality, autonomous driving companies can have a pre-trained model (say, from another city or from a cloud-based pre-training dataset) and then deploy the model in a new environment. To reflect this, we adapt the model pre-trained from Singapore's One-North area into another region, i.e., Boston Seaport.}
As shown in Fig. \ref{fig:adapting}, the phenomenon is similar to what we can observe from Fig. \ref{fig:training}. Specifically, the model performance of our scheme substantially outperforms other benchmarks in the model adaptation. We can take three insights from Fig. \ref{fig:adapting}. 
First, the AD model well-trained for one area may not perform very well under the other scenario, implying that a general AI model is not applicable to all cases and adapting is needed. 
Second, the time needed for convergence is much shorter. This highlights the feasibility of adapting an object detection model to achieve satisfactory performance in a new environment within a reasonable time frame.
At last, the proposed Sense4FL is a trajectory-dependent approach, which outperforms other methods, as it overcomes the model bias resulting from the inappropriate selection of vehicles.


\begin{table}[h]
\centering
\caption{\rev{Precision and Recall for different vehicle selection and data collection strategies, with the best results excluding Centralized ML shown in bold.}
}
\resizebox{\linewidth}{!}
{
\rev{
\setlength{\tabcolsep}{2.3mm}  
\begin{tabular}{lcccc}
\toprule
& \multicolumn{2}{c}{Singapore's One-North} & \multicolumn{2}{c}{Boston Seaport} \\
\cmidrule(lr){2-3}\cmidrule(lr){4-5}
 Method & {Precision} & {Recall} & Precision & Recall \\
 \midrule
Proposed Sense4FL   & \textbf{70.16} & \textbf{61.28} & \textbf{69.13} & \textbf{60.27} \\
Gradient Based  & 65.32 & 57.43 & 65.14 & 56.31 \\
Power-of-Choice  & 63.29 & 56.17 & 64.28 & 54.24 \\
Uploading Centric      & 59.86 & 47.81 & 58.23 & 46.18 \\
Coverage Centric    & 57.77 & 46.40 & 56.57 & 45.64 \\
Random Selection & 52.20 & 41.35 & 50.19 & 40.22 \\
Centralized ML      & 77.55 & 69.22 & 76.28 & 68.25 \\
\bottomrule
\end{tabular}
}
}
\label{tab:precision}
\end{table}


\begin{table}[h]
\centering
\caption{\rev{Effects of the data collection strategies, with the best results excluding Centralized ML shown in bold.}
}
\resizebox{\linewidth}{!}
{
\rev{
\setlength{\tabcolsep}{2.3mm}  
\begin{tabular}{lcccc}
\toprule
& \multicolumn{2}{c}{Singapore's One-North} & \multicolumn{2}{c}{Boston Seaport} \\
\cmidrule(lr){2-3}\cmidrule(lr){4-5}
 Method & {mAP@0.5} & mAP@0.5:.95 & mAP@0.5 & mAP@0.5:.95 \\
 \midrule
Proposed Sense4FL   & \textbf{55.41} & \textbf{36.23} & \textbf{52.83} & \textbf{35.76} \\
Full-data Collection   & 45.82 & 31.48 & 45.10 & 30.61 \\
Selection Only   & 49.05 & 33.89 & 47.23 & 33.10 \\
Centralized ML      & 58.38 & 40.12 & 57.42 & 39.76 \\
\bottomrule
\end{tabular}
}
}
\label{tab:ablation}
\end{table}

\rev{Table \ref{tab:precision} presents the precision and recall achieved by various vehicle selection and data collection strategies. The simulation results indicate that the Sense4FL framework outperforms all baselines in avoiding false positives (high precision) and minimizing false negatives (high recall).
Finally, the ablation studies on the data collection scheme in Table \ref{tab:ablation} reports two baselines: 1) \textbf{Full-data Collection:} Each vehicle collects all available data along its entire predicted trajectory, i.e., $g_{v,m} = N_{v,m}$, and we select $S$ vehicles; 2) \textbf{Selection Only:} This method selects $S$ vehicles but does not optimize data collection and each vehicle stops collecting data upon being selected, i.e., $g_{v,m}= c_{v,m}$. We can see that the Sense4FL framework consistently outperforms the baselines, demonstrating the salient advantage of joint optimization of vehicle selection and data collection.}

\section{Conclusion}
\label{sec:conclusion}

In this paper, we have proposed vehicular crowdsensing enabled federated learning to improve autonomous driving performance by considering the impact of vehicles' uncertain trajectories. We have first laid the theoretical foundation by establishing the convergence upper bound of federated learning in terms of vehicles' collected training data along their trajectories. Our theoretical analysis reveals that vehicle selection and data collection strategies have a significant influence on the training data distribution and, consequently, the performance of FL. Then, we have formulated the problem to minimize the training loss, which is equivalent to a combination of local and global earth mover's distances between vehicles' collected datasets and global datasets and developed an efficient algorithm to find the
solution with an approximation guarantee. 
Simulation results based on nuImages dataset have demonstrated the significance of Sense4FL for improving object detection performance under different driving scenarios and the superiority of our schemes compared to other benchmarks.

\rev{While in this paper, we choose the object detection task in autonomous driving as the subject of study, our proposed Sense4FL framework with trajectory-aware vehicle selection can be extended to other critical tasks, such as semantic segmentation and trajectory prediction, which can be left as future work.}


}


\bibliographystyle{IEEEtran}
\bibliography{reference}

\begin{thebibliography}{10}
\providecommand{\url}[1]{#1}
\csname url@samestyle\endcsname
\providecommand{\newblock}{\relax}
\providecommand{\bibinfo}[2]{#2}
\providecommand{\BIBentrySTDinterwordspacing}{\spaceskip=0pt\relax}
\providecommand{\BIBentryALTinterwordstretchfactor}{4}
\providecommand{\BIBentryALTinterwordspacing}{\spaceskip=\fontdimen2\font plus
\BIBentryALTinterwordstretchfactor\fontdimen3\font minus \fontdimen4\font\relax}
\providecommand{\BIBforeignlanguage}[2]{{%
\expandafter\ifx\csname l@#1\endcsname\relax
\typeout{** WARNING: IEEEtran.bst: No hyphenation pattern has been}%
\typeout{** loaded for the language `#1'. Using the pattern for}%
\typeout{** the default language instead.}%
\else
\language=\csname l@#1\endcsname
\fi
#2}}
\providecommand{\BIBdecl}{\relax}
\BIBdecl

\bibitem{xu2017internet}
W.~Xu, H.~Zhou, N.~Cheng, F.~Lyu, W.~Shi, J.~Chen, and X.~Shen, ``Internet of vehicles in big data era,'' \emph{IEEE/CAA J. Autom. Sin.}, vol.~5, no.~1, pp. 19--35, Jan. 2018.

\bibitem{chen2024vehicle}
X.~Chen, Y.~Deng, H.~Ding, G.~Qu, H.~Zhang, P.~Li, and Y.~Fang, ``Vehicle as a service ({VaaS}): Leverage vehicles to build service networks and capabilities for smart cities,'' \emph{IEEE Commun. Surveys Tuts.}, vol.~26, no.~3, pp. 2048--2081, 2024.

\bibitem{ma2025raise}
Y.~Ma, Z.~Fang, L.~Yuan, Y.~Deng, X.~Chen, and Y.~Fang, ``Raise: Optimizing ris placement to maximize task throughput in multi-server vehicular edge computing,'' \emph{arXiv preprint arXiv:2503.17708}, 2025.

\bibitem{Honda2021}
\BIBentryALTinterwordspacing
Honda, ``Honda to begin sales of legend with new honda sensing elite,'' 2021. [Online]. Available: \url{https://global.honda/en/newsroom/news/2021/4210304eng-legend.html}
\BIBentrySTDinterwordspacing

\bibitem{Mercedes2021}
\BIBentryALTinterwordspacing
A.~James, ``{Mercedes-Benz} {Drive Pilot} certified for use in {Nevada} – first {L}3 system approved for {US} highways,'' 2023. [Online]. Available: \url{https://www.autonomousvehicleinternational.com/news/adas/mercedes-benz-certifies-drive-pilot-for-use-in-nevada-becoming-the-first-l3-system-approved-for-us-highways.html}
\BIBentrySTDinterwordspacing

\bibitem{storm}
\BIBentryALTinterwordspacing
L.~Kolodny, ``{C}ruise robotaxis blocked a road in {S}an {F}rancisco after a storm downed trees and wires,'' 2023. [Online]. Available: \url{https://www.cnbc.com/2023/03/22/cruise-robotaxis-blocked-a-road-in-san-francisco-after-storm.html}
\BIBentrySTDinterwordspacing

\bibitem{Cruise}
\BIBentryALTinterwordspacing
A.~Roy, ``How {GM}'s {C}ruise robotaxi tech failures led it to drag pedestrian 20 feet,'' 2024. [Online]. Available: \url{https://www.reuters.com/business/autos-transportation/how-gms-cruise-robotaxi-tech-failures-led-it-drag-pedestrian-20-feet-2024-01-26/}
\BIBentrySTDinterwordspacing

\bibitem{fursa2021worsening}
I.~Fursa, E.~Fandi, V.~Musat, J.~Culley, E.~Gil, I.~Teeti, L.~Bilous, I.~V. Sluis, A.~Rast, and A.~Bradley, ``Worsening perception: Real-time degradation of autonomous vehicle perception performance for simulation of adverse weather conditions,'' \emph{arXiv preprint arXiv:2103.02760}, 2021.

\bibitem{pmlr-v54-mcmahan17a}
B.~McMahan, E.~Moore, D.~Ramage, S.~Hampson, and B.~A.~y. Arcas, ``Communication-efficient learning of deep networks from decentralized data,'' in \emph{Proc. Int. Conf. Artif. Intell. Statist.}, vol.~54, 2017, pp. 1273--1282.

\bibitem{vehiclerate}
\BIBentryALTinterwordspacing
C.~Browen and K.~Kershner, ``Keeping pace with in-vehicle data speed,'' 2022. [Online]. Available: \url{https://www.sae.org/news/2022/01/keeping-pace-with-in-vehicle-data-speed}
\BIBentrySTDinterwordspacing

\bibitem{Yolo2024}
\BIBentryALTinterwordspacing
Ultralytics, ``{Ultralytics YOLO Docs: YOLOv8},'' 2024. [Online]. Available: \url{https://docs.ultralytics.com/models/yolov8/}
\BIBentrySTDinterwordspacing

\bibitem{9928621}
A.~M. Elbir, B.~Soner, S.~Çöleri, D.~Gündüz, and M.~Bennis, ``Federated learning in vehicular networks,'' in \emph{Proc. IEEE Int. Mediterr. Conf. Commun. Netw. (MeditCom)}, Athens, Greece, Sep. 2022, pp. 72--77.

\bibitem{10205502}
Y.~Fu, C.~Li, F.~R. Yu, T.~H. Luan, and P.~Zhao, ``An incentive mechanism of incorporating supervision game for federated learning in autonomous driving,'' \emph{IEEE Trans. Intell. Transp. Syst.}, vol.~24, no.~12, pp. 14\,800--14\,812, Dec. 2023.

\bibitem{nishio2019client}
T.~Nishio and R.~Yonetani, ``Client selection for federated learning with heterogeneous resources in mobile edge,'' in \emph{Proc. IEEE Int. Conf. Commun. (ICC)}, Shanghai, China, Jul. 2019, pp. 1--7.

\bibitem{chen2022federated}
X.~Chen, G.~Zhu, Y.~Deng, and Y.~Fang, ``Federated learning over multihop wireless networks with in-network aggregation,'' \emph{IEEE Trans. Wireless Commun.}, vol.~21, no.~6, pp. 4622--4634, Jun. 2022.

\bibitem{wu2021fast}
H.~Wu and P.~Wang, ``Fast-convergent federated learning with adaptive weighting,'' \emph{IEEE Transactions on Cognitive Communications and Networking}, vol.~7, no.~4, pp. 1078--1088, 2021.

\bibitem{10850596}
S.~Asaad, P.~Wang, and H.~Tabassum, ``Over-the-air feel with integrated sensing: Joint scheduling and beamforming design,'' \emph{IEEE Transactions on Wireless Communications}, vol.~24, no.~4, pp. 3273--3288, 2025.

\bibitem{asaad2024joint}
S.~Asaad, H.~Tabassum, C.~Ouyang, and P.~Wang, ``Joint antenna selection and beamforming for massive mimo-enabled over-the-air federated learning,'' \emph{IEEE Transactions on Wireless Communications}, vol.~23, no.~8, pp. 8603--8618, 2024.

\bibitem{zarandi2021federated}
S.~Zarandi and H.~Tabassum, ``Federated double deep q-learning for joint delay and energy minimization in iot networks,'' in \emph{2021 IEEE International Conference on Communications Workshops (ICC Workshops)}.\hskip 1em plus 0.5em minus 0.4em\relax IEEE, 2021, pp. 1--6.

\bibitem{cho2022towards}
Y.~J. Cho, J.~Wang, and G.~Joshi, ``Towards understanding biased client selection in federated learning,'' in \emph{Int. Conf. Artif. Intell. Stat.}\hskip 1em plus 0.5em minus 0.4em\relax PMLR, 2022, pp. 10\,351--10\,375.

\bibitem{chen2020joint}
M.~Chen, Z.~Yang, W.~Saad, C.~Yin, H.~V. Poor, and S.~Cui, ``A joint learning and communications framework for federated learning over wireless networks,'' \emph{IEEE Trans. Wireless Commun.}, vol.~20, no.~1, pp. 269--283, 2020.

\bibitem{huang2020efficiency}
T.~Huang, W.~Lin, W.~Wu, L.~He, K.~Li, and A.~Y. Zomaya, ``An efficiency-boosting client selection scheme for federated learning with fairness guarantee,'' \emph{IEEE Trans. Parallel Distrib. Syst.}, vol.~32, no.~7, pp. 1552--1564, 2020.

\bibitem{9237168}
J.~Xu and H.~Wang, ``Client selection and bandwidth allocation in wireless federated learning networks: A long-term perspective,'' \emph{IEEE Trans. Wireless Commun.}, vol.~20, no.~2, pp. 1188--1200, 2021.

\bibitem{9647925}
Y.~Deng, F.~Lyu, J.~Ren, H.~Wu, Y.~Zhou, Y.~Zhang, and X.~Shen, ``Auction: Automated and quality-aware client selection framework for efficient federated learning,'' \emph{IEEE Trans. Parallel Distrib. Syst.}, vol.~33, no.~8, pp. 1996--2009, 2022.

\bibitem{zhu2022online}
H.~Zhu, Y.~Zhou, H.~Qian, Y.~Shi, X.~Chen, and Y.~Yang, ``Online client selection for asynchronous federated learning with fairness consideration,'' \emph{IEEE Trans. Wireless Commun.}, vol.~22, no.~4, pp. 2493--2506, 2022.

\bibitem{wu2022node}
H.~Wu and P.~Wang, ``Node selection toward faster convergence for federated learning on non-iid data,'' \emph{IEEE Transactions on Network Science and Engineering}, vol.~9, no.~5, pp. 3099--3111, 2022.

\bibitem{8964354}
D.~Ye, R.~Yu, M.~Pan, and Z.~Han, ``Federated learning in vehicular edge computing: A selective model aggregation approach,'' \emph{IEEE Access}, vol.~8, pp. 23\,920--23\,935, 2020.

\bibitem{pervej2023resource}
M.~F. Pervej, R.~Jin, and H.~Dai, ``Resource constrained vehicular edge federated learning with highly mobile connected vehicles,'' \emph{IEEE J. Sel. Areas Commun.}, vol.~41, no.~6, pp. 1825--1844, Jun. 2023.

\bibitem{zhao2022participant}
J.~Zhao, X.~Chang, Y.~Feng, C.~H. Liu, and N.~Liu, ``Participant selection for federated learning with heterogeneous data in intelligent transport system,'' \emph{IEEE Trans. Intell. Transp. Syst.}, vol.~24, no.~1, pp. 1106--1115, 2022.

\bibitem{zhao2021system}
Z.~Zhao, J.~Xia, L.~Fan, X.~Lei, G.~K. Karagiannidis, and A.~Nallanathan, ``System optimization of federated learning networks with a constrained latency,'' \emph{IEEE Trans. Veh. Technol.}, vol.~71, no.~1, pp. 1095--1100, 2021.

\bibitem{xiao2021vehicle}
H.~Xiao, J.~Zhao, Q.~Pei, J.~Feng, L.~Liu, and W.~Shi, ``Vehicle selection and resource optimization for federated learning in vehicular edge computing,'' \emph{IEEE Trans. Intell. Transp. Syst.}, vol.~23, no.~8, pp. 11\,073--11\,087, Aug 2021.

\bibitem{zhang2023vehicle}
X.~Zhang, Z.~Chang, T.~Hu, W.~Chen, X.~Zhang, and G.~Min, ``Vehicle selection and resource allocation for federated learning-assisted vehicular network,'' \emph{IEEE Trans. Mobile Comput.}, vol.~23, no.~5, pp. 3817--3829, May 2024.

\bibitem{10643168}
C.~Zhang, W.~Zhang, Q.~Wu, P.~Fan, Q.~Fan, J.~Wang, and K.~B. Letaief, ``Distributed deep reinforcement learning-based gradient quantization for federated learning enabled vehicle edge computing,'' \emph{IEEE Internet of Things Journal}, vol.~12, no.~5, pp. 4899--4913, 2025.

\bibitem{zheng2023autofed}
T.~Zheng, A.~Li, Z.~Chen, H.~Wang, and J.~Luo, ``Autofed: Heterogeneity-aware federated multimodal learning for robust autonomous driving,'' in \emph{Proc. 29th Annu. Int. Conf. Mobile Comput. Netw. (MobiCom)}, New York, NY, USA, Jul. 2023, pp. 1--15.

\bibitem{zhao2018federated}
\BIBentryALTinterwordspacing
Y.~Zhao, M.~Li, L.~Lai, N.~Suda, D.~Civin, and V.~Chandra, ``Federated learning with non-iid data,'' 2018. [Online]. Available: \url{http://arxiv.org/abs/1806.00582}
\BIBentrySTDinterwordspacing

\bibitem{li2019convergence}
X.~Li, K.~Huang, W.~Yang, S.~Wang, and Z.~Zhang, ``On the convergence of fedavg on non-iid data,'' \emph{arXiv preprint arXiv:1907.02189}, 2019.

\bibitem{yousefi2008analytical}
S.~Yousefi, E.~Altman, R.~El-Azouzi, and M.~Fathy, ``Analytical model for connectivity in vehicular ad hoc networks,'' \emph{IEEE Trans. Veh. Technol.}, vol.~57, no.~6, pp. 3341--3356, Nov. 2008.

\bibitem{abuelenin2014empirical}
S.~M. Abuelenin and A.~Y. Abul-Magd, ``Empirical study of traffic velocity distribution and its effect on vanets connectivity,'' in \emph{Proc. Int. Conf. Connected Vehicles Expo (ICCVE)}, Nov. 2014, pp. 391--395.

\bibitem{quemeneur2024fedpylot}
\BIBentryALTinterwordspacing
C.~Quéméneur and S.~Cherkaoui, ``Fedpylot: Navigating federated learning for real-time object detection in internet of vehicles,'' 2024. [Online]. Available: \url{https://arxiv.org/abs/2406.03611}
\BIBentrySTDinterwordspacing

\bibitem{wang2023yolov7}
C.-Y. Wang, A.~Bochkovskiy, and H.-Y.~M. Liao, ``{YOLOv7}: Trainable bag-of-freebies sets new state-of-the-art for real-time object detectors,'' in \emph{Proc. IEEE Conf. Comput. Vis. Pattern Recog. (CVPR)}, Jun. 2023, pp. 7464--7475.

\bibitem{stich2018local}
\BIBentryALTinterwordspacing
S.~U. Stich, ``Local {SGD} converges fast and communicates little,'' 2019. [Online]. Available: \url{https://arxiv.org/abs/1805.09767}
\BIBentrySTDinterwordspacing

\bibitem{wang2019adaptive}
S.~Wang, T.~Tuor, T.~Salonidis, K.~K. Leung, C.~Makaya, T.~He, and K.~Chan, ``Adaptive federated learning in resource constrained edge computing systems,'' \emph{IEEE J. Sel. Areas Commun.}, vol.~37, no.~6, pp. 1205--1221, Jun. 2019.

\bibitem{zhang2024coalitional}
N.~Zhang, Q.~Ma, W.~Mao, and X.~Chen, ``{Coalitional FL}: Coalition formation and selection in federated learning with heterogeneous data,'' \emph{IEEE Trans. Mobile Comput.}, vol.~23, no.~11, pp. 10\,494--10\,508, 2024.

\bibitem{bubeck2015convex}
S.~Bubeck \emph{et~al.}, ``Convex optimization: Algorithms and complexity,'' \emph{Foundations and Trends{\textregistered} in Machine Learning}, vol.~8, no. 3-4, pp. 231--357, 2015.

\bibitem{shiUAV}
M.~Fu, Y.~Shi, and Y.~Zhou, ``Federated learning via unmanned aerial vehicle,'' \emph{IEEE Trans. Wireless Commun.}, vol.~23, no.~4, pp. 2884--2900, Apr. 2024.

\bibitem{bretthauer2002nonlinear}
K.~M. Bretthauer and B.~Shetty, ``The nonlinear knapsack problem-algorithms and applications,'' \emph{Eur. J. Oper. Res}, vol. 138, no.~3, pp. 459--472, May 2002.

\bibitem{bellman1966dynamic}
R.~Bellman, ``Dynamic programming,'' \emph{Science}, vol. 153, no. 3731, pp. 34--37, 1966.

\bibitem{caesar2020nuscenes}
H.~Caesar, V.~Bankiti, A.~H. Lang, S.~Vora, V.~E. Liong, Q.~Xu, A.~Krishnan, Y.~Pan, G.~Baldan, and O.~Beijbom, ``nuscenes: A multimodal dataset for autonomous driving,'' in \emph{Proc. IEEE Conf. Comput. Vis. Pattern Recog. (CVPR)}, Jun. 2020, pp. 11\,621--11\,631.

\bibitem{marnissi2024client}
O.~Marnissi, H.~E. Hammouti, and E.~H. Bergou, ``Client selection in federated learning based on gradients importance,'' in \emph{AIP Conference Proceedings}, vol. 3034, no.~1.\hskip 1em plus 0.5em minus 0.4em\relax AIP Publishing LLC, 2024, p. 100005.

\bibitem{cho2020client}
Y.~J. Cho, J.~Wang, and G.~Joshi, ``Client selection in federated learning: Convergence analysis and power-of-choice selection strategies,'' \emph{arXiv preprint arXiv:2010.01243}, 2020.

\bibitem{7218644}
Z.~He, J.~Cao, and X.~Liu, ``High quality participant recruitment in vehicle-based crowdsourcing using predictable mobility,'' in \emph{2015 Proc. IEEE Conf. Comput. Commun. (INFOCOM)}, 2015, pp. 2542--2550.

\end{thebibliography}

\newpage

\appendix
\subsection{Proof of Theorem \ref{theo:FL}}\label{app:converge}
To analyze the convergence performance of Sense4FL and characterize the relationship between training loss and data distribution, we assume an idealized centralized machine learning (CML) where the data distribution is identical to that of Sense4FL for AD in our paper, i.e., it matches the data distribution across all street blocks in the coverage region. 
Denoting the model of CML in the \(k\)-th round as \(\mathbf{w}_\mathrm{c}^{(k)}\), the loss function can be expressed by
\begin{equation}
\mathcal{F}(\mathbf{w}_\mathrm{c}^{(k)})=\sum_{b=1}^B l_b\sum_{i=1}^C p_{b}^{i} \mathbb{E}_{\mathbf{x}_b^i}\left[ f\left(\mathbf{w}_\mathrm{c}^{(k)}, \mathbf{x}_b^i\right)\right].
\end{equation}

The CML also performs a \(T\)-step SGD update. In the \(k\)-th round, CML updates at step $t$ as follows
\begin{equation} \label{eq:wc_update}
\begin{aligned}
\mathbf{w}_{\mathrm{c}}^{(k),t+1}=&\mathbf{w}_{\mathrm{c}}^{(k),t}- \eta \sum_{b=1}^B l_b \sum_{i=1}^C p_{b}^{i} \nabla_\mathbf{w} \mathbb{E}_{\mathbf{x}_b^i}\left[f\left(\mathbf{w}_{\mathrm{c}}^{(k),t},\mathbf{x}_b^i\right)\right],
\end{aligned}
\end{equation}
where $\eta$ is the learning rate.
It can be observed that the primary difference between \eqref{eq:wv_update} and \eqref{eq:wc_update} lies in the data distribution, i.e., $ p_{v,m}^{i}$ and $ p_{b}^{i}$, which means that EMD is a good metric to quantify the weight divergence and thus the model accuracy.


We define $\gamma^{(k),t}\triangleq \mathcal{F}(\mathbf{w}_{\mathrm{c}}^{(k),t})-\mathcal{F}(\mathbf{w}^\star)$.
According to the convergence lower bound of gradient descent in Theorem 3.14 in~\cite{bubeck2015convex},  we always have
\begin{equation}
    \gamma^{(k),t}= \mathcal{F}(\mathbf{w}_{\mathrm{c}}^{(k),t})-\mathcal{F}(\mathbf{w}^\star) >0, ~\forall t, ~\forall k.
\end{equation}



Now, we analyze the divergence between $\mathbf{w}_{\mathrm{f}}^{(k),T}$ and $\mathbf{w}_{\mathrm{c}}^{(k),T}$.
By defining $\xi_{v,m}^{(k)}\triangleq \frac{z_{v,m}^{(k)}e_{v,m}^{(k)}}{\sum_{m=1}^{M_v^{(k)}}z_{v,m}^{(k)}e_{v,m}^{(k)}}$ to denote the weighting factor of $\mathbf{w}_{v,m}^{(k),T}$ and according to the local SGD update process \eqref{eq:wv_update} and \eqref{eq:wc_update}, we can obtain
\begin{equation}
    \begin{aligned}
   &\Vert\mathbf{w}_{\mathrm{f}}^{(k),T}-\mathbf{w}_{\mathrm{c}}^{(k),T}\Vert\\
    =&\|\sum_{v=1}^{ V^{(k)}}\frac{a_{v}^{(k)} \rho_v^{(k)}}{\sum_{v=1}^{ V^{(k)}}a_{v}^{(k)} \rho_v^{(k)}}\sum_{m=1}^{M_v^{(k)}}\xi_{v,m}^{(k)}(\mathbf{w}_{v,m}^{(k),T-1}-\eta\\
    &\sum_{i=1}^C p_{v,m}^{i,(k)}\nabla_{\mathbf{w}} \mathbb{E}_{\mathbf{x}_{v,m}^i}[f(\mathbf{w}_{v,m}^{(k),T-1}, \mathbf{x}_{v,m}^i)]) -(\mathbf{w}_{\mathrm{c}}^{(k),T-1}\\
    &-\eta \sum_{b=1}^B l_b \sum_{i=1}^C p_{b}^{i} \nabla_\mathbf{w} \mathbb{E}_{\mathbf{x}_b^i}[f(\mathbf{w}_{\mathrm{c}}^{(k),T-1}, \mathbf{x}_b^i)])\|\\
    {\leq}&\|\sum_{v=1}^{ V^{(k)}}\frac{a_{v}^{(k)} \rho_v^{(k)}}{\sum_{v=1}^{ V^{(k)}}a_{v}^{(k)} \rho_v^{(k)}}\sum_{m=1}^{M_v^{(k)}}\xi_{v,m}^{(k)}\mathbf{w}_{v,m}^{(k),T-1}-\mathbf{w}_{\mathrm{c}}^{(k),T-1}\|+
    \eta\\
    &\|\sum_{v=1}^{ V^{(k)}}\frac{a_{v}^{(k)} \rho_v^{(k)}}{\sum_{v=1}^{ V^{(k)}}a_{v}^{(k)} \rho_v^{(k)}}\sum_{m=1}^{M_v^{(k)}}\xi_{v,m}^{(k)}(\sum_{i=1}^C p_{v,m}^{i,(k)}\nabla_{\mathbf{w}} \mathbb{E}_{\mathbf{x}_{v,m}^i} \\
    &[f(\mathbf{w}_{v,m}^{(k),T-1}, \mathbf{x}_{v,m}^i)]  
    -\sum_{b=1}^B l_b\sum_{i=1}^C p_{b}^{i} \nabla_\mathbf{w} \mathbb{E}_{\mathbf{x}_b^i}[f(\mathbf{w}_{\mathrm{c}}^{(k),T-1},\mathbf{x}_b^i)])\|.
    \end{aligned}
\end{equation}
Then, we use $\sum_{v=1}^{ V^{(k)}}\frac{a_{v}^{(k)} \rho_v^{(k)}}{\sum_{v=1}^{ V^{(k)}}a_{v}^{(k)} \rho_v^{(k)}}\sum_{m=1}^{M_v^{(k)}}\xi_{v,m}^{(k)}\sum_{i=1}^C p_{v,m}^{i,(k)}$ $\nabla_{\mathbf{w}} \mathbb{E}_{\mathbf{x}_{v,m}^i}[f(\mathbf{w}_{\mathrm{c}}^{(k),T-1}, \mathbf{x}_{v,m}^i)]$ as an intermediate item and obtain
\begin{equation}
    \begin{aligned}
    &\Vert\mathbf{w}_{\mathrm{f}}^{(k),T}-\mathbf{w}_{\mathrm{c}}^{(k),T}\Vert\\
    \leq&\|\sum_{v=1}^{ V^{(k)}}\frac{a_{v}^{(k)} \rho_v^{(k)}}{\sum_{v=1}^{ V^{(k)}}a_{v}^{(k)} \rho_v^{(k)}}\sum_{m=1}^{M_v^{(k)}}\xi_{v,m}^{(k)}\mathbf{w}_{v,m}^{(k),T-1}-\mathbf{w}_{\mathrm{c}}^{(k),T-1}\|+
    \eta \\
    &\|\sum_{v=1}^{ V^{(k)}}\frac{a_{v}^{(k)} \rho_v^{(k)}}{\sum_{v=1}^{ V^{(k)}}a_{v}^{(k)} \rho_v^{(k)}}\sum_{m=1}^{M_v^{(k)}}\xi_{v,m}^{(k)}\sum_{i=1}^C p_{v,m}^{i,(k)}(\nabla_{\mathbf{w}} \mathbb{E}_{\mathbf{x}_{v,m}^i}\\
    &[f(\mathbf{w}_{v,m}^{(k),T-1}, \mathbf{x}_{v,m}^i)] -\nabla_{\mathbf{w}} \mathbb{E}_{\mathbf{x}_{v,m}^i}[f(\mathbf{w}_{\mathrm{c}}^{(k),T-1}, \mathbf{x}_{v,m}^i)]) +\\
    &\sum_{v=1}^{ V^{(k)}}\frac{a_{v}^{(k)} \rho_v^{(k)}}{\sum_{v=1}^{ V^{(k)}}a_{v}^{(k)} \rho_v^{(k)}}\sum_{m=1}^{M_v^{(k)}}\xi_{v,m}^{(k)}(\sum_{i=1}^C p_{v,m}^{i,(k)}- \sum_{b=1}^B l_b \sum_{i=1}^C p_{b}^{i}) \\
    &\nabla_\mathbf{w} \mathbb{E}_{\mathbf{x}_b^i}[f(\mathbf{w}_{\mathrm{c}}^{(k),T-1},\mathbf{x}_b^i)]\|\\
    \overset{(a)}{\leq}&\sum_{v=1}^{ V^{(k)}}\frac{a_{v}^{(k)} \rho_v^{(k)}}{\sum_{v=1}^{ V^{(k)}}a_{v}^{(k)} \rho_v^{(k)}}\sum_{m=1}^{M_v^{(k)}}\xi_{v,m}^{(k)}
    (1+\eta \sum_{i=1}^C p_{v,m}^{i,(k)}\lambda_i) \\
    &\|\mathbf{w}_{v,m}^{(k),T-1}-\mathbf{w}_{\mathrm{c}}^{(k),T-1}\|+\eta \mu_{\max}(\mathbf{w}_{\mathrm{c}}^{(k),T-1})\\& \sum_{i=1}^C \|\sum_{v=1}^{ V^{(k)}}\frac{a_{v}^{(k)} \rho_v^{(k)}}{\sum_{v=1}^{ V^{(k)}}a_{v}^{(k)} \rho_v^{(k)}}\sum_{m=1}^{M_v^{(k)}}\xi_{v,m}^{(k)}p_{v,m}^{i,(k)}- \sum_{b=1}^B l_b p_{b}^{i}\|,
    \end{aligned}
\end{equation}
where inequality $(a)$ holds because of Assumption 4 and $\mu_{\max}(\mathbf{w}_{\mathrm{c}}^{(k), T-1})\triangleq \max_{i=1}^C\|\nabla_\mathbf{w} \mathbb{E}_{\mathbf{x}_b^i}[f(\mathbf{w}_{\mathrm{c}}^{(k), T-1},\mathbf{x}_b^i)]\|$ is defined to represent the maximum norm of the expected gradient for each class of model $\mathbf{w}_{\mathrm{c}}^{(k), T-1}$ at the $k$-th round.

In the following, we focus on the term $\|\mathbf{w}_{v,m}^{(k),T-1}-\mathbf{w}_{\mathrm{c}}^{(k),T-1}\|$.
Similarly, defining $\theta_{v,m}^{(k)} \triangleq 1+\eta \sum_{i=1}^C p_{v,m}^{i,(k)}\lambda_i$,
we can get
\begin{equation}
    \begin{aligned}
    &\|\mathbf{w}_{v,m}^{(k),T-1}-\mathbf{w}_{\mathrm{c}}^{(k),T-1}\|\\
    \leq&  \theta_{v,m}^{(k)}\|\mathbf{w}_{v,m}^{(k),T-2}-\mathbf{w}_{\mathrm{c}}^{(k),T-2}\|\\
    &+\eta \mu_{\max}(\mathbf{w}_{\mathrm{c}}^{(k),T-2}) \sum_{i=1}^C \|p_{v,m}^{i,(k)}- \sum_{b=1}^B l_b p_{b}^{i}\|\\
    \leq& (\theta_{v,m}^{(k)})^{T-1} \|\mathbf{w}_{v,m}^{(k),0}-\mathbf{w}_{\mathrm{c}}^{(k),0}\|+\eta \sum_{i=1}^C \|p_{v,m}^{i,(k)}- \sum_{b=1}^B l_b p_{b}^{i}\|\\
    &((\theta_{v,m}^{(k)})^{T-2}\mu_{\max}(\mathbf{w}_{\mathrm{c}}^{(k),0})+... 
    +\theta_{v,m}^{(k)}\mu_{\max}(\mathbf{w}_{\mathrm{c}}^{(k),T-3})\\
    &+\mu_{\max}(\mathbf{w}_{\mathrm{c}}^{(k),T-2}))\\
    = &  (\theta_{v,m}^{(k)})^{T-1} \|\mathbf{w}_{\mathrm{f}}^{(k),0}-\mathbf{w}_{\mathrm{c}}^{(k),0}\|+\eta \sum_{i=1}^C \|p_{v,m}^{i,(k)}- \sum_{b=1}^B l_b p_{b}^{i}\|\\
    &
    \sum_{j=1}^{T-1}(\theta_{v,m}^{(k)})^{(j-1)}
    \mu_{\max}(\mathbf{w}_{\mathrm{c}}^{(k),T-1-j}).
    \end{aligned}
\end{equation}

We assume that in each round, the initial model for Sense4FL is equivalent to the initial model for CML, which means
$\mathbf{w}_{\mathrm{f}}^{(k),0}=\mathbf{w}_{\mathrm{c}}^{(k),0}$.
Based on this, we can obtain 
\begin{equation}
    \begin{aligned}
   &\Vert\mathbf{w}_{\mathrm{f}}^{(k),T}-\mathbf{w}_{\mathrm{c}}^{(k),T}\Vert\\
    \leq&\eta\sum_{v=1}^{ V^{(k)}}\frac{a_{v}^{(k)} \rho_v^{(k)}}{\sum_{v=1}^{ V^{(k)}}a_{v}^{(k)} \rho_v^{(k)}}\sum_{m=1}^{M_v^{(k)}}\xi_{v,m}^{(k)}
    \sum_{i=1}^C \|p_{v,m}^{i,(k)}- \sum_{b=1}^B l_b p_{b}^{i}\| \\&
    \sum_{j=1}^{T-1}(\theta_{v,m}^{(k)})^j
    \mu_{\max}(\mathbf{w}_{\mathrm{c}}^{(k),T-1-j})+\eta \mu_{\max}(\mathbf{w}_{\mathrm{c}}^{(k),T-1})\\& \sum_{i=1}^C \|\sum_{v=1}^{ V^{(k)}}\frac{a_{v}^{(k)} \rho_v^{(k)}}{\sum_{v=1}^{ V^{(k)}}a_{v}^{(k)} \rho_v^{(k)}}\sum_{m=1}^{M_v^{(k)}}\xi_{v,m}^{(k)}p_{v,m}^{i,(k)}- \sum_{b=1}^B l_b p_{b}^{i}\|.
    \end{aligned}
\end{equation}

Thus, we have obtained the divergence between the Sense4FL model $\mathbf{w}_{\mathrm{f}}^{(k),T}$ and the CML model $\mathbf{w}_{\mathrm{c}}^{(k),T}$ at the $k$-th round. Next, we analyze the relationship between this divergence and the learning performance of FL~\cite{zhang2024coalitional}.




According to Assumption 2, when $\eta \leq \frac{1}{\beta}$, we have
\begin{equation}\label{eq:fr}
    \begin{aligned}
        &\mathcal{F}(\mathbf{w}_{\mathrm{c}}^{(k),t+1})-\mathcal{F}(\mathbf{w}_{\mathrm{c}}^{(k),t})\\
        \leq&
        \nabla \mathcal{F}(\mathbf{w}_{\mathrm{c}}^{(k),t})^T(\mathbf{w}_{\mathrm{c}}^{(k),t+1}-\mathbf{w}_{\mathrm{c}}^{(k),t})+\frac{\beta}{2}\Vert\mathbf{w}_{\mathrm{c}}^{(k),t+1}-\mathbf{w}_{\mathrm{c}}^{(k),t}\Vert^2\\
        \leq&-\eta\nabla \mathcal{F}(\mathbf{w}_{\mathrm{c}}^{(k),t})^T\nabla \mathcal{F}(\mathbf{w}_{\mathrm{c}}^{(k),t})+\frac{\beta\eta^2}{2}\|\nabla \mathcal{F}(\mathbf{w}_{\mathrm{c}}^{(k),t})\|^2\\
        =&
        -\eta(1-\frac{\beta\eta}{2})\|\nabla \mathcal{F}(\mathbf{w}_{\mathrm{c}}^{(k),t})\|^2,
    \end{aligned}
\end{equation}
where 
\begin{equation}
    \nabla \mathcal{F}(\mathbf{w}_{\mathrm{c}}^{(k),t}) = \eta \sum_{b=1}^B l_b \sum_{i=1}^C p_{b}^{i} \nabla_\mathbf{w} \mathbb{E}_{\mathbf{x}_b^i}[f(\mathbf{w}_{\mathrm{c}}^{(k),t},\mathbf{x}_b^i)] 
\end{equation}
is the gradient of $ \mathcal{F}(\mathbf{w}_{\mathrm{c}}^{(k),t})$.
Since $\gamma^{(k),t+1}\triangleq \mathcal{F}(\mathbf{w}_{\mathrm{c}}^{(k),t+1})-\mathcal{F}(\mathbf{w}^\star)$ and $\gamma^{(k),t}\triangleq \mathcal{F}(\mathbf{w}_{\mathrm{c}}^{(k),t})-\mathcal{F}(\mathbf{w}^\star)$, substituting these into \eqref{eq:fr}, we obtain 
\begin{equation}
    \begin{aligned}\label{eq:diff}
        \gamma^{(k),t+1}-\gamma^{(k),t}\leq -\eta(1-\frac{\beta\eta}{2})\|\nabla \mathcal{F}(\mathbf{w}_{\mathrm{c}}^{(k),t})\|^2.
    \end{aligned}
\end{equation}

Assumption 1 gives
\begin{equation}
    \begin{aligned}
        \gamma^{(k),t} 
        =& \mathcal{F}(\mathbf{w}_{\mathrm{c}}^{(k),t})-\mathcal{F}(\mathbf{w}^\star)\\
        \leq&\nabla\mathcal{F}(\mathbf{w}_{\mathrm{c}}^{(k),t})^T(\mathbf{w}_{\mathrm{c}}^{(k),t}-\mathbf{w}^\star)\\
        \leq& \|\nabla\mathcal{F}(\mathbf{w}_{\mathrm{c}}^{(k),t})\|\|\mathbf{w}_{\mathrm{c}}^{(k),t}-\mathbf{w}^\star\|,
    \end{aligned}
\end{equation}
which can be transformed into
\begin{equation}
    \begin{aligned}\label{eq:in_gamma}
        \frac{\gamma^{(k),t} }{\|\mathbf{w}_{\mathrm{c}}^{(k),t}-\mathbf{w}^\star\|}\leq \|\nabla\mathcal{F}(\mathbf{w}_{\mathrm{c}}^{(k),t})\|.
    \end{aligned}
\end{equation}

Hence, by combining equation \eqref{eq:diff} and \eqref{eq:in_gamma}, we have 
\begin{equation}\label{eq:gamma-gamma}
    \begin{aligned}
        \gamma^{(k),t+1}-\gamma^{(k),t}\leq
        &-\eta(1-\frac{\beta\eta}{2})\frac{(\gamma^{(k),t})^2 }{\|\mathbf{w}_{\mathrm{c}}^{(k),t}-\mathbf{w}^\star\|^2}.
    \end{aligned}
\end{equation}

To proceed further, we need the following lemma. 
\begin{lemma} \label{lemma:non_increase}
    For $t=0, 1,2,...,T$ and $k=1,2,...,K$, when $\eta\leq\frac{1}{\beta}$, $\|\mathbf{w}_{\mathrm{c}}^{(k),t}-\mathbf{w}^\star\|$ does not increase with $t$~\cite{wang2019adaptive}.
\end{lemma}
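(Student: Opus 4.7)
The plan is to prove the lemma by induction on $t$, with the inductive step reducing to showing that one gradient-descent update cannot increase the distance to $\mathbf{w}^\star$. Since $\mathbf{w}_{\mathrm{c}}^{(k),t+1} = \mathbf{w}_{\mathrm{c}}^{(k),t} - \eta \nabla \mathcal{F}(\mathbf{w}_{\mathrm{c}}^{(k),t})$ by the centralized update rule, the natural route is to expand
\begin{equation}
\|\mathbf{w}_{\mathrm{c}}^{(k),t+1}-\mathbf{w}^\star\|^2 = \|\mathbf{w}_{\mathrm{c}}^{(k),t}-\mathbf{w}^\star\|^2 - 2\eta \nabla \mathcal{F}(\mathbf{w}_{\mathrm{c}}^{(k),t})^T (\mathbf{w}_{\mathrm{c}}^{(k),t}-\mathbf{w}^\star) + \eta^2 \|\nabla \mathcal{F}(\mathbf{w}_{\mathrm{c}}^{(k),t})\|^2,
\end{equation}
and then bound the cross term from below using the convexity-plus-smoothness structure.

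The key tool I would invoke is the co-coercivity of $\nabla \mathcal{F}$, which follows from combining Assumption 1 (convexity) with Assumption 2 ($\beta$-smoothness): for any $\mathbf{w},\mathbf{w}'$,
\begin{equation}
(\nabla \mathcal{F}(\mathbf{w})-\nabla \mathcal{F}(\mathbf{w}'))^T(\mathbf{w}-\mathbf{w}') \geq \tfrac{1}{\beta}\|\nabla \mathcal{F}(\mathbf{w})-\nabla \mathcal{F}(\mathbf{w}')\|^2.
\end{equation}
Specializing this to $\mathbf{w}=\mathbf{w}_{\mathrm{c}}^{(k),t}$ and $\mathbf{w}'=\mathbf{w}^\star$, and using $\nabla \mathcal{F}(\mathbf{w}^\star)=0$ (optimality), I obtain $\nabla \mathcal{F}(\mathbf{w}_{\mathrm{c}}^{(k),t})^T(\mathbf{w}_{\mathrm{c}}^{(k),t}-\mathbf{w}^\star) \geq \tfrac{1}{\beta}\|\nabla \mathcal{F}(\mathbf{w}_{\mathrm{c}}^{(k),t})\|^2$.

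Substituting this lower bound back into the expansion yields
\begin{equation}
\|\mathbf{w}_{\mathrm{c}}^{(k),t+1}-\mathbf{w}^\star\|^2 \leq \|\mathbf{w}_{\mathrm{c}}^{(k),t}-\mathbf{w}^\star\|^2 + \eta\!\left(\eta-\tfrac{2}{\beta}\right)\|\nabla \mathcal{F}(\mathbf{w}_{\mathrm{c}}^{(k),t})\|^2.
\end{equation}
Because the hypothesis $\eta \leq 1/\beta$ implies $\eta - 2/\beta \leq -1/\beta < 0$, the extra term is non-positive and monotonicity $\|\mathbf{w}_{\mathrm{c}}^{(k),t+1}-\mathbf{w}^\star\| \leq \|\mathbf{w}_{\mathrm{c}}^{(k),t}-\mathbf{w}^\star\|$ follows, completing the induction.

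I do not anticipate a real obstacle here: the argument is standard for gradient descent on smooth convex objectives, and Assumptions 1--2 are precisely what is needed. The only mild subtlety is justifying the co-coercivity inequality itself (if the paper wishes to be self-contained), which can be handled by introducing the auxiliary function $\phi(\mathbf{w}) = \mathcal{F}(\mathbf{w}) - \nabla \mathcal{F}(\mathbf{w}')^T \mathbf{w}$, observing that $\phi$ is convex and $\beta$-smooth with minimizer $\mathbf{w}'$, and applying the standard descent lemma $\phi(\mathbf{w}') \leq \phi(\mathbf{w} - \tfrac{1}{\beta}\nabla \phi(\mathbf{w}))$ to extract the desired inequality. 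Assumptions 3 and 4 play no role in this lemma and can simply be left aside.
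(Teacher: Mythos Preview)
Your proposal is correct and follows essentially the same route as the paper: expand $\|\mathbf{w}_{\mathrm{c}}^{(k),t+1}-\mathbf{w}^\star\|^2$ via the update rule, bound the cross term using convexity plus $\beta$-smoothness, and conclude that the remaining term is non-positive when $\eta\le 1/\beta$. The only cosmetic difference is that the paper invokes the function-value inequality $\mathcal{F}(\mathbf{w})-\mathcal{F}(\mathbf{w}^\star)\le \nabla\mathcal{F}(\mathbf{w})^T(\mathbf{w}-\mathbf{w}^\star)-\tfrac{1}{2\beta}\|\nabla\mathcal{F}(\mathbf{w})\|^2$ (Bubeck, Lemma~3.5) together with $\gamma^{(k),t}>0$, yielding the slightly looser coefficient $\eta(\eta-\tfrac{1}{\beta})$, whereas your direct use of co-coercivity gives $\eta(\eta-\tfrac{2}{\beta})$; both vanish under the stated step-size condition, so the arguments are interchangeable.
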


\begin{proof}
    According to \eqref{eq:wc_update}, we have
\begin{equation}\label{eq:wc-w^star}
    \begin{aligned}
    &\|\mathbf{w}_{\mathrm{c}}^{(k),t+1}-\mathbf{w}^\star\|^2\\
    = & \|\mathbf{w}_{\mathrm{c}}^{(k),t}-\eta \nabla \mathcal{F}(\mathbf{w}_{\mathrm{c}}^{(k),t})-\mathbf{w}^\star\|^2\\
    = & \|\mathbf{w}_{\mathrm{c}}^{(k),t}-\mathbf{w}^\star\|^2-2\eta\nabla \mathcal{F}(\mathbf{w}_{\mathrm{c}}^{(k),t})^T(\mathbf{w}_{\mathrm{c}}^{(k),t}-\mathbf{w}^\star) \\
    &+ \eta^2 \|\nabla \mathcal{F}(\mathbf{w}_{\mathrm{c}}^{(k),t})\|^2.
    \end{aligned}
\end{equation}

Since $\mathcal{F}(\cdot)$ is $\beta$-smooth, according to Lemma 3.14 in~\cite{bubeck2015convex}, we have $\gamma^{(k),t} > 0$ for any $k$ and $t$. Additionally, according to
Lemma 3.5 in~\cite{bubeck2015convex}, we obtain the following inequality
\begin{equation}
    \begin{aligned}
        0<\gamma^{(k),t}\leq \nabla \mathcal{F}(\mathbf{w}_{\mathrm{c}}^{(k),t})^T(\mathbf{w}_{\mathrm{c}}^{(k),t}-\mathbf{w}^\star) -\frac{\|\nabla \mathcal{F}(\mathbf{w}_{\mathrm{c}}^{(k),t})\|^2}{2\beta},
    \end{aligned}
\end{equation}
which can be transformed into
\begin{equation}\label{eq:g_wc}
    \begin{aligned}
        -\nabla \mathcal{F}(\mathbf{w}_{\mathrm{c}}^{(k),t})^T(\mathbf{w}_{\mathrm{c}}^{(k),t}-\mathbf{w}^\star) < - \frac{\|\nabla \mathcal{F}(\mathbf{w}_{\mathrm{c}}^{(k),t})\|^2}{2\beta}.
    \end{aligned}
\end{equation}

By combining \eqref{eq:wc-w^star} and \eqref{eq:g_wc}, we have
\begin{equation}
    \begin{aligned}
        &\|\mathbf{w}_{\mathrm{c}}^{(k),t+1}-\mathbf{w}^\star\|^2\\
    < & \|\mathbf{w}_{\mathrm{c}}^{(k),t}-\mathbf{w}^\star\|^2-\frac{\eta}{\beta}\| \nabla \mathcal{F}(\mathbf{w}_{\mathrm{c}}^{(k),t})\|^2+\eta^2\|\nabla \mathcal{F}(\mathbf{w}_{\mathrm{c}}^{(k),t})\|^2\\
    = &\|\mathbf{w}_{\mathrm{c}}^{(k),t}-\mathbf{w}^\star\|^2-\eta(\frac{1}{\beta}-\eta)\| \nabla \mathcal{F}(\mathbf{w}_{\mathrm{c}}^{(k),t})\|^2.
    \end{aligned}
\end{equation}
    
When $\eta\leq\frac{1}{\beta}$, we have
\begin{equation}
      \|\mathbf{w}_{\mathrm{c}}^{(k),t+1}-\mathbf{w}^\star\|^2  \leq\|\mathbf{w}_{\mathrm{c}}^{(k),t}-\mathbf{w}^\star\|^2.
\end{equation}

This completes the proof of Lemma \ref{lemma:non_increase}.
\end{proof}

By defining $\phi\triangleq\min_{k}\frac{1}{\|\mathbf{w}_{\mathrm{c}}^{(k),1}-\mathbf{w}^{\star}\|^2}$, 
we have
\begin{equation}
    \phi \leq \min_{k}\frac{1}{\|\mathbf{w}_{\mathrm{c}}^{(k),t}-\mathbf{w}^{\star}\|^2}.
\end{equation}

Therefore,
the inequality \eqref{eq:gamma-gamma} follows that
\begin{equation}
    \begin{aligned}\label{eq:new_gamma-gamma}
        \gamma^{(k),t+1}-\gamma^{(k),t}\leq
        -\phi\eta(1-\frac{\beta\eta}{2})(\gamma^{(k),t})^2.
    \end{aligned}
\end{equation}

By dividing both sides by $\gamma^{(k),t+1}\gamma^{(k),t}$ in \eqref{eq:new_gamma-gamma}, we can get
\begin{equation}
    \frac{1}{\gamma^{(k),t+1}}-\frac{1}{\gamma^{(k),t}} \geq \phi\eta(1-\frac{\beta\eta}{2})\frac{\gamma^{(k),t}}{\gamma^{(k),t+1}}\geq \phi\eta(1-\frac{\beta\eta}{2}).
\end{equation}

Based on this, 
for the $k$-th round, we have
\begin{equation}
    \begin{aligned}
        \frac{1}{\gamma^{(k),T}}-\frac{1}{\gamma^{(k),0}} &= \sum_{t=0}^{T-1}(\frac{1}{\gamma^{(k),t+1}}-\frac{1}{\gamma^{(k),t}})\\
       & \geq T\phi\eta(1-\frac{\beta\eta}{2}).
    \end{aligned}
\end{equation}

Summing up the above for all FL rounds yields
\begin{equation}\label{eq:all_FL}
    \sum_{k=1}^{K}\frac{1}{\gamma^{(k),T}}-\frac{1}{\gamma^{(k),0}} \geq KT\phi\eta(1-\frac{\beta\eta}{2}).
\end{equation}

Rearranging the left-hand side of this inequality yields that 
\begin{equation}
    \begin{aligned}\label{eq:1/gamma}
        &\frac{1}{\gamma^{(K),T}}-\frac{1}{\gamma^{(K),0}}+\frac{1}{\gamma^{(K-1),T}}-\frac{1}{\gamma^{(K-1),0}}+...+\\
        &\frac{1}{\gamma^{(2),T}}-\frac{1}{\gamma^{(2),0}}+\frac{1}{\gamma^{(1),T}}-\frac{1}{\gamma^{(1),0}}\\
        =
        &\frac{1}{\gamma^{(k),T}}-\frac{1}{\gamma^{(1),0}}-\sum_{k=1}^{K-1}(\frac{1}{\gamma^{(k+1),0}}-\frac{1}{\gamma^{(k),T}})\\
        \geq& KT\phi\eta(1-\frac{\beta\eta}{2}).
    \end{aligned}
\end{equation}

Hence, we can get
\begin{equation}
    \begin{aligned}\label{eq:1/gamma}
        &\frac{1}{\gamma^{(k),T}}-\frac{1}{\gamma^{(1),0}}\\
        \geq &
        KT\phi\eta(1-\frac{\beta\eta}{2})+
        \sum_{k=1}^{K-1}(\frac{1}{\gamma^{(k+1),0}}-\frac{1}{\gamma^{(k),T}}).
    \end{aligned}
\end{equation}





According to \eqref{eq:fr}, $\mathcal{F}(\mathbf{w}_{\mathrm{c}}^{(k),t}) \geq \mathcal{F}(\mathbf{w}_{\mathrm{c}}^{(k),t+1})$, $\forall t$. 
Applying the condition (3) in Theorem \ref{theo:FL}, we have $\gamma^{(k),t}= \mathcal{F}(\mathbf{w}_{\mathrm{c}}^{(k),t})-\mathcal{F}(\mathbf{w}^\star)\geq \epsilon$ for all $t$ and $k$. Hence,
\begin{equation}
    \gamma^{(k),T}\gamma^{(k+1),0} \geq \epsilon^2.
\end{equation}

According to Assumption 3, we obtain
\begin{equation}\label{eq:1/gamma_k}
    \begin{aligned}
        \frac{1}{\gamma^{(k+1),0}}-\frac{1}{\gamma^{(k),T}}
        =&\frac{\gamma^{(k),T}-\gamma^{(k+1),0}}{\gamma^{(k),T}\gamma^{(k+1),0}}\\
        =&\frac{\mathcal{F}(\mathbf{w}_{\mathrm{c}}^{(k),T})-\mathcal{F}(\mathbf{w}_{\mathrm{c}}^{(k+1),0})}{\gamma^{(k),T}\gamma^{(k+1),0}}\\
        =&-\frac{\mathcal{F}(\mathbf{w}_{\mathrm{f}}^{(k),T})-\mathcal{F}(\mathbf{w}_{\mathrm{c}}^{(k),T})}{\gamma^{(k),T}\gamma^{(k+1),0}}\\
        {\geq}&-\frac{L\|\mathbf{w}_{\mathrm{f}}^{(k),T}-\mathbf{w}_{\mathrm{c}}^{(k),T}\|}{\gamma^{(k),T}\gamma^{(k+1),0}}\\
        {\geq}&-\frac{L}{\epsilon^2}\|\mathbf{w}_{\mathrm{f}}^{(k),T}-\mathbf{w}_{\mathrm{c}}^{(k),T}\|.
    \end{aligned}
\end{equation}

From \eqref{eq:1/gamma} and \eqref{eq:1/gamma_k}, it follows that
\begin{equation}\label{eq:1/gammak-1}
    \begin{aligned}
        &\frac{1}{\gamma^{(k),T}}-\frac{1}{\gamma^{(1),0}}\\
        \geq &
        KT\phi\eta(1-\frac{\beta\eta}{2})-\frac{L}{\epsilon^2}
        \sum_{k=1}^{K-1}\|\mathbf{w}_{\mathrm{f}}^{(k),T}-\mathbf{w}_{\mathrm{c}}^{(k),T}\|.
    \end{aligned}
\end{equation}

Applying the condition (4) in Theorem \ref{theo:FL}, we get
\begin{equation}
    \begin{aligned}
        -\frac{1}{(\mathcal{F}(\mathbf{w}_\mathrm{f}^{(K),T}) -\mathcal{F}(\mathbf{w}^\star))\gamma^{(K),T}}\geq -\frac{1}{\epsilon^2}.
    \end{aligned}
\end{equation}

Thus, we have 
\begin{equation}
    \begin{aligned}\label{eq:f-fstar}
        &\frac{1}{\mathcal{F}(\mathbf{w}_\mathrm{f}^{(K),T}) -\mathcal{F}(\mathbf{w}^\star)}-\frac{1}{\gamma^{{K},T}}\\
        =&-\frac{\mathcal{F}(\mathbf{w}_\mathrm{f}^{(K),T}) -\mathcal{F}(\mathbf{w}_\mathrm{c}^{(K),T})}{(\mathcal{F}(\mathbf{w}_\mathrm{f}^{(K),T}) -\mathcal{F}(\mathbf{w}^\star))\gamma^{{K},T}}\\
        \overset{(b)}{\geq}&-\frac{L\|\mathbf{w}_\mathrm{f}^{(K),T} -\mathbf{w}_\mathrm{c}^{(K),T}\|}{(\mathcal{F}(\mathbf{w}_\mathrm{f}^{(K),T}) -\mathcal{F}(\mathbf{w}^\star))\gamma^{{K},T}}\\
        {\geq}&-\frac{L}{\epsilon^2}\|\mathbf{w}_\mathrm{f}^{(K),T} -\mathbf{w}_\mathrm{c}^{(K),T}\|,
    \end{aligned}
\end{equation}
where inequality $(b)$ holds because of Assumption 3.

Combining \eqref{eq:1/gammak-1} and \eqref{eq:f-fstar}, we obtain 
\begin{equation}
    \begin{aligned}
        &\frac{1}{\mathcal{F}(\mathbf{w}_\mathrm{f}^{(K),T}) -\mathcal{F}(\mathbf{w}^\star)}-\frac{1}{\gamma^{(1),0}}\\
        \geq &KT\phi\eta(1-\frac{\beta\eta}{2})-\frac{L}{\epsilon^2}
        \sum_{k=1}^{K}\|\mathbf{w}_{\mathrm{f}}^{(k),T}-\mathbf{w}_{\mathrm{c}}^{(k),T}\|.
    \end{aligned}
\end{equation}


Define $U \triangleq \max_{k}\max_{j}\mu_{\max}(\mathbf{w}_{\mathrm{c}}^{(k),j})$, $\overline{\xi_{v,m}^{(k)}} \triangleq\frac{q_{v,m}^{(k)}p_{v,m}^{(k),\text{rcv}}}{\sum_{m=1}^{M_v^{(k)}}{q_{v,m}^{(k)}p_{v,m}^{(k),\text{rcv}}}}$, $\lambda_{\max}\triangleq \max_i\lambda_i$,  $\delta \triangleq \sum_{j=1}^{T-1}(1+\eta \lambda_{\max})^{j}$. If $\phi KT(1-\frac{\beta\eta}{2})-\frac{L}{\epsilon^2}U\sum_{k=1}^{K}\Omega^{(k)}> 0$, we arrive at 
\begin{equation}
    \begin{aligned}
        &\mathbb{E}[\mathcal{F}(\mathbf{w}_\mathrm{f}^{(K),T})] -\mathcal{F}(\mathbf{w}^\star)\\
        \leq& \frac{1}{\eta(\phi KT(1-\frac{\beta\eta}{2})-\frac{L}{\epsilon^2}
         U\sum_{k=1}^{K}\Omega^{(k)})},
    \end{aligned}
\end{equation}
where 
\begin{equation}
    \begin{aligned}
    \Omega^{(k)}\triangleq &
        \delta\sum_{v=1}^{ V^{(k)}}\frac{a_{v}^{(k)} \rho_v^{(k)}}{\sum_{v=1}^{ V^{(k)}}a_{v}^{(k)} \rho_v^{(k)}}\sum_{m=1}^{M_v^{(k)}}\overline{\xi_{v,m}^{(k)}}
    \sum_{i=1}^C \|p_{v,m}^{i,(k)}- \sum_{b=1}^B l_b p_{b}^{i}\|\\
    &+\sum_{i=1}^C \|\sum_{v=1}^{ V^{(k)}}\frac{a_{v}^{(k)} \rho_v^{(k)}}{\sum_{v=1}^{ V^{(k)}}a_{v}^{(k)} \rho_v^{(k)}}\sum_{m=1}^{M_v^{(k)}}\overline{\xi_{v,m}^{(k)}}p_{v,m}^{i,(k)}- \sum_{b=1}^B l_b p_{b}^{i}\|.
    \end{aligned}
\end{equation}

\subsection{Proof of Theorem \ref{theorem:np_hard}}\label{sec: app2}
In a typical multiple-choice knapsack problem (MCKP), we assume that there are usually $J$ classes or groups $G_1,G_2,\ldots,G_J$, each containing $N_j$ items.
Each item $i$ in group $G_j$ has a profit value $v_{ij}$ and a weight $w_{ij}$. The goal is to select exactly one item from each group in a way that maximizes the total value while ensuring that the total weight does not exceed the capacity $C$. 

Consider a simplified version of our optimization problem, i.e., each vehicle has only one known trajectory (which can be obtained by the vehicle's active reporting).
In this simplified problem version, the capacity constraint refers to the maximum number of selected vehicles $S$. We have $V$ groups, each referring to one vehicle. Each group has $N_{v}$ items, belonging to the candidate data collection set $\{h_{v}(1),h_{v}(2),\ldots,h_{v}(N_{v})\}$.
By this definition, our problem can be stated as: given $V$ groups, each having $N_{v}$ items with different profit values and weights, select at most one item from each group to minimize the value of the weighted EMD under the capacity constraint $S$. The mathematical formulation is
\begin{subequations}
    \begin{align}
        \min_{x_{vn}}~~&f(x_{vn})\\  \text{s.t.}~~&\sum_{v=1}^{V}\sum_{n=1}^{N_v}x_{vn}= S,\\
        &\sum_{n=1}^{N_v}x_{vn}\leq 1,~ \forall v \in \{1, ..., {V}\},\\
        &x_{vn}\in\{0,1\},~\forall v \in \{1, ..., {V}\}, ~n \in \{1, ..., N_v\},
    \end{align}
\end{subequations}
where
\begin{equation}
    \begin{aligned}
        f(x_{vn})=&\delta \frac{\sum_{v=1}^{V}\sum_{n=1}^{N_v}x_{vn}\rho_{vn}\sum_{i=1}^C \|p_{vn}^{i}- \sum_{b=1}^B l_b p_{b}^{i}\|}{\sum_{v=1}^{V}\sum_{n=1}^{N_v}x_{vn} \rho_{vn}}
    \\
    &+ \sum_{i=1}^C \| \frac{\sum_{v=1}^{ V}\sum_{n=1}^{N_v}x_{vn}\rho_{vn}p_{vn}^{i}}{\sum_{v=1}^{V}\sum_{n=1}^{N_v}x_{vn} \rho_{vn}}- \sum_{b=1}^B l_b p_{b}^{i}\|,
    \end{aligned}
\end{equation}
with $p_{vn}^{i} = \frac{\sum_{b \in h_{v}(n)}Q_{b} p^{i}_b}{\sum_{b \in h_{v}(n)} Q_b}$ being the data distribution that vehicle $v$ chooses the $n$-th data collection scheme and $\rho_{vn} = \sum_{b\in h_{v}(n)}l_{b}$ being the weighting factor determined by data collection scheme.

We conclude that the simplified version of the optimization problem is in the form of an MCKP, which is widely known to be NP-hard.
Since the special case is already NP-hard, our problem is NP-hard.  

\end{document}